\documentclass[11pt]{article}
\def\bc{{\mathbf{c}}}
\def\bg{{\mathbf{g}}}

\def\bR{{\mathbb{R}}}

\def\calC{{\mathcal C}}

\def\calI{{\mathcal I}}

\def\calO{{\mathcal O}}
\def\calU{{\mathcal U}}

\def\calX{{\mathcal X}}

\def\hY{{\hat Y}}

\def\1{{\bf 1}}

\usepackage[margin=1in]{geometry}

\usepackage[T1]{fontenc}

\usepackage{amsmath,amsthm,amssymb,mathtools}
\usepackage{bm}
\usepackage{bbm}
\usepackage{upgreek}
\usepackage{optidef}
\usepackage{thm-restate}

\usepackage{float}
\usepackage{subcaption}
\usepackage{booktabs}
\usepackage{multirow}
\usepackage{makecell}
\usepackage[export]{adjustbox}

\usepackage{enumitem}
\setlist[enumerate]{leftmargin=2em}
\setlist[itemize]{leftmargin=1em}

\usepackage{listings}

\usepackage[dvipsnames]{xcolor}

\usepackage{comment}
\usepackage{footmisc}
\usepackage{nicefrac}
\usepackage{tcolorbox}
\usepackage{pifont}
\usepackage{fontawesome}
\usepackage{tocloft}

\usepackage{titlesec}
\titleformat{\paragraph}[runin]
  {\normalfont\normalsize\bfseries}
  {}
  {0pt}
  {}
\titlespacing*{\paragraph}
  {0pt}
  {0.5ex}
  {1em}

\usepackage[colorlinks]{hyperref}
\definecolor{bleudefrance}{rgb}{0.19, 0.55, 0.91}
\definecolor{pastelblue}{rgb}{0.48, 0.58, 0.81}
\definecolor{oxfordblue}{rgb}{0.0, 0.13, 0.28}
\definecolor{lavender}{rgb}{0.75, 0.58, 0.89}

\hypersetup{
  colorlinks = true,
  urlcolor   = bleudefrance,
  linkcolor  = magenta,
  citecolor  = pastelblue
}

\usepackage{algorithm}
\usepackage{algorithmic}

\usepackage[numbers,sort&compress]{natbib}
\usepackage[capitalize,sort&compress]{cleveref}

\newtheorem{lemma}{Lemma}[section]

\newcommand{\greencheck}{{\color{green}\checkmark}}
\newcommand{\redx}{{\color{red}\ding{55}}}

\title{{\bf Parameter-Free and Group Conditional\\ Online Conformal Prediction}}
\author{
    {\small Beepul~Bharti$^{1}$\footnote{Corresponding author.}}\\
    {\small bbharti1@jhu.edu}
    \and
    {\small Ambar Pal$^{2}$\footnote{This work is not related to AP's position at Amazon.}}\\
    {\small ambarpal@amazon.com}
    \and
    {\small Jacopo Teneggi$^{1}$}\\
    {\small jtenegg1@jhu.edu}
    \and
    {\small Jeremias~Sulam$^{1}$}\\
    {\small jsulam1@jhu.edu}
}
\date{
\small{$^1$Johns Hopkins University}\\
\small{$^2$Amazon Responsible AI}\\
\vspace{1em}
\today
}
\begin{document}
\maketitle
\vspace{-2em}
\begin{abstract}
Uncertainty quantification (UQ) is critical for the deployment of machine learning predictors in real-world scenarios where the data distribution may shift over time (i.e., data may not be exchangeable). Online conformal prediction (OCP) methods address this issue at the expense of either (i) \emph{group-wise} error control or (ii) \emph{learning-rate independent} implementation. Group-conditional coverage is essential for fairness across different collections of data points and for providing finer UQ guarantees. Parameter-free optimization is crucial for robustness to adversarial and unknown data shifts. We propose a parameter-free algorithm for group-conditional OCP and demonstrate that it achieves the best group-conditional coverage guarantees. We evaluate our algorithm on synthetic and real-world data, demonstrating that our method not only improves the reliability of existing parameter-free OCP methods but also provides prediction intervals that are comparable in size to well-tuned group-conditional approaches. By unifying group-conditional coverage with parameter-free online algorithms, our work lays a foundation for fair and robust uncertainty quantification in shifting environments.
\vspace{-1em}
\end{abstract}

\tableofcontents
\newpage

\section{Introduction}
\label{sec:intro}
Machine learning (ML) models are increasingly deployed in high-stakes domains, such as supporting patient assessment in hospitals \citep{hong2022state, el2025role}, informing loan decisions in banking \citep{bhatore2020machine}, and enabling real-time perception and control in autonomous vehicles \citep{grigorescu2020survey}. These models affect consequential decision-making, and it is critical that we can rigorously quantify the uncertainty in their predictions.

Conformal prediction (CP) is a versatile methodology for constructing prediction sets that contain the unknown label of a test point with finite-sample, distribution-free guarantees \citep{saunders1999transduction, vovk1999machine, papadopoulos2002inductive, vovk2005algorithmic, shafer2008tutorial,lei2014distribution, chernozhukov2021distributional, angelopoulos2023conformal}. More precisely, given a fixed predictor and a user-specified miscoverage level $\alpha \in (0,1)$, CP produces prediction sets that contain the true label with probability $1-\alpha$. This guarantee, however, only holds when the data is exchangeable, an assumption rarely satisfied in practice. In medical triage, for example, patient populations are affected by a number of factors that impact the incoming streams of patients, including seasonal events such as the flu \citep{anderer20252024} and large transportation accidents that can cause surges in patient admission \citep{rozenberg2017outcomes}. 

Online conformal prediction (OCP) methods \citep{gibbs2021adaptive, zaffran2022adaptive, bastani2022practical, angelopoulos2023conformalpid, gibbs2024conformal, bhatnagar2023improved, podkopaev2024adaptive, areces2025online, liu2026online} address this limitation by generating a sequence of prediction sets $(\calI_t)_{t \geq 1}$ that provide coverage for the labels of a potentially adversarial stream of data $((X_t, Y_t))_{t \geq 1} = (X_1, Y_1), (X_2, Y_2), \dots$.
Most OCP methods target a \emph{marginal} coverage guarantee: as the time horizon $T$ grows, the sets achieve a coverage rate close to $1-\alpha$, i.e. $\frac{1}{T}\sum^T_{t=1} {\bf 1} \left\{Y_t \in \calI_t\right\} \approx 1-\alpha$. However, this guarantee does not capture the heterogeneity of subpopulations in the data. Going back to the triage example, if the prevalence of a minority group is approximately $\alpha$, an OCP method could systematically fail to cover the minority group while satisfying the marginal guarantee. 

This lack of group-conditional coverage can lead to harmful decisions. Accordingly, previous works have proposed OCP algorithms with group guarantees \citep{bastani2022practical, ramalingam2025relationship}. However, they are not \emph{parameter-free}: their performance depends on the choice of a learning rate---which can be problematic in online adversarial settings, where little to no assumptions are made about the data stream. Since the data may be adversarial, one cannot rely on a validation set to tune the learning rate \citep{liu2026online}. Moreover, even if one commits to a ``good'' learning rate, the data could be modified arbitrarily at any point to render such choice inappropriate \citep{orabona2019modern}. While there has been a growing call for parameter-free OCP methods, existing methods only achieve marginal coverage \citep{zhang2024discounted,podkopaev2024adaptive,liu2026online}. Thus, it remains unclear how to design parameter-free OCP methods that provide both marginal and group-conditional coverage guarantees.

\paragraph{Contributions.} To address this gap, we develop a parameter-free algorithm for group-conditional OCP. Formally, consider \emph{grouping functions} $c_j(X) \in [0,1]$ that tell us whether (and to what degree) a sample $X_t$ belongs to group $j$. Following prior works \citep{ramalingam2025relationship}, we consider $k$ potentially intersecting groups, $c_1, \dots, c_k \in [0,1]$, and provide an algorithm that constructs a sequence of prediction sets $(\calI_t)_{t \geq 1}$ which provides online coverage \textit{conditionally} on every group, namely
\begin{align}
    \lim_{T\rightarrow \infty} \left| \frac{1}{T_j}\sum^T_{t=1} \1\left\{Y_t \in \calI_t\right\}c_j(X_t) - (1-\alpha) \right| = 0  &\quad\text{for all $j=1, 2, \dots,k$},
\end{align}
where $T_j$ is the number of times group $j$ appeared up to time $T$. 
Specifically, our contributions are:
\begin{enumerate}
\item We introduce the first parameter-free algorithm for group-conditional online conformal prediction, that we dub {\bf P}ortfolios for {\bf O}nline {\bf G}roup C{\bf o}nformal (POGO).
\item We establish rigorous coverage guarantees for POGO, which represent the best finite-time group-coverage results available. 
\item Through extensive experiments on both synthetic and real-world datasets, we show POGO consistently outperforms existing approaches by providing adaptive and small prediction intervals at the prescribed coverage. 
\end{enumerate}

\subsection{Related Work}
\label{sec:related_works}
Closest to our work are two lines of research drawing on classical and parameter-free online learning: marginal OCP and group-conditional coverage in stochastic settings. We discuss each of them.

\paragraph{Online conformal prediction.} \citet{gibbs2021adaptive} introduced Adaptive Conformal Inference (ACI) for marginal OCP, constructing prediction sets by maintaining a single parameter governing the sets' width and updating the parameter via Online Subgradient Descent (OSD) on the quantile loss. To address ACI's sensitivity to its step size, \citet{bastani2022practical} introduced MultiValid Predictor (MVP), which provides long-term coverage but cannot rapidly adapt to changes in the data sequence. Since these methods require manual tuning of the step size, subsequent approaches—including Aggregated ACI \citep{zaffran2022adaptive}, Strongly Adaptive OCP \citep{bhatnagar2023improved}, and Dynamically-Tuned ACI \citep{gibbs2021adaptive}—looked to to overcome this limitation by employing adaptive online learning methods to automatically adjust ACI's updates. A few works explore related settings. For example, \citet{angelopoulos2024online} introduced an algorithm that satisfies the adversarial guarantee in OCP as well as convergence guarantees in stochastic settings, along with an analysis of OSD with arbitrary step sizes. Building on this, \citet{areces2025online} developed algorithms that guarantee coverage for a broader class of stochastic processes, including those with temporal dependence. Complementary approaches recast OCP as a feedback-control problem using ideas from control theory \citep{angelopoulos2023conformalpid, yang2024bellman}. Despite these advances, all these methods require a choice of step sizes and sometimes other hyperparameters. To address this, \citet{zhang2024discounted} and \citet{podkopaev2024adaptive} used parameter-free online learning algorithms \citep{orabona2016coin, jun2017online, jun2017improved, orabona2021parameter} for OCP. Most recently, \citet{liu2026online} improved on \citet{podkopaev2024adaptive} by proposing Universal Portfolio OCP (UP-OCP), a parameter-free algorithm that utilizes ideas of portfolio optimization and provides the best known finite-time marginal coverage guarantee. The concepts introduced in \citet{liu2026online} serve as the foundation for our method, and we will discuss this more in Sec. \ref{sec:marginal_coverage}.

\paragraph{Group-conditional coverage.} The goal of providing group-conditional coverage, as opposed to just marginal, was introduced in the traditional CP setting with stochastic, i.i.d data. \citet{vovk2003mondrian} and \citet{romano2020malice} proposed algorithms with coverage guarantees for disjoint groups and, shortly after, \citet{foygel2021limits} provided a conservative method with guarantees for intersecting groups. Numerous works followed. \citet{jung2021moment, jung2023batch} developed methods that produce smaller prediction sets and provide coverage conditional on group membership and the threshold used to produce the prediction set \citep{jung2023batch}. \citet{gibbs2025conformal} provided more improvements, giving tighter finite-sample coverage, allowing for discrete outcomes, and providing richer conditional guarantees. These approaches provided the foundation for group-conditional algorithms for online settings. The method of \citet{gupta2022online} yields multivalid predictions in an online setting, including prediction sets with guarantees conditional on group membership and the prediction set itself, satisfying a calibration-like guarantee. This was improved on by \citet{bastani2022practical}. \citet{ramalingam2025relationship} generalized ACI to provide group-conditional guarantees by using parameterized prediction sets and showing that minimizing the quantile loss with a ``follow the regularized leader'' (FTRL) algorithm—which requires learning rates—achieves group coverage. Many of these works parameterize prediction sets as linear functions of group memberships, an idea our method employs as well.

\section{Problem Setting}

\label{sec:problem_setting}
We observe a stream of data $((X_t, Y_t))_{t \geq 1}$ where, at each time $t$, $X_t \in \calX \subset \bR^d$ is a $d$-dimensional feature vector, and $Y_t \in \bR$ is a univariate outcome. We make no assumptions on the data stream, which may be arbitrarily changing.
Let $f_t$ be a predictor that only depends on the data up to time $t-1$, and let $\hY_t = f_t(X_t)$ denote its prediction on $X_t$. Then, for a sequence of radii $(\tau_t)_{t \geq 1}$, $\tau_t \in \bR$, we define the prediction intervals\footnote{We will hide the dependence of $\calI_t(\tau_t)$ on $\tau_t$ when clear from context.} $\calI_t(\tau_t) = [\hY_t - \tau_t, \hY_t + \tau_t]$ with the convention that the interval $\calI_t$ is empty if $\tau_t < 0$.

\paragraph{Marginal OCP.} The objective of OCP is to construct prediction intervals that control the empirical coverage rate at level $1-\alpha \in (0,1)$ so that, as the time horizon $T$ grows, $\frac{1}{T} \sum_{t=1}^T \1\{Y_t \in \calI_t\} \approx 1-\alpha$. Concretely, let $S_t = \lvert Y_t - \hY_t \rvert$ be \emph{non-conformity scores}, such that a large $S_t$ indicates an inaccurate prediction of $Y_t$ by $f_t$. Then, noticing that $\mathbf{1} \{Y_t \in \calI_t\} = \mathbf{1} \{S_t \leq \tau_t\}$, the goal of (marginal) OCP becomes to learn a sequence of radii $(\tau_t)_{t \geq 1}$ that satisfies
\begin{align}
    \tag{OCP$_{\text{M}}$}
    \label{eq:marg_cover}
    \lim_{T\rightarrow \infty} \left|\frac{1}{T}\sum_{t=1}^T\mathbf{1}\left\{S_t \leq \tau_t \right\} - (1-\alpha) \right| = 0.
\end{align}

It should be noted that the condition in \eqref{eq:marg_cover} implies marginal coverage, which doesn't preclude the sequence of $\tau_t$ systematically miscovering $Y_t$ for inputs belonging to certain groups. This can lead to systematic under-coverage for some groups, raising fairness and safety concerns, which motivates the need for group-conditional OCP (G-OCP).

\paragraph{Group-conditional OCP.} Suppose the feature domain $\calX$ can be divided into $k$ potentially overlapping groups. In the hospital triage example, groups may refer to biological sex, age, insurance status, past medical history, or other risk factors. To account for uncertainty in group assignments, consider a collection $\calC = \{c_j: \calX \to [0,1] \mid j \in [k]\}$ of \emph{soft} indicator functions, such that $c_j(X)$ represents the likelihood of input $X$ belonging to group $j$, with $[k] \coloneqq \{1, \dots, k\}$. Then, $T_j = \sum^T_{t=1} c_j(X_t)$ is the soft count of samples that belong to group $j$ up to time $T$. G-OCP naturally extends the objective of marginal OCP, requiring that the learned sequence of radii $(\tau_t)_{t \geq 1}$ satisfies\footnote{We assume $T_j>0 ~\forall j$.}
\begin{align}
    \label{eq:cov_guarantee}
    \tag{OCP$_{\text{G}}$}
    \lim_{T\rightarrow \infty} \left|\frac{1}{T_j}\sum^T_{t=1}\mathbf{1}\{S_t \leq \tau_t \} c_j(X_t) - (1-\alpha) \right| = 0   &\quad\text{for all $j \in [k]$}.
\end{align}

In this work, we develop a parameter-free algorithm with the best known finite-time group-conditional coverage guarantee. We begin by providing a brief overview of a parameter-free approach for standard OCP \citep{liu2026online}, serving as a methodological basis for our method.

\section{Marginal Coverage via Portfolio Optimization}

\label{sec:marginal_coverage}
Similarly to conformal prediction \cite{angelopoulos2023conformal,shafer2008tutorial}, the marginal guarantee in \eqref{eq:marg_cover} can be achieved by learning the $(1-\alpha)$-quantile of the non-conformity scores $(S_t)_{t \geq 1}$ sequentially, i.e., as data is received from a stream. A common approach is to use quantile regression (via the \emph{pinball} loss) \citep{koenker2001quantile,gneiting2007strictly} of the residuals between the non-conformity scores and the radii, $S_t - \tau_t$ . That is, at each time $t$, one defines the loss $l_t$ to be
\begin{equation}
    l_t(\tau_t, S_t) = \max\{(1-\alpha)(S_t-\tau_t), \alpha(\tau_t - S_t)\},
\end{equation}
and uses classic online learning methods  \citep{cesa2006prediction,cesa2021online,orabona2019modern} to update $\tau_t$. Interestingly, as noted by \citet{gibbs2021adaptive,angelopoulos2025gradient}, the subgradients of $l_t$ (with respect to the first argument) quantify the miscoverage error. In particular, note that 
\begin{align}
    g_t(\tau_t) = \1\{S_t \leq \tau_t\} - (1-\alpha)
\end{align}
is a subgradient of $l_t$ at $\tau_t$. Then, the miscoverage rate after time $T$ can be expressed as
\begin{align}\label{eq:MisCov}
    \textsf{MisCov}_T =  \left|\frac{1}{T}\sum^T_{t=1}\mathbf{1}\{S_t \leq \tau_{t}\} - (1-\alpha)\right| = \frac{\left| \sum^T_{t=1} g_{t}(\tau_t) \right|}{T}.
\end{align}
Thus, ensuring that the sum of the subgradients grows slowly enough suffices to control miscoverage.

\subsection{A Parameter-Free Method for OCP via Universal Portfolio}
\label{sec:up-ocp}
\citet{liu2026online} proposed UP-OCP, a portfolio optimization-based algorithm for OCP. UP-OCP learns radii $(\tau_t)_{t \geq 1}$ such that the growth of $|\sum^T_{t=1} g_{t}(\tau_t)|$ is controlled, thus providing marginal coverage. Noting that the subgradients $g_t$ equal $\alpha$ (when $\tau_t$ provides coverage) or $\alpha - 1$ (when $\tau_t$ does not cover), UP-OCP reduces OCP to portfolio optimization. Namely, UP-OCP defines a market of two stocks which yield returns of $w_1 = 1 - (g_t/\alpha)$ and $w_2 = 1 + g_t/(1-\alpha)$, respectively. The first stock generates a very high return $w_t = 1/\alpha$ when $\tau_t$ does not cover while the second produces a modest return $w_2 = 1/(1-\alpha)$ when $\tau_t$ does cover.

These returns allow one to define a wealth process $(W_t)_{t\geq1}$
\[
W_t = W_{t-1}\bigl(\lambda_t\, w_1 + (1-\lambda_t)\, w_2\bigr), ~~ W_0 = 1
\]
where $\lambda_t \in [0,1]$ is the learnable ``portfolio'' that can depend on all information up to time $t-1$ (including $g_t$). The key result in \citet{liu2026online} is that learning a sequence of portfolios $(\lambda_t)_{t\geq1}$ that maximizes wealth ($W_t$) over time can directly yield a sequence of radii such that $\left| \sum^T_{t=1} g_t \right|$ grows slowly. Fortunately, the former can be achieved by well known online learning algorithms for portfolio optimization.

Specifically, UP-OCP employs Universal Portfolio (UP) \citep{cover1991universal,cover1996universal} to learn portfolios $(\lambda_t)_{t \geq 1}$ that maximize the logarithmic growth rate of the wealth and defines the radii for OCP as $\tau_t = \left(\frac{\lambda_t - \alpha}{\alpha(1-\alpha)}\right) W_{t-1}$. With these choices, and assuming the non-conformity scores satisfy a mild growth condition\footnote{\citet{liu2026online} assume a polynomial growth condition i.e. $S_t \leq Dt^q$ for $D > 0$ and $q\geq 0$.}, UP-OCP yields a bound on $\left|\sum_{t=1}^T g_t(\tau_t)\right|$, ensuring that $\textsf{MisCov}_T = \mathcal{O}\left((\ln T)/T + \sqrt{(\alpha \ln T)/T}\right)$. Notably, UP-OCP does not have any fixed parameter that needs to be set (such as a learning rate) unlike most the methods outlined in \cref{sec:related_works}; thus, it is \emph{parameter-free}. 

While UP-OCP only provides marginal coverage (and not group-conditional guarantees), it does demonstrate that \emph{casting OCP as a portfolio optimization problem leads to a OCP method that does not require learning rates}. In the next section, we will show how these ideas of portfolio optimization can be employed to provide group-conditional guarantees as well.

\section{Online Group Conditional Conformal via Portfolio Optimization}

\label{sec:g_ocp_methods}
We now present POGO, a parameter free algorithm that provides the group coverage guarantee in \eqref{eq:cov_guarantee}. POGO relies on ideas of portfolio optimization and wealth maximization, similar to those employed by UP-OCP, but brings these to bear in group conditional settings. 
 
In standard OCP, there is a \emph{single} coverage objective \eqref{eq:marg_cover}, thus the radii $\tau_t$ are scalars in $\bR$ updated over time. However, in G-OCP, the radii we desire must satisfy $k$ coverage guarantees, one for each group \eqref{eq:cov_guarantee}. Naturally, this suggests parameterizing $\tau_t$ with $k$ parameters and learning them sequentially. We let $\tau_t$ depend explicitly on the group membership of $X_t$. Formally, consider all the grouping functions $c_1, \dots, c_{k}$ and define the vector $\bc = (c_1, \dots, c_{k})^T \in [0,1]^{k}$. We parameterize the radii $\tau_t(X_t)$ as a linear function of $\bc$:
\begin{align}
    \label{eq:linear_model}
    \tau_t(X_t) = \langle \bm{\theta}_t, \bc(X_t)\rangle, ~~ \bm{\theta}_t \in \bR^k.
\end{align}

As a result, the goal will now be to learn a sequence of parameters $(\bm{\theta}_t)_{t\geq1}$ that ensures the radii provide group coverage. Such linear parametrization has been adopted in many works \citep{jung2023batch, ramalingam2025relationship, areces2025online} due to its simplicity and utility: a
sample $X_t$ with different group memberships can be assigned radii of different sizes, as determined by the entries of \(\bm{\theta}_t\). Importantly, we always obtain a single radius for each $X_t$ (as opposed to multiple radii for each $X_t$ corresponding to multiple groups--see the discussion below in Sec. \ref{sec:naive_method}).

Furthermore, this linear model provides a direct connection between subgradients and miscoverage. Denoting $\bc_t = \bc(X_t)$, let $l_t(\bm{\theta_t}, S_t) \coloneqq \max\{(1-\alpha)(S_t-\langle \bm{\theta_t}, \bc_t \rangle), \alpha(\langle \bm{\theta_t}, \bc_t \rangle - S_t)\}$ be the quantile loss at $\bm{\theta}_t$ (i.e., the loss incurred by the radius $\tau_t(X_t)$ at time $t$). Then, a subgradient vector $\bg_t \in \bR^k$ of $l_t$ at $\bm{\theta}_t$ is 
\begin{align}
    \bg_t = [\mathbf{1}\{S_t \leq \langle \bm{\theta}_t, \bc_t\rangle\} - (1-\alpha)] ~\bc_t.
\end{align}
Thus, analogous to marginal OCP, these subgradients quantify the group miscoverage errors. Namely, the $j^{\text{th}}$ entry of the vector $\left|\sum^T_{t=1}\bg_t\right|$ quantifies the miscoverage error of group $j$: 
\begin{align}
    \textsf{MisCov}_T(j) \coloneqq \left|\frac{1}{T_{j}}{\sum^T_{t=1}\mathbf{1}\{S_t \leq \langle \bm{\theta}_t, \bc_t\rangle\}} c_{t,j} - (1-\alpha)\right| &= \frac{\left|\sum^T_{t=1}g_{t,j}\right|}{T_{j}}.
\end{align}

Therefore, by learning $\bm{\theta}_t$ so that each entry of $\left| \sum^T_{t=1}\bg_t \right|$ grows slowly, we can provide group coverage. What remains is a method for optimizing this quantity. Drawing from the discussion in Sec. \ref{sec:up-ocp}, we employ parameter-free online learning tools. Recall that UP-OCP learns scalar radii that ensure a slowly growing sum of subgradients by considering a wealth process and performing portfolio optimization. POGO generalizes UP-OCP by constructing $k$ wealth processes and performing portfolio optimization on each one to learn the $k$-dimensional parameters $(\bm \theta_t)_{t\geq1}$ that provide group coverage.

\subsection{Group Online Conformal via POGO}
\label{sec:cw_method}
POGO learns the parameters $(\bm \theta_t)_{t\geq1}$ which yield slowly growing $\left|\sum^T_{t=1}\bg_t \right|$ by controlling each entry individually, and in parallel. Formally, POGO defines two stocks for every entry $g_{t,j}$ of the subgradient $\bg_t$:
\begin{align}
    w_{t,j,1} = 1 - \frac{g_{t,j}}{\alpha} \quad \text{and} \quad  w_{t,j,2} = 1 + \frac{g_{t,j}}{1-\alpha}.
\end{align}
Here, for every group $j \in [k]$, $w_{t,j,1}$ and $w_{t,j,2}$ are the returns of two stocks: when $X_t$ belongs in group $j$ ($c_j(X_t) \approx 1)$ and $\tau_t(X_t)$ miscovers, $g_{t,j} = \alpha-1$ and $w_{t,j,1}$ yields a very high return. When $\tau_t(X_t)$ covers, $g_{t,j} = \alpha$ and $w_{t,j,2}$ provides a modest return. Lastly, if group $j$ is not present, i.e., $c_j(X_t)\approx 0$, then $g_{t,j}\approx 0$. Consequently both returns are approximately $1$, so $\lambda_{t,j} w_{t,j,1} + (1-\lambda_{t,j}) w_{t,j,2} \approx 1$, and the wealth remains approximately unchanged, $W_{t,j}\approx W_{t-1,j}$.

POGO employs UP to learn the portfolios $(\lambda_{t,j})_{t\geq1}$ that maximize the growth of the $k$ wealth processes
\begin{align}
    W_{t,j} = W_{t-1,j}\left(\lambda_{t,j} w_{t,j,1} + (1-\lambda_{t,j})w_{t,j,2}\right), ~~ W_{0,j} = 1/k.
\end{align}
The parameters $(\bm{\theta_t})_{t \geq 1}$ we require for G-OCP are defined with entries $\theta_{t,j} = W_{t-1, j}\left(\frac{\lambda_{t,j}-\alpha}{\alpha(1-\alpha)}\right)$, for all $j \in [k]$. By learning ${\bm \theta}_t$ via maximizing $k$ wealth processes, POGO ensures that each entry of $|\sum^T_{t=1}\bg_t|$ remains small, directly controlling $\textsf{MisCov}_T(j)$ for all $j \in [k]$. We summarize the complete algorithm in \cref{algo:coord_betting}, and \cref{theorem:CW_group_cov} presents its coverage guarantee.

\begin{figure}[t]
\begin{algorithm}[H]
    \caption{\label{algo:coord_betting} Portfolios for Online Group Conformal: POGO}
    \raggedright\textbf{Input:} Target miscoverage level $\alpha$.\\
    \raggedright\textbf{Initialize:} Wealths $W_{0,j} \gets 1/k$ for all $j \in [k]$ and Jeffrey prior $\mu(\bar  \lambda) = 1/\sqrt{\bar \lambda(1- \bar \lambda)}$
    \vspace{-1em}
    \begin{algorithmic}[1]
        \FOR{$t = 1, \dots$}
            \FOR{$j = 1, \dots, k$}
                \STATE Define $w \colon \mathbb{R} \to \mathbb{R}$ as $w(\bar \lambda) = \prod^{t-1}_{i=1} \bar \lambda \left(1 - \frac{g_{i,j}}{\alpha}\right) + (1-\bar \lambda)\left(1 + \frac{g_{i,j}}{1-\alpha}\right)$
                \STATE Update $\lambda_{t,j}$ with Universal Portfolio: $\lambda_{t,j} \gets \frac{\int^1_{0}\bar \lambda w(\bar \lambda)d\mu(\bar \lambda)}{\int^1_{0}w(\bar \lambda)d\mu(\bar \lambda)}$
                \STATE Update $j^{\rm th}$ entry of $\bm{\theta}_t$: $\theta_{t,j} \gets W_{t-1,j}\cdot\left(\frac{\lambda_{t,j} - \alpha}{\alpha(1-\alpha)}\right)$
            \ENDFOR
            \STATE Observe features, labels, and group membership vector: $(X_t, Y_t, \bc(X_t))$
            \STATE Compute radius with linear model: $\tau_t \gets \langle \bm{\theta}_t, \bc(X_t) \rangle$
            \STATE Compute non-conformity score: $S_t \gets |Y_t - \hat{Y}_t|$
            \STATE Compute subgradient of pinball loss: $\bg_t \gets [\mathbf{1}\{S_t \leq \tau_t\} - (1-\alpha)] \bc_t$
            \FOR{$j = 1, \dots, k$} 
            \STATE Update individual wealth processes: $W_{t,j} \gets W_{t-1,j}\left(1 - \left(\frac{\lambda_{t,j} - \alpha}{\alpha(1-\alpha)}\right) g_{t,j}\right)$
            \ENDFOR
        \ENDFOR
    \end{algorithmic}
\end{algorithm}
\end{figure}

\begin{restatable}[POGO coverage guarantee]{theorem}{coordcov}
\label{theorem:CW_group_cov}
Let $\alpha \in (0,1)$, and let $(S_t, \bg_t, \bc_t)^T_{t=1}$ be the sequences of non-conformity scores, subgradients, and group membership vectors observed by \cref{algo:coord_betting}, respectively. Let $D > 0$ and $q \geq 0$, and assume $S_t \leq Dt^q,~ \forall t \geq 1$ and $T_j > 0$. For every integer $T \geq 1$, define
\begin{align}
    U_{T}(k) = \ln\left(1 + (1-\alpha)\frac{D(T+1)^{q+1}}{q+1}\right) + \frac{1}{2}\ln(\pi(T+1)) + \ln(k).
\end{align}
Then, POGO guarantees $\emph{\textsf{MisCov}}_T(c_j) \leq \frac{1}{T_j}\left[U_{T}(k) + \sqrt{2T_j\alpha(1-\alpha)U_{T}(k)}\right]$.
\end{restatable}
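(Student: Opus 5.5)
The plan is to reduce the statement, coordinate by coordinate, to a one-dimensional wealth argument in the style of UP-OCP \citep{liu2026online}. Fix a group $j$ and write $G_{T,j}=\sum_{t=1}^T g_{t,j}$. Each $g_{t,j}=(\1\{S_t\le\tau_t\}-(1-\alpha))c_{t,j}$ lies in $[\alpha-1,\alpha]$, so both returns $w_{t,j,1},w_{t,j,2}$ are nonnegative. The wealth process $W_{t,j}$ is then exactly the wealth of the UP portfolio on this two-stock market, started from $W_{0,j}=1/k$. The proof has three steps: an upper bound on $\ln W_{T,j}$, a lower bound on it in terms of $G_{T,j}$, and a comparison of the two.

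\textbf{Step 1 (upper bound on wealth).} I will show $W_{T,j}$ stays polynomially bounded in $T$. The update is $W_{t,j}=W_{t-1,j}-\theta_{t,j}g_{t,j}$, which telescopes to
\[
W_{T,j}=\tfrac1k-\sum_{t\le T}\theta_{t,j}g_{t,j}.
\]
The term $\theta_{t,j}g_{t,j}$ can be very negative only when the coordinate pushes the radius the wrong way. Here I intend to mimic the UP-OCP argument. When $\tau_t=\langle\bm\theta_t,\bc_t\rangle>S_t$ the point is covered, so $g_{t,j}=\alpha c_{t,j}\ge0$. When $\tau_t<0$ the point is miscovered. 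Hence growth of the aggregate quantity $\sum_j W_{t,j}$ happens only when $\tau_t$ lies on the ``wrong side'' of $[0,S_t]$, which caps the one-step gain by $(1-\alpha)S_t\le(1-\alpha)Dt^q$. Summing over $t$ gives
\[
\textstyle\sum_j W_{T,j}\le 1+(1-\alpha)D(T+1)^{q+1}/(q+1).
\]
This is the main obstacle. Individual coordinates can gain while others lose, so the argument must be carried out on $\sum_j W_{t,j}$ and its increment $-\langle\bm\theta_t,\bg_t\rangle=-(\1\{S_t\le\tau_t\}-(1-\alpha))\tau_t$. That increment equals $-\alpha\tau_t\le0$ if covered ($\tau_t\ge S_t\ge0$), and $(1-\alpha)\tau_t<(1-\alpha)S_t$ if miscovered. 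So the sum grows by at most $(1-\alpha)Dt^q$ per step, and the bound on $\sum_j W_{T,j}$ follows. Since every $W_{T,j}\ge0$, each individual $W_{T,j}$ obeys the same bound. This is the first term of $U_T(k)$.

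\textbf{Step 2 (lower bound from UP regret).} By Cover's universal portfolio guarantee with the Jeffreys prior, the UP wealth is at least the best constant-rebalanced wealth divided by roughly $\sqrt{\pi(T+1)}$. This gives
\[
\ln W_{T,j}\ge -\ln k+\max_{\bar\lambda\in[0,1]}\sum_t\ln\bigl(1-\beta(\bar\lambda) g_{t,j}\bigr)-\tfrac12\ln(\pi(T+1)),
\qquad \beta(\bar\lambda)=\tfrac{\bar\lambda-\alpha}{\alpha(1-\alpha)}.
\]
The range of $\beta$ is $[-1/(1-\alpha),1/\alpha]$, which is exactly the range keeping $1-\beta g\ge0$.

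\textbf{Step 3 (invert).} Combining the two bounds gives
\[
\max_\beta\sum_t\ln(1-\beta g_{t,j})\le U_T(k).
\]
I will use the standard inequality $\ln(1+x)\ge x-x^2/(2(1-|x|))$, or more precisely the exp-concavity/Bernstein-type lower bound for $x\in[-1,\cdot]$. With $\sum_t g_{t,j}^2\le\alpha(1-\alpha)\sum_t c_{t,j}\cdot\max(\ldots)$, taken up to constants as $\le\alpha(1-\alpha)T_j$ after bounding $g_{t,j}^2\le\max(\alpha^2,(1-\alpha)^2)c_{t,j}$, I get $\beta|G_{T,j}|-\beta^2 V/2\lesssim U$. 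Here $V$ is a variance proxy whose expectation-like value matches $\alpha(1-\alpha)T_j$; I expect the sharp form to come from the known bound $\max_\beta\sum\ln(1-\beta g)\ge G^2/(2V+ \tfrac23|G|\cdot\text{range})$. Optimizing over $\beta$, or solving the quadratic inequality in $|G_{T,j}|$, yields
\[
|G_{T,j}|\le U_T(k)+\sqrt{2\alpha(1-\alpha)T_jU_T(k)}.
\]
Dividing by $T_j$ completes the proof. The secondary difficulty is obtaining exactly the constant $\alpha(1-\alpha)T_j$ rather than a cruder variance proxy. I would first try the direct computation of the constant-rebalanced optimum, where the optimal $\bar\lambda$ is the empirical frequency, giving a KL-divergence expression. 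I would then apply Pinsker/Bernstein-type lower bounds for the binary KL, following the proof in \citet{liu2026online}.
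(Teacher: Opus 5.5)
Your Steps 1 and 2 match the paper's proof. The paper uses the same ingredients: the telescoping identity $\sum_j W_{T,j}=1-\sum_t\langle\bm\theta_t,\bg_t\rangle$, the case analysis giving $-\langle\bm\theta_t,\bg_t\rangle\le(1-\alpha)S_t$, nonnegativity of each $W_{T,j}$, and the Jeffreys-prior UP regret together with the $\ln k$ from initialization. One wording slip: your ``wrong side of $[0,S_t]$'' sentence is garbled, since growth happens only when $0<\tau_t<S_t$. The computation right after it is correct.

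\textbf{The gap is in Step 3, which is exactly where soft memberships enter.} You need
\[
\max_{\bar\lambda\in[0,1]}\sum_{t=1}^T\ln\bigl(1-\beta(\bar\lambda)\,c_{t,j}Z_t\bigr)\ \ge\ \frac{G_{T,j}^2}{2T_j\alpha(1-\alpha)+\tfrac{2}{3}|G_{T,j}|},\qquad Z_t\in\{\alpha-1,\alpha\}.
\]
The route you plan to try first says the optimal $\bar\lambda$ is the empirical frequency and the optimum is $T_j$ times a binary KL. That is only valid when $c_{t,j}\in\{0,1\}$. With fractional $c_{t,j}$, the objective is not a weighted Bernoulli log-likelihood, and its maximizer is neither the empirical frequency nor available in closed form. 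So invoking the UP-OCP computation does not cover the theorem's setting.

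The missing step is a concavity reduction. Write
\[
1-\beta c_{t,j}Z_t=(1-c_{t,j})\cdot 1+c_{t,j}u_t,\qquad u_t=1-\beta Z_t .
\]
Here $u_t=\bar\lambda/\alpha$ on miscovered rounds and $u_t=(1-\bar\lambda)/(1-\alpha)$ on covered ones, so concavity of $\ln$ gives $\ln(1-\beta c_{t,j}Z_t)\ge c_{t,j}\ln u_t$. Summing yields
\[
N_{\mathrm{mis}}\ln(\bar\lambda/\alpha)+N_{\mathrm{cov}}\ln\bigl((1-\bar\lambda)/(1-\alpha)\bigr),
\]
with $c$-weighted counts satisfying $N_{\mathrm{mis}}+N_{\mathrm{cov}}=T_j$. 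This is maximized at $\bar\lambda^\star=N_{\mathrm{mis}}/T_j=\alpha-G_{T,j}/T_j$, where it equals $T_j\,\mathrm{KL}\bigl(\mathrm{Ber}(\bar\lambda^\star)\,\|\,\mathrm{Ber}(\alpha)\bigr)$. The Bernstein-type KL bound (Lemma F.1 of \citet{liu2026online}) then gives the display above. Solving the resulting quadratic yields the stated bound.

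\textbf{Your Bernstein fallback does not close the gap as written.} Bounding $g_{t,j}^2\le\max(\alpha^2,(1-\alpha)^2)c_{t,j}$ is not ``$\alpha(1-\alpha)T_j$ up to constants.''
\begin{itemize}
\item The ratio $\max(\alpha^2,(1-\alpha)^2)/(\alpha(1-\alpha))$ blows up as $\alpha\to 0$. That loses the $\sqrt{\alpha}$ factor the theorem advertises.
\item $\sum_t g_{t,j}^2$ really can exceed $\alpha(1-\alpha)T_j$ by a factor of order $(1-\alpha)/\alpha$, for instance when most rounds miscover.
\end{itemize}
The correct variance control uses the identity $Z_t^2=\alpha(1-\alpha)-(1-2\alpha)Z_t$. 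This gives
\[
\sum_t g_{t,j}^2\le\sum_t c_{t,j}Z_t^2\le\alpha(1-\alpha)T_j+|G_{T,j}|.
\]
Combine this with $\ln(1+x)\ge x-x^2$ for $x\ge-\tfrac{1}{2}$, taking $\beta$ of sign opposite to $G_{T,j}$ and small enough that $-\beta g_{t,j}\ge-\tfrac{1}{2}$. The result is $G_{T,j}^2/\bigl(4\alpha(1-\alpha)T_j+4|G_{T,j}|\bigr)$. That has the right rate but larger constants, so it proves a bound of the form $\calO\bigl(U_T(k)+\sqrt{T_j\alpha(1-\alpha)U_T(k)}\bigr)$ rather than the stated $U_T(k)+\sqrt{2T_j\alpha(1-\alpha)U_T(k)}$.
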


It is instructive to compare this result with the current standard G-OCP algorithm, GCACI \citep{ramalingam2025relationship}, which requires a fixed learning rate $\eta$ and guarantees
\begin{align}
    \textsf{MisCov}_T(c_j) \leq \frac{\sqrt{T}}{T_j}\sqrt{k \max\{1-\alpha,\alpha\}^2+2(1-\alpha)/\eta}. \label{gcacibound}
\end{align}

To expand on this comparison, consider the typical regime where $\alpha$ is small and each group appears with non-vanishing frequency, i.e., $T_j \asymp \rho_j T$ for some constant $\rho_j>0$. In this regime, 
predictably, both bounds approach $0$ as $T\to\infty$, i.e. they provide asymptotically exact coverage. However, they exhibit different dependencies in $k$ and $\alpha$.

\paragraph{Fixed $\alpha$ and varying $k$.} POGO yields $\textsf{MisCov}_T(c_j)=\calO(\sqrt{(1/T)\ln(kT)})$, while GCACI with fixed $\eta$ gives $\textsf{MisCov}_T(c_j)=\calO(\sqrt{(1/T)(k+(1/\eta))})$. POGO's dependence on $k$ is only logarithmic, whereas GCACI scales with $\sqrt{k}$ (and also depends on $\eta$). Note, however, when $k$ is very small (set, $k=1$) GCACI may exhibit a better dependence in $T$. On the other hand, when $k$ is moderate and larger, POGO provides better guarantees.

\paragraph{Fixed $k$ and $\alpha \to 0$.} When $k$ is fixed and $\alpha\to 0$, POGO's guarantee reads $\textsf{MisCov}_T(c_j)= \calO((\ln T)/T)$. By contrast, for fixed $\eta$, the GCACI bound behaves as $\calO(1/\sqrt{\eta T})$. Thus, as $\alpha \to 0$, POGO achieves exact asymptotic coverage at a faster rate than GCACI. Moreover, POGO's rate achieves the fastest known rate for time-averaged constraint violation (up to log factor) \citep{yu2020low}.

The key difference between POGO and GCACI is that the latter relies on a learning rate $\eta$, whereas POGO is parameter-free. For small time $T$, with careful tuning of $\eta$, it is possible for GCACI to provide faster covera than POGO. However, tuning learning rates in an online setting is difficult given that the data stream can change arbitrarily, making whatever previous choice of $\eta$ inappropriate (we also verify this experimentally in \cref{sec:experiments},  \cref{fig:bounded_row_no_cp,fig:bounded_row_cp}). 

\begin{table}[t!]
\centering
\caption{\label{table:coverage_rate}Summary of type and rate of coverage across OCP methods ($T_j \asymp \rho_j T$ for some constant $\rho_j>0$). Note that group-conditional coverage implies marginal coverage with $k' = k+1$.}

\begin{tabular}{lccc}
\toprule
Method                                  & Parameter-Free    & Type of Coverage  & Coverage Rate\\
\midrule
UP-OCP \cite{liu2026online}             & \greencheck       & marginal          & $(\ln T)/T + \sqrt{(\alpha \ln T)/T}$\\
GCACI \cite{ramalingam2025relationship} & \redx             & group-conditional & $\sqrt{\eta T\,(1-\alpha)(\eta k(1-\alpha)+1)}/(\eta T)$\\
POGO                                    & \greencheck       & group-conditional & $(\ln kT)/T + \sqrt{\alpha\ln (kT)/T}$\\
\bottomrule
\end{tabular}%

\end{table}

\subsection{Alternative Naive Approaches}
\label{sec:naive_method}
Before presenting our experimental results, we address a natural question: \emph{why} employ POGO rather than simply running UP-OCP, separately for each group, to obtain group conditional coverage?

Concretely, for any group $j$, one can parameterize a radius as $\tau_{t,j} = \theta_{t,j}\,c_j(X_t),$ for $\theta_{t,j}\in\mathbb R$. Since $c_j$ is fixed, learning $(\tau_{t,j})_{t\geq 1}$ to provide coverage reduces to learning $(\theta_{t,j})_{t\geq 1}$, which can be done with UP-OCP. Formally, one can consider the pinball loss $l_t(\theta_{t,j}, S_t) = \max\{(1-\alpha)(S_t-\theta_{t,j}\,c_j(X_t)), \alpha(\theta_{t,j}\,c_j(X_t) - S_t)\}$ and observe that a subgradient, denoted $g_{t,j}$, of $l_t$ at $\theta_{t,j}$ is 
\begin{align}
    g_{t,j} = [{\bf 1}\{S_t \leq \theta_{t,j}c_j(X_t)\} - (1-\alpha)]\cdot c_j(X_t).
\end{align}
These subgradients quantify the miscoverage error of group $j$: that is $\frac{1}{T_j}\left|\sum_{t=1}^T g_{t,j}\right|$ is precisely the miscoverage error of group $j$. Thus, one can employ UP-OCP to learn a sequence of $(\theta_{t,j})_{t\geq1}$ (thus $(\tau_{t,j})_{t\geq 1}$) which directly yield a fast decreasing bound (in $T$) on $\frac{1}{T_j}\left|\sum_{t=1}^T g_{t,j}\right|$ and hence a finite-time and asymptotic coverage guarantee for group $j$. 

However, running UP-OCP separately for each group will produce $k$ sequences of radii $\{(\tau_{t,j})_{t=1}^T\}_{j=1}^k$. This is impractical: given a new sample $X_t$ belonging to multiple groups, which of the resulting radii should be used to provide coverage? Returning the maximum radius will likely over cover \citep{foygel2021limits}, and returning all complicates downstream decision-making. Moreover, group-specific radii may leak sensitive group membership, raising fairness and privacy concerns. We therefore seek algorithms that return a \emph{single} sequence of radii $(\tau_t)_{t\ge1}$ that provide coverage for all groups $j\in[k]$. Moreover, if the algorithm is parameter free, we overcome common issues with tuning learning rates in online settings. POGO satisfies these requirements, making it well-suited for G-OCP.

\section{Experiments}
\label{sec:experiments}
We support our theoretical results with experiments on synthetic and real-world datasets. Code for reproducing all of the experiments is available at
\href{https://github.com/beepulbharti/pogo}{\texttt{github.com/beepulbharti/pogo}}.

\paragraph{Baselines.} In all experiments we compare with GCACI \cite{ramalingam2025relationship} (the current standard GOCP method). We exclude another GOCP method MVP \citep{bastani2022practical}, as GCACI outperforms MVP both theoretically and empirically \citep{ramalingam2025relationship}. For the real-data experiments, we additionally compare with POGO's marginal counterpart UP-OCP. The synthetic experiments, however, are designed to ensure that marginal OCP methods will fail to provide group-coverage (e.g., UP-OCP achieves group-wise coverage rates of $\leq 70\%$), and hence we omit comparison to UP-OCP in the synthetic settings. 

\paragraph{Metrics.} As stressed by \citet{liu2026online}, asking for \(1-\alpha\) group coverage alone is insufficient, since it can be achieved with large (uninformative) radii. For a target level \(1-\alpha\), a reliable G-OCP method should (i) attain \(1-\alpha\) coverage for every group, (ii) do so with small radii, and (iii) adapt quickly to changes in the data. We summarize these trade-offs with Pareto-frontier plots of observed coverage versus radius size and adaptivity \citep{liu2026online}. We quantify adaptivity by computing, for each group, the longest consecutive run of miscoverage events \(\{S_t>\tau_t\}\), and reporting the maximum over groups (lower is better). We also report the lowest observed coverage (across all groups)\footnote{The correct metric to report would be the group coverage rate that deviates the most in absolute value from the target level. However, all of the methods either cover or undercover. Hence, reporting the lowest group coverage rate suffices.} against the target coverage to show how well each method meets the specified coverage requirement.

\subsection{Synthetic experiments} 
\begin{figure}[t!]
\centering
\begin{subfigure}[t]{\textwidth}
  \begin{subfigure}[t]{0.30\textwidth}
    \centering
    \includegraphics[width=\linewidth]{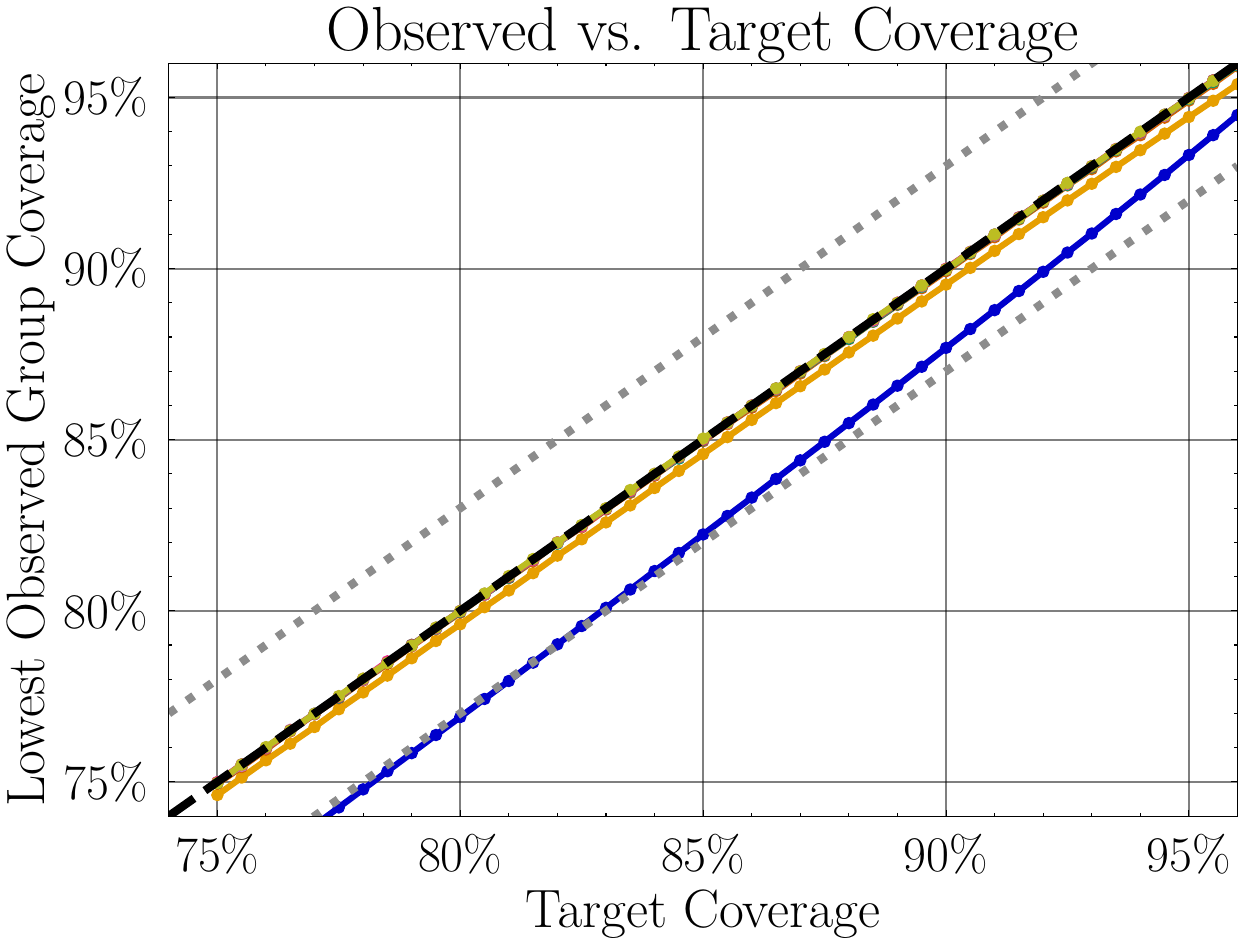}
    \label{fig:bounded_no_cp_cov_vs_dcov}
  \end{subfigure}\hfill
  \begin{subfigure}[t]{0.30\textwidth}
    \centering
    \includegraphics[width=\linewidth]{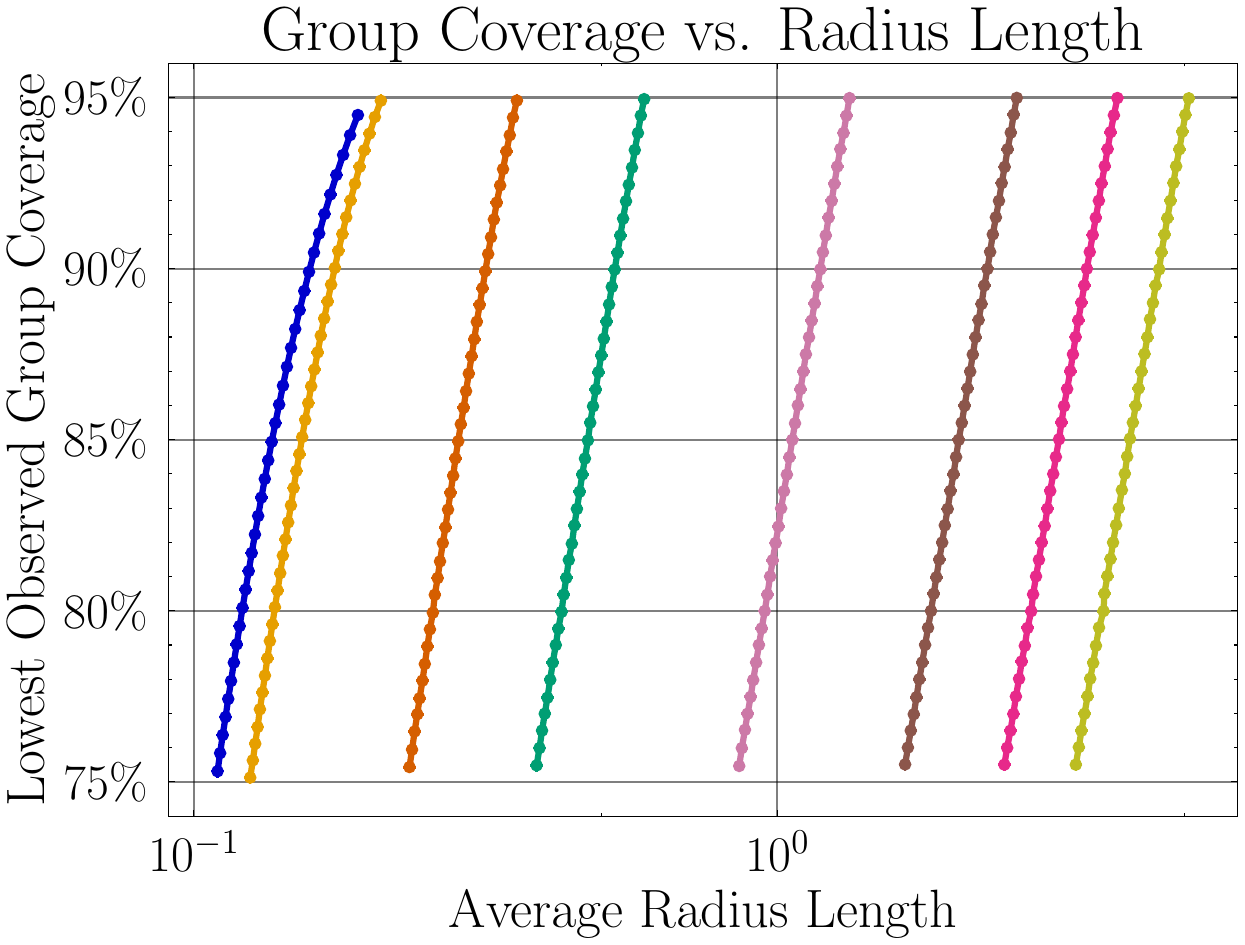}
    \label{fig:bounded_no_cp_cov_vs_rad}
  \end{subfigure}\hfill
  \begin{subfigure}[t]{0.30\textwidth}
    \centering
    \includegraphics[width=\linewidth]{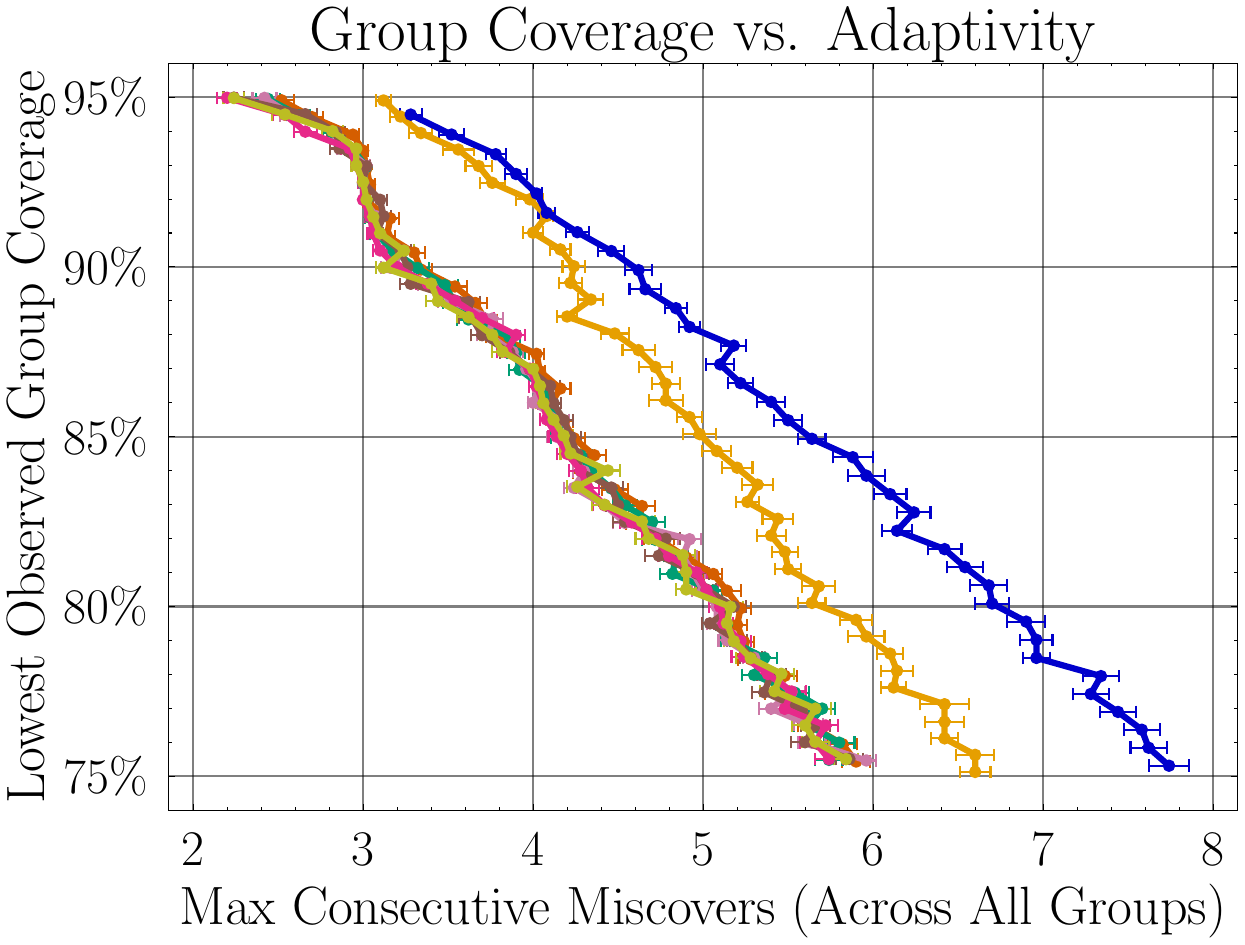}
    \label{fig:bounded_no_cp_cov_vs_adp}
  \end{subfigure}\hfill
  \vspace{-1.2em}
  \caption{Results for synthetic setting with bounded, gradually varying scores.}
  \label{fig:bounded_row_no_cp}
\end{subfigure}
\vspace{0.5em}

\begin{subfigure}[t]{\textwidth}
  \begin{subfigure}[t]{0.30\textwidth}
    \centering
    \includegraphics[width=\linewidth]{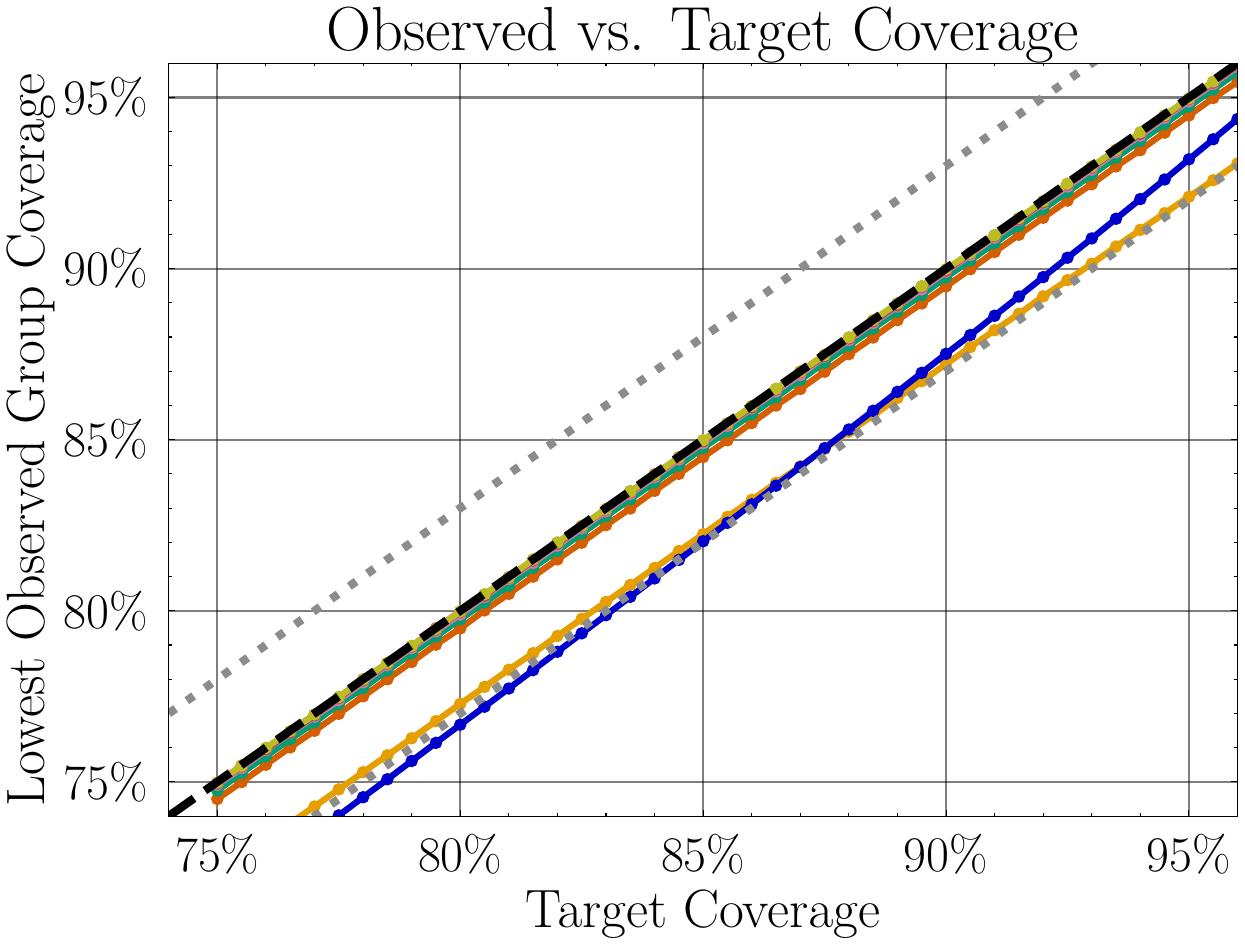}
    \label{fig:bounded_cp_cov_vs_dcov}
  \end{subfigure}\hfill
  \begin{subfigure}[t]{0.30\textwidth}
    \centering
    \includegraphics[width=\linewidth]{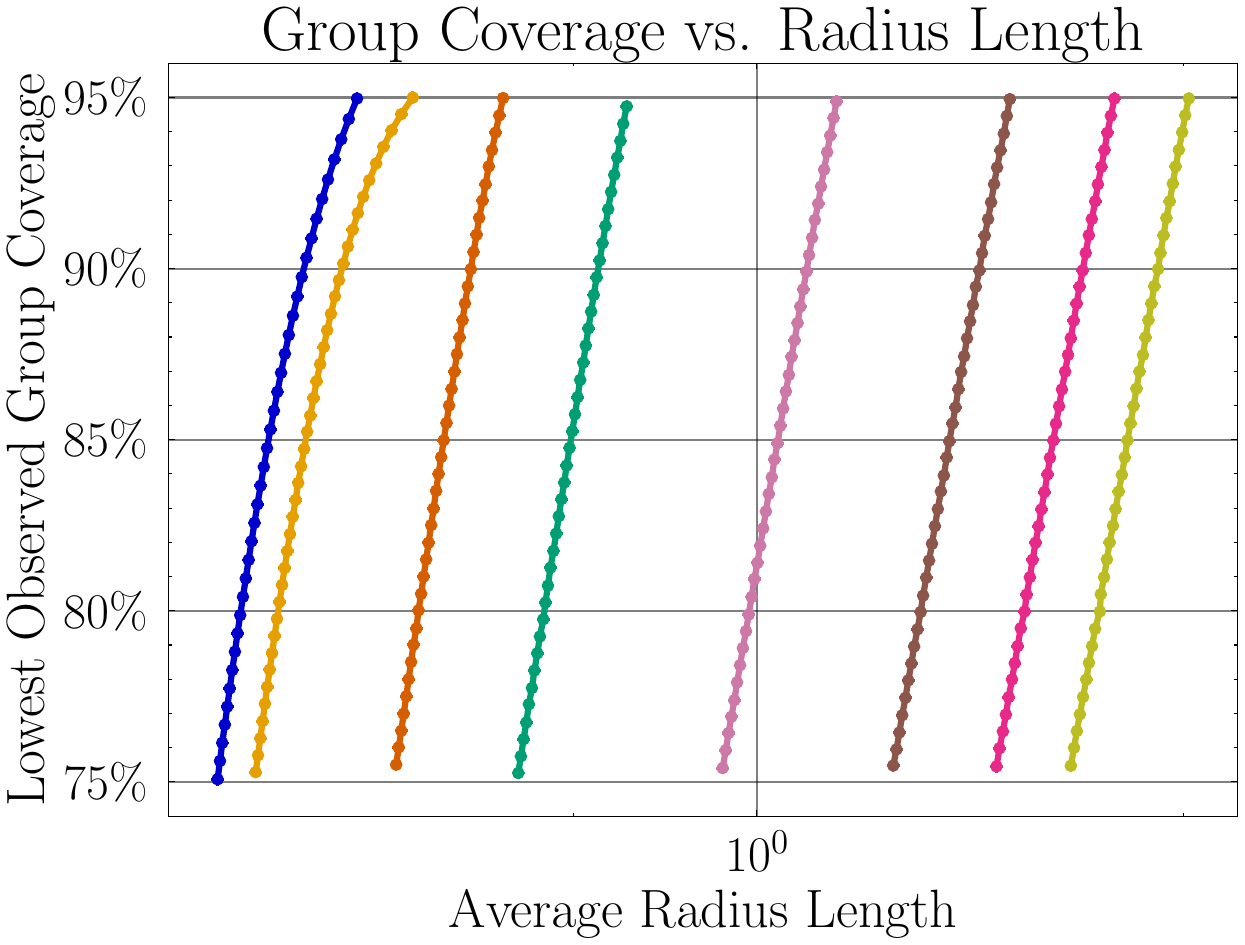}
    \label{fig:bounded_cp_cov_vs_rad}
  \end{subfigure}\hfill
  \begin{subfigure}[t]{0.30\textwidth}
    \centering
    \includegraphics[width=\linewidth]{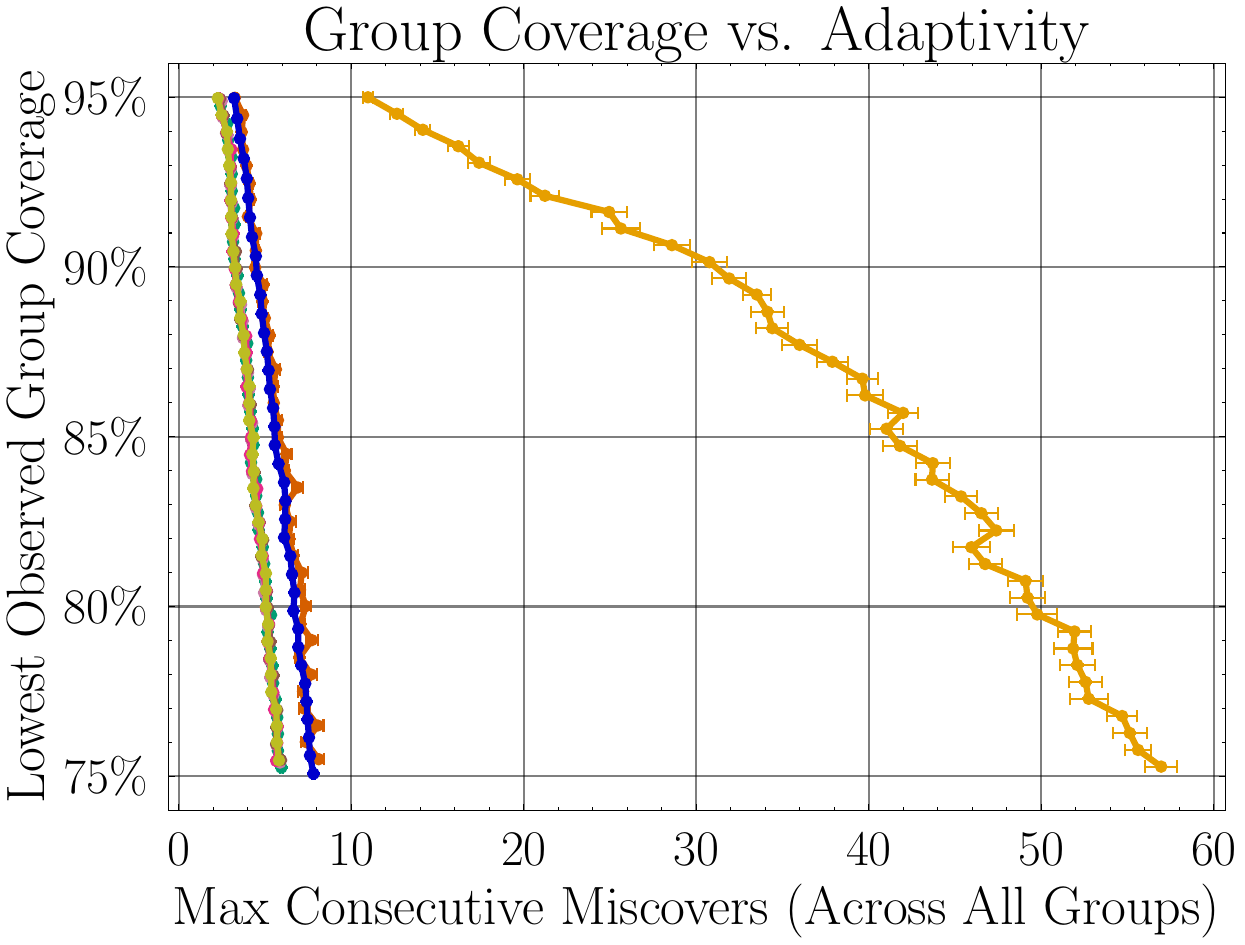}
    \label{fig:bounded_cp_cov_vs_adp}
  \end{subfigure}\hfill
\vspace{-1.2em}
  \caption{Results for synthetic setting with bounded scores with a changepoint. }
  \label{fig:bounded_row_cp}
\end{subfigure}
\vspace{0.5em}

\begin{subfigure}[t]{\textwidth}
  \begin{subfigure}[t]{0.30\textwidth}
    \centering
    \includegraphics[width=\linewidth]{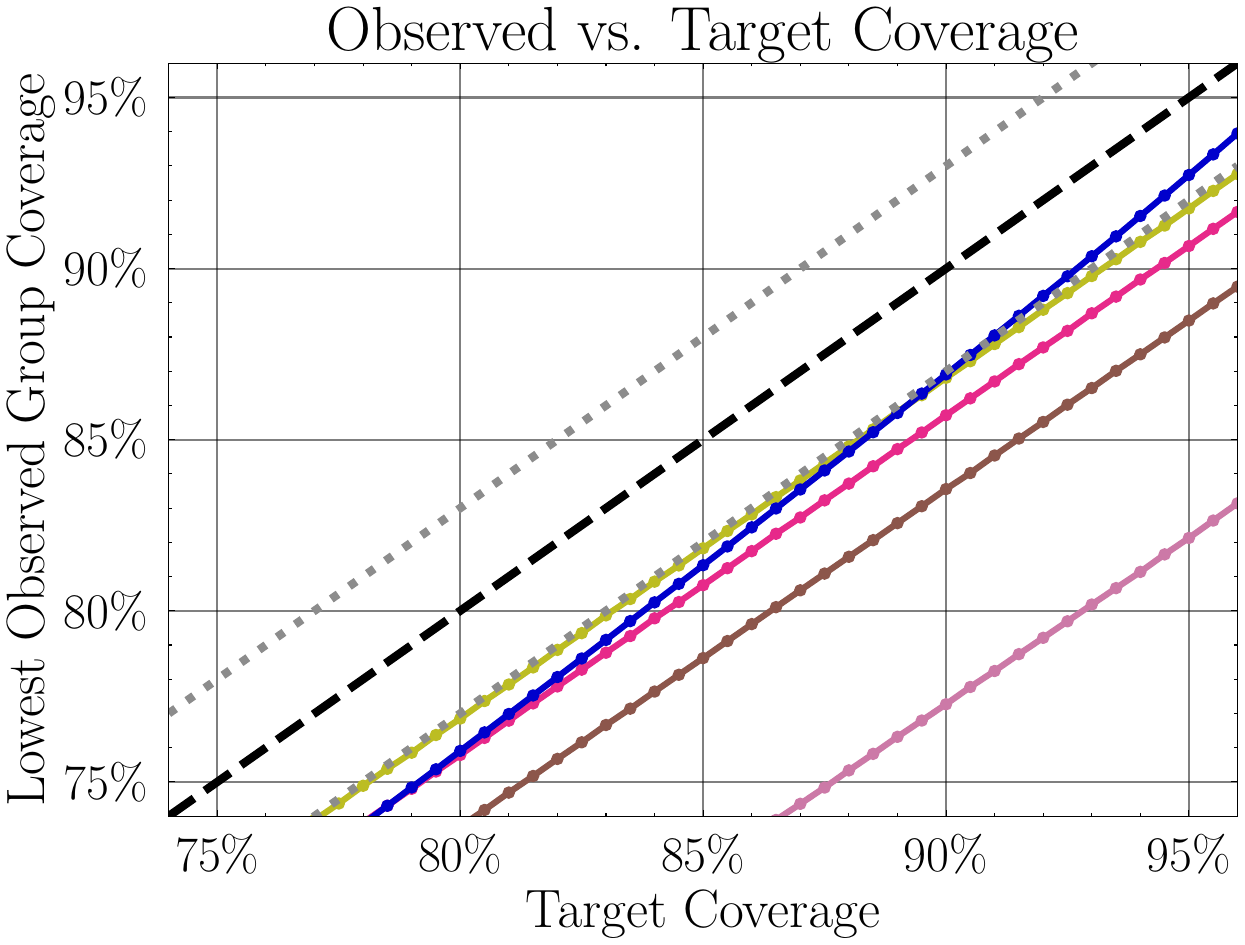}
    \label{fig:bounded_slow_cp_cov_vs_dcov}
  \end{subfigure}\hfill
  \begin{subfigure}[t]{0.30\textwidth}
    \centering
    \includegraphics[width=\linewidth]{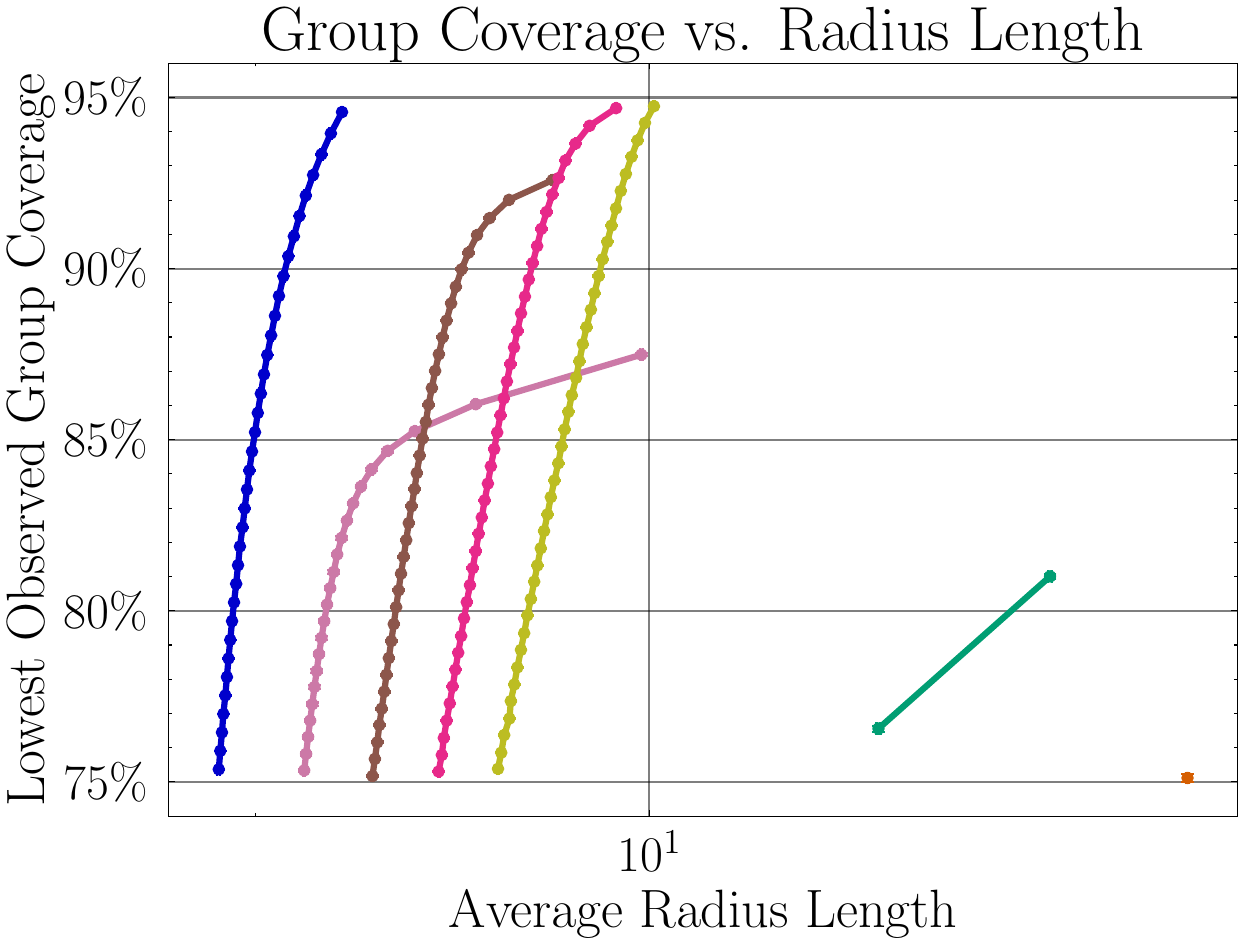}
    \label{fig:unbounded_slow_cp_cov_vs_rad}
  \end{subfigure}\hfill
  \begin{subfigure}[t]{0.30\textwidth}
    \centering
    \includegraphics[width=\linewidth]{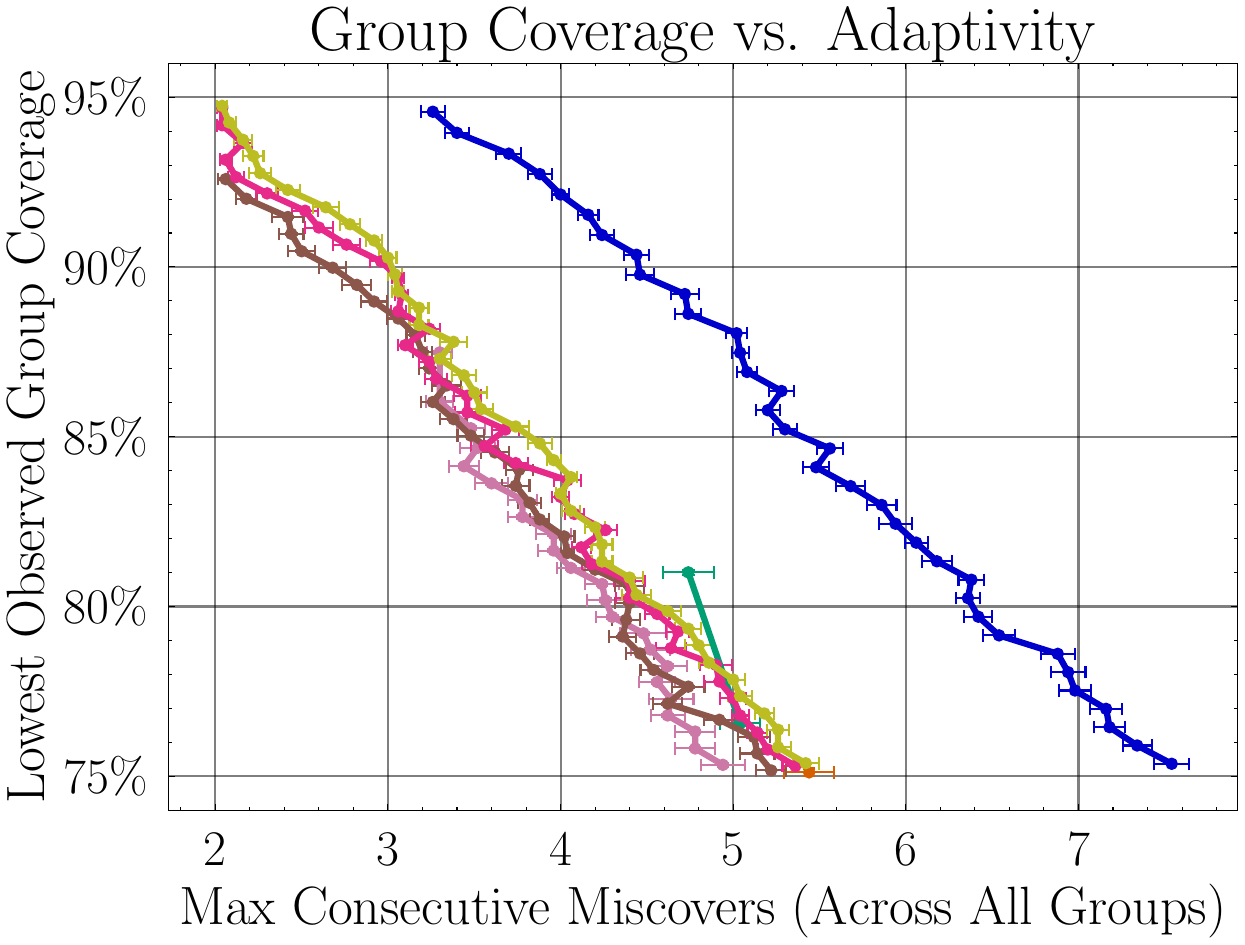}
    \label{fig:unbounded_slow_cp_cov_vs_adp}
  \end{subfigure}\hfill
\vspace{-0.5em}
\caption{Results for synthetic setting with unbounded, growing scores ($A=25$ in \eqref{eq:quad_growth}).}
  \label{fig:unbounded_row}
\end{subfigure}
\centering
\includegraphics[width=1\textwidth]{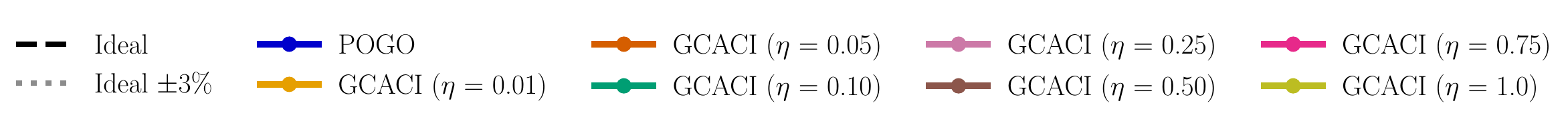}
\vspace{-1.5em}

\caption{Algorithms across three performance criteria for target coverage levels $1-\alpha \in [0.75,0.99]$. {\bf Left:} lowest observed group coverage rate vs. target coverage rate. Dashed line denotes perfect performance, dotted lines show \(\pm 3\%\) tolerance band. {\bf Middle-Right}: Pareto frontiers (curves closer to the top-left indicate better performance; results for observed coverage rates in $[0.75, 0.95]$ are displayed). {\bf Middle:} lowest achieved group coverage rate vs. average radius length. {\bf Right:} lowest achieved group coverage rate vs. maximum length of consecutive miscovers (across groups). Error bars represent the standard error of the mean (SEM) of the variable along the \(x\)-axis.}
\vspace{-1em}
\end{figure}

Groups are modeled to have different frequencies. Denote $\calU \subset [k]$ to be the set of under-represented groups. At each time $t$, we generate a group  vector $\bc(X_t) = (c_1(X_t), \dots, c_k(X_t)) \in \{0,1\}^k$, where $c_j(X_t) \sim \mathrm{Bernoulli}(p_j)$ independently across $j \in [k]$ with
$p_j = 0.05$ if $j\in \calU$ and $0.25$ if $j\in [k]\setminus \calU$. Thus, groups in $\calU$ appear infrequently, making it challenging to ensure their coverage.

Following prior work \citep{angelopoulos2023conformalpid, liu2026online}, we directly generate non-conformity scores. We model the scores as
\begin{align}
\label{eq:conformity_dgp}
S_t
&= S^{\mathrm{base}}_t
+ \left\langle \mathbf{b}(t), \bc(X_t)\right\rangle + \left\langle \boldsymbol{\epsilon}_t, \bc(X_t)\right\rangle.
\end{align}
Here, $S^{\mathrm{base}}_t \sim \mathrm{Beta}(1,20)$ and $\boldsymbol{\epsilon}_t=(\epsilon_{t,1},\dots,\epsilon_{t,k})$ is zero-mean uniform noise, $\epsilon_{t,j} \sim \mathrm{Unif}(-1,1)$. The vector $\mathbf b(t)= (b_1(t), \dots, b_k(t))$ introduces time-varying, group-dependent shifts. Specifically, $\mathbf b(t) $ is modeled as a first-order autoregressive (AR(1)) process with bounded uniform noise:
\begin{align}
\label{eq:bias_dynamics}
b_j(t)
&=
(1-\gamma_j)\,b_j(t-1)
+ \gamma_j\,m_j
+ \mu_j
+ \sigma_j\,\xi_{t,j},
\qquad b_j(0)=0, ~~ \xi_{t,j}\overset{iid}{\sim}\mathrm{Unif}(-1,1),
\end{align}
At a high level, \(\mu_j\) controls the overall drift, \(\sigma_j\) controls the volatility, and   $\gamma_j$ governs persistence. When $\gamma_j=0$, \eqref{eq:bias_dynamics} reduces to a random walk, whereas $\gamma_j>0$ yields a shift that stabilizes toward $m_j$. With this model, scores belonging to different groups exhibit different time-varying shifts.

For the following experiments, we set \(m_j=0.2\) and \(\sigma_j=10^{-2}\), yielding stochastic shifts that start at 0 and fluctuate around 0.2. We vary the persistence and drift parameters, $\lambda_j$ and $\mu_j$, to model different shifts. Time-varying biases are applied \emph{only} to the under-represented groups: for $j\in\calU$, $b_j(t)$ follows ~\eqref{eq:bias_dynamics}$,$ whereas for $j\notin\calU$, $b_j(t)=0$ for all $t$. We set $|\calU|=\left\lfloor \tfrac{1}{10}k\right\rfloor$ and run all methods for target coverage rates $1-\alpha \in [0.75, 0.99]$ and $T=50{,}000$. Results reported are for $k=50$ groups (other $k$ are in Appendix~\ref{supp:additional_synthetic_results}), and presented as averages over 50 independent runs.

\paragraph{Bounded, gradually varying scores.} We set $\lambda_j = 0.25$ and $\mu_j = 0$ so that group shifts remain controlled with no systematic drift. For this setting, to ensure a fair comparison, all algorithms are run with bounded non-conformity scores, $\tilde S_t = \min\{1,\max\{0,S_t\}\}$, as assumed by GCACI. \cref{fig:bounded_row_no_cp} (left) shows that all methods, including POGO, always achieves the target \(1-\alpha\) coverage rate for all groups within a range of \(3\%\). Additionally, in \cref{fig:bounded_row_no_cp} (middle) we see POGO is the most efficient, yielding the smallest average radii at all levels of observed coverage. Moreover, POGO is highly competitive with various GCACI methods in terms of adaptivity: the longest sequence of miscoverage events across any group for POGO is at most $2$ more than the best performing GCACI algorithm (\cref{fig:bounded_row_no_cp} (right)).

\paragraph{Bounded scores with a shift.} From the previous setting, we see that, besides  POGO, GCACI with $\eta = 0.01$ offered favorable tradeoffs between coverage and radius size, and between coverage and the longest miscoverage sequence. This may suggest choosing \(\eta = 0.01\) generally. We now evaluate performance under an abrupt shift, illustrating the risk of committing to \(\eta = 0.01\) (or any choice of $\eta$, for that matter) after a long stable period and how POGO handles this. The scores follow the same model until a shift at \(t= \lfloor T/3 \rfloor\) that increases the scores of a single group \(j^* \in \calU\). Specifically,
\begin{align}
S_t = S_t^{\mathrm{base}}
+ \left\langle \mathbf{b}(t), \mathbf{c}(X_t)\right\rangle
+ \left\langle \boldsymbol{\epsilon}_t, \mathbf{c}(X_t)\right\rangle
+ 0.6\cdot\mathbf{1}\{t \ge \lfloor T/3 \rfloor\}c_{j^*}(X_t).
\end{align}

All methods achieve group coverage within \(\pm 3\%\) of the target (\cref{fig:bounded_row_cp} (left)). Under this shift, POGO again provides the best coverage–radius tradeoff, outperforming GCACI \((\eta = 0.01)\). Moreover, GCACI \((\eta = 0.01)\) adapts poorly after the shift (right most curve \cref{fig:bounded_row_cp} (right)), exhibiting longer consecutive miscoverage sequences—an effect that becomes more pronounced at lower coverage levels. In contrast, POGO maintains short miscoverage runs while remaining competitive with the best-performing GCACI variants in this respect. Overall, this highlights the advantage of parameter-free methods like POGO: tuning $\eta$ in a no-shift regime can lead to poor adaptivity. POGO, in contrast, remains robust across coverage, radius, and miscoverage-sequence metrics.

\paragraph{Unbounded growing scores.} Finally, consider a setting with unbounded non-conformity with quadratic growth for a group. We select a single group $j^* \in \calU$ and generate scores according to
\begin{align}
\label{eq:quad_growth}
S_t = S_t^{\mathrm{base}}
+ \left\langle \mathbf{b}(t), \mathbf{c}(X_t)\right\rangle
+ \left\langle \boldsymbol{\epsilon}_t, \mathbf{c}(X_t)\right\rangle
+ A(1+\nu_t)\left(\frac{t}{T}\right)^2 c_{j^*}(X_t),
\end{align}
where $\nu_t \sim \mathrm{Unif}(-\nicefrac{1}{2}, \nicefrac{1}{2})$.
Thus, the scores of samples in group $j^*$ grow quadratically in time (up to multiplicative noise $1+\nu_t$), with $A>0$ controlling the growth amplitude for the non-conformity scores of group $j^*$. Reported results are for $A=25$. In \cref{fig:unbounded_row} (left), we observe that even after $T=50{,}000$, nearly all GCACI variants fail to achieve the target group coverage rate within $\pm 3\%$, except for GCACI ($\eta=1$). POGO performs best overall, achieving coverage closer to the target levels, particularly at higher target coverages $1-\alpha \in [0.9, 0.95]$. Moreover, compared to GCACI ($\eta=1$), POGO exhibits a better tradeoff between group coverage and the radius size, at the cost of a slightly worse tradeoff between group coverage and the longest miscoverage streak.

\begin{figure}[t]
\centering
\begin{subfigure}[t]{0.45\textwidth}
  \centering
  \includegraphics[width=\linewidth]{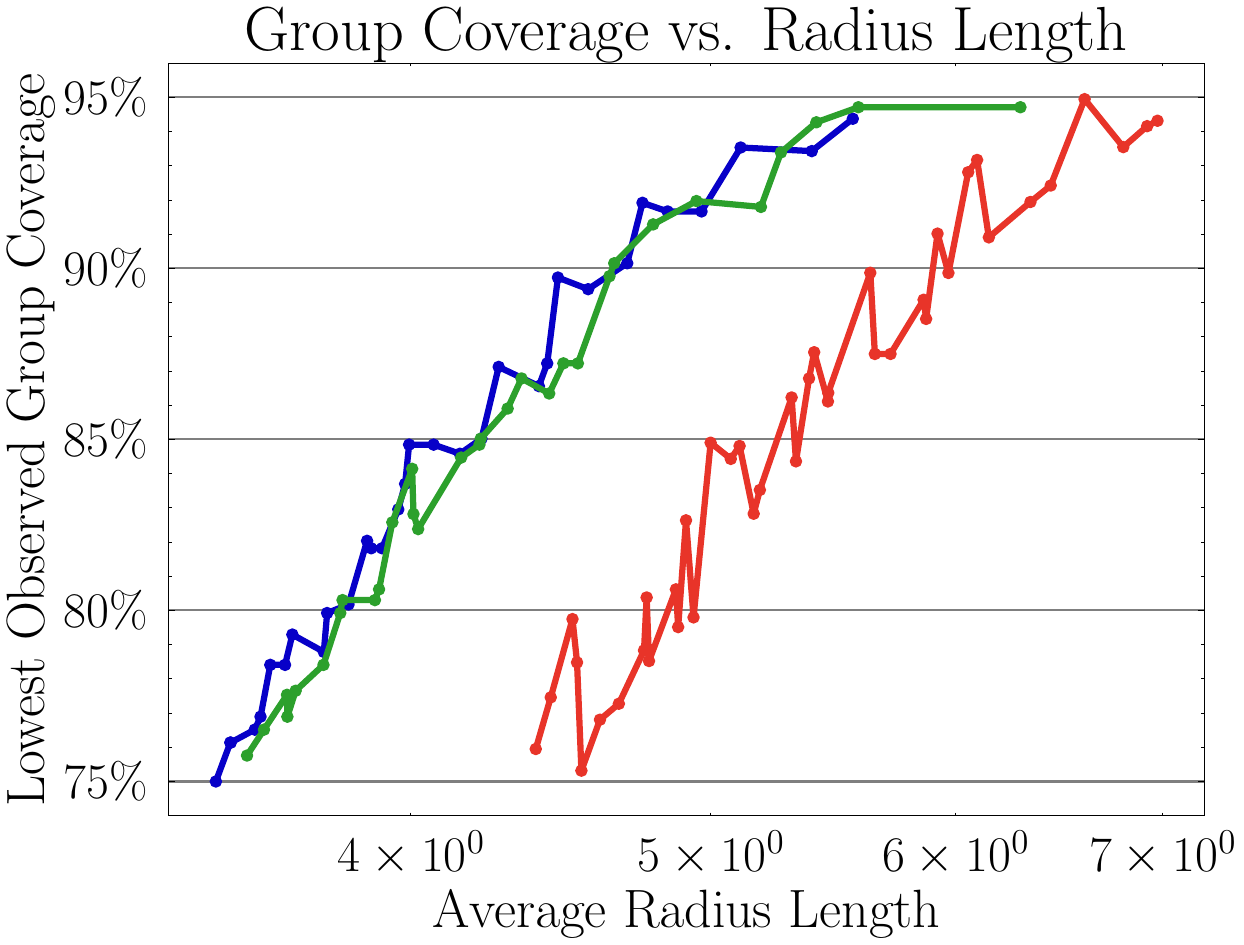}
\end{subfigure}
\begin{subfigure}[t]{0.45\textwidth}
  \centering
  \includegraphics[width=\linewidth]{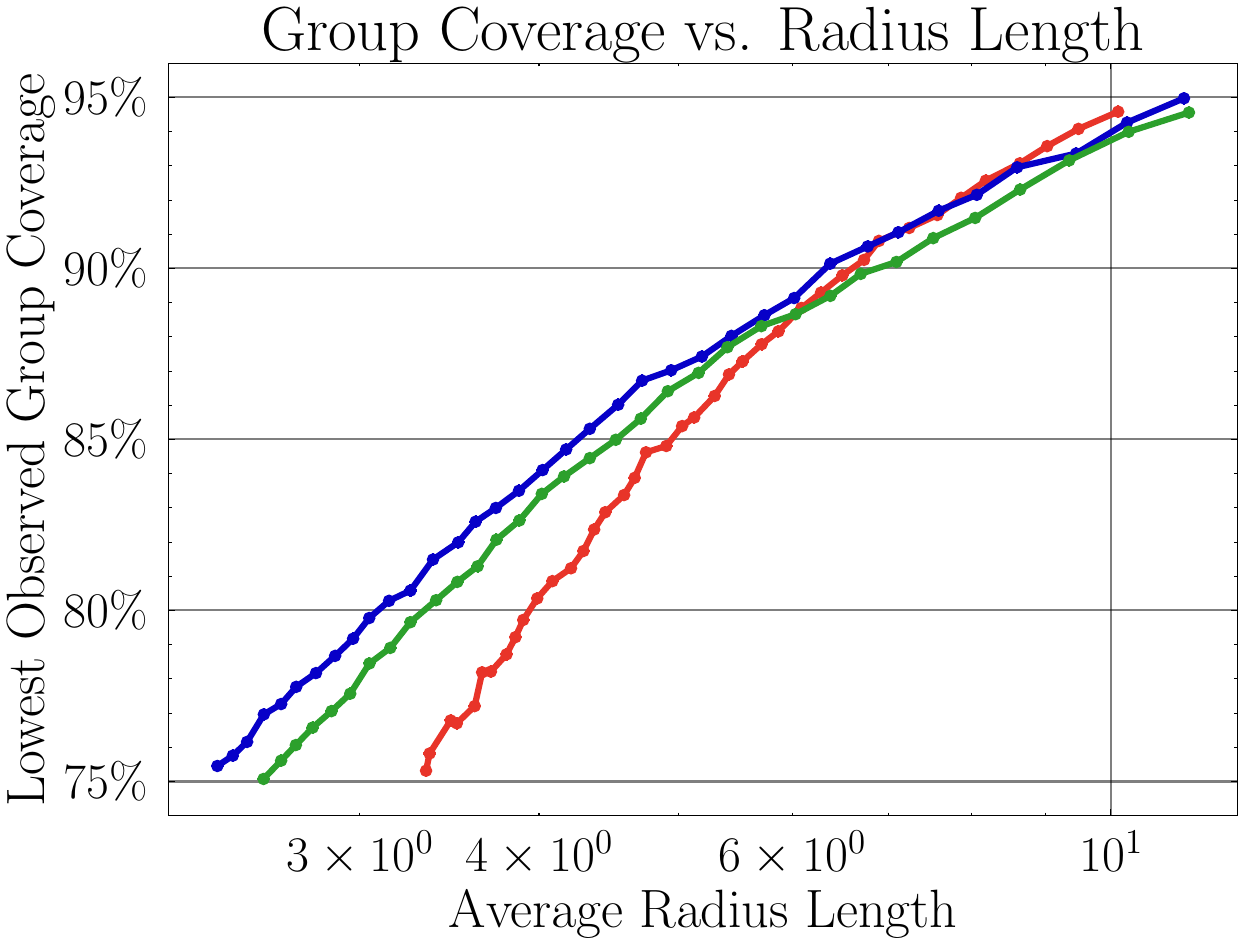}
\end{subfigure}
\begin{subfigure}[t]{0.75\textwidth}
  \centering
  \includegraphics[width=\linewidth]{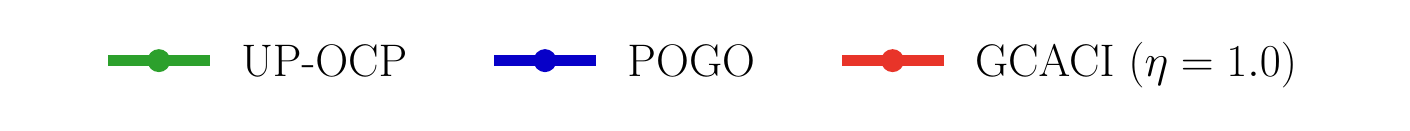}
\end{subfigure}
\vspace{-1em}
\caption{Lowest group coverage vs.\ average radius length. \textbf{Left:} Delta (\texttt{DAL}). \textbf{Right:} MIMIC-IV.}
\label{fig:mimic_stock}
\vspace{0em}
\end{figure}

\subsection{Real World Data}
\label{sec:realworld}
For our real data experiments, we first learn a base model $\hat{Y} = f(X)$ (training) and then compare POGO, UP-OCP \footnote{Recall that UP-OCP only provides marginal coverage -- to compute group quantities, we run UP-OCP at a target level $\alpha$ to generate a sequence of radii $(\tau_t)_{t\geq1}$. Then, for each group $j \in [k]$, we compute $\left|\frac{1}{T_j}\sum^T_{t=1}\mathbf{1}\{S_t \leq \tau_t \} c_j(X_t)\right|$.}, and GCACI (setting $\eta = 1$ following \cite{ramalingam2025relationship}) on the remaining data (testing). 

\paragraph{Length-of-Stay in the Intensive Care Unit.} We perform G-OCP for a model that predicts the length of stay in the intensive care unit. We use the MIMIC-IV dataset \cite{johnson2023mimic}, a de-identified dataset of patients at BIDMC, collected within 24 hours of admission. Our model is an MLP trained on 70\% of the data, and our testing is performed on the remaining 40\%. We define groups using self-reported race, insurance status, and biological sex. Additional details are provided in Appendix~\ref{supp:experiments_mimic}.

\paragraph{Stock Market.} We perform G-OCP for a model that predicts the daily opening price of a stock. We use stock data of Apple (\texttt{APPL}), Delta Airlines Inc (\texttt{DAL}), and Boeing (\texttt{BA}) from the last 15 years \citep{Nugent_2018}. Our model is Prophet \citep{taylor2018forecasting}, a popular method to produce daily opening-price forecasts, trained on the logarithm of the opening prices of the first 100 trading days and then refit daily on the most recent year of trading data \citep{nyseHolidaysamp}.
We define groups using market-indicators such as volatility relative to a rolling median, calendar-year markers such as day-of-week, and by indexing trading days from the first Monday, grouped by the index \citep{bastani2022practical, ramalingam2025relationship}. Additional details are provided in Appendix~\ref{supp:experiments_stock}.

\cref{fig:mimic_stock} demonstrate that POGO is competitive with GCACI, consistently yielding smaller radii for observed coverage rates ranging from 75-95\% (figures for other stocks are provided in Appendix~\ref{supp:experiments_stock_results}). In \cref{tab:mimic_stock} we report additional results when asking a target coverage rate of $90\%$. We observe that UP-OCP, POGO, and GCACI are comparable in both experiments. For the \texttt{DAL} stock, we observe that UP-OCP, POGO, and GCACI achieve maximum miscoverage streaks of $6$, $5$ and $2$, respectively, over $T \approx 2500$ trading days. Similarly, for the MIMIC-IV dataset, these methods obtain $2$, $5$ and $4$ as the maximum miscoverage streak over $T \approx 26000$ days. Results are reported in \cref{tab:mimic_stock}.

\begin{table}[t]
\centering
\small
\caption{Results on Delta (\texttt{DAL}) stock and MIMIC-IV for target coverage $1-\alpha = 0.90$.}
\label{tab:mimic_stock}
\vspace{0.5em}

\resizebox{\linewidth}{!}{%
\begin{tabular}{lcccccc}
\toprule
& \multicolumn{3}{c}{\texttt{DAL} ($T \approx 2{,}500$)}
& \multicolumn{3}{c}{MIMIC-IV ($T \approx 26{,}000$)} \\
\cmidrule(lr){2-4}\cmidrule(lr){5-7}
Method
& Lowest Coverage ($\uparrow$) & Set Size ($\downarrow$) & Max. Miscovers ($\downarrow$)
& Lowest Coverage ($\uparrow$) & Set Size ($\downarrow$) & Max. Miscovers ($\downarrow$) \\
\midrule
UP-OCP & 85.2\% & 4.12 & 6  & 86.4\% & {\bf 4.91} & 5 \\
GCACI  & \textbf{88.9\%} & 5.80 & {\bf 2} & \textbf{89.8\%} & 6.50 & \textbf{4} \\
POGO   & 84.4\% & \textbf{4.09} & 5 & 87.0\% & 4.94 & 5 \\
\bottomrule
\end{tabular}%
}
\end{table}

\section{Conclusion}
\label{sec:conclusion}
We introduced POGO, the first parameter-free algorithm for group-conditional online conformal prediction. By combining ideas from online conformal prediction, parameter-free online learning, and multivariate betting, POGO provides rigorous coverage guarantees for all groups without tuning learning rates. Our analysis establishes the strongest known finite-time group-coverage bounds, and experiments on synthetic and real-world data show that POGO consistently attains the target coverage level for every group while maintaining competitive prediction set sizes and adaptivity relative to non–parameter-free baselines. Looking ahead, it remains unknown if stronger group-coverage guarantees are possible using richer models for the radius beyond the current linear specification. Extending our theory and tools to other forms of online uncertainty quantification, such as calibration, is also an interesting direction. Finally, studying how POGO can support online decision-making in safety-critical applications like clinical decision support systems is a promising avenue for impact.


\bibliographystyle{plainnat}
\bibliography{arxiv/bibliography}

\begin{thebibliography}{55}
\providecommand{\natexlab}[1]{#1}
\providecommand{\url}[1]{\texttt{#1}}
\expandafter\ifx\csname urlstyle\endcsname\relax
  \providecommand{\doi}[1]{doi: #1}\else
  \providecommand{\doi}{doi: \begingroup \urlstyle{rm}\Url}\fi

\bibitem[Anderer(2025)]{anderer20252024}
Samantha Anderer.
\newblock 2024-2025 flu season marked highest hospitalization rate in nearly 15 years, 2025.

\bibitem[Angelopoulos et~al.(2023)Angelopoulos, Candes, and Tibshirani]{angelopoulos2023conformalpid}
Anastasios Angelopoulos, Emmanuel Candes, and Ryan~J Tibshirani.
\newblock Conformal pid control for time series prediction.
\newblock \emph{Advances in neural information processing systems}, 36:\penalty0 23047--23074, 2023.

\bibitem[Angelopoulos and Bates(2023)]{angelopoulos2023conformal}
Anastasios~N Angelopoulos and Stephen Bates.
\newblock Conformal prediction: A gentle introduction.
\newblock \emph{Foundations and Trends in Machine Learning}, 16\penalty0 (4):\penalty0 494--591, 2023.

\bibitem[Angelopoulos et~al.(2025)Angelopoulos, Jordan, and Tibshirani]{angelopoulos2025gradient}
Anastasios~N. Angelopoulos, Michael~I. Jordan, and Ryan~J. Tibshirani.
\newblock Gradient equilibrium in online learning: Theory and applications.
\newblock \emph{Journal of Machine Learning Research}, 26\penalty0 (305):\penalty0 1--68, 2025.
\newblock URL \url{http://jmlr.org/papers/v26/25-0356.html}.

\bibitem[Angelopoulos et~al.(2024)Angelopoulos, Barber, and Bates]{angelopoulos2024online}
Anastasios~Nikolas Angelopoulos, Rina Barber, and Stephen Bates.
\newblock Online conformal prediction with decaying step sizes.
\newblock In \emph{International Conference on Machine Learning}, pages 1616--1630. PMLR, 2024.

\bibitem[Areces et~al.(2025)Areces, Mohri, Hashimoto, and Duchi]{areces2025online}
Felipe Areces, Christopher Mohri, Tatsunori Hashimoto, and John Duchi.
\newblock Online conformal prediction via online optimization.
\newblock In \emph{International Conference on Machine Learning}, pages 1604--1649. PMLR, 2025.

\bibitem[Bastani et~al.(2022)Bastani, Gupta, Jung, Noarov, Ramalingam, and Roth]{bastani2022practical}
Osbert Bastani, Varun Gupta, Christopher Jung, Georgy Noarov, Ramya Ramalingam, and Aaron Roth.
\newblock Practical adversarial multivalid conformai prediction.
\newblock In \emph{Proceedings of the 36th International Conference on Neural Information Processing Systems}, pages 29362--29373, 2022.

\bibitem[Bhatnagar et~al.(2023)Bhatnagar, Wang, Xiong, and Bai]{bhatnagar2023improved}
Aadyot Bhatnagar, Huan Wang, Caiming Xiong, and Yu~Bai.
\newblock Improved online conformal prediction via strongly adaptive online learning.
\newblock In \emph{International Conference on Machine Learning}, pages 2337--2363. PMLR, 2023.

\bibitem[Bhatore et~al.(2020)Bhatore, Mohan, and Reddy]{bhatore2020machine}
Siddharth Bhatore, Lalit Mohan, and Y~Raghu Reddy.
\newblock Machine learning techniques for credit risk evaluation: a systematic literature review.
\newblock \emph{Journal of Banking and Financial Technology}, 4\penalty0 (1):\penalty0 111--138, 2020.

\bibitem[Cesa-Bianchi and Lugosi(2006)]{cesa2006prediction}
Nicolo Cesa-Bianchi and G{\'a}bor Lugosi.
\newblock \emph{Prediction, learning, and games}.
\newblock Cambridge university press, 2006.

\bibitem[Cesa-Bianchi and Orabona(2021)]{cesa2021online}
Nicol{\`o} Cesa-Bianchi and Francesco Orabona.
\newblock Online learning algorithms.
\newblock \emph{Annual review of statistics and its application}, 8\penalty0 (1):\penalty0 165--190, 2021.

\bibitem[Chernozhukov et~al.(2021)Chernozhukov, W{\"u}thrich, and Zhu]{chernozhukov2021distributional}
Victor Chernozhukov, Kaspar W{\"u}thrich, and Yinchu Zhu.
\newblock Distributional conformal prediction.
\newblock \emph{Proceedings of the National Academy of Sciences}, 118\penalty0 (48):\penalty0 e2107794118, 2021.

\bibitem[Cover(1991)]{cover1991universal}
Thomas~M Cover.
\newblock Universal portfolios.
\newblock \emph{Mathematical finance}, 1\penalty0 (1):\penalty0 1--29, 1991.

\bibitem[Cover and Ordentlich(1996)]{cover1996universal}
Thomas~M Cover and Erik Ordentlich.
\newblock Universal portfolios with side information.
\newblock \emph{IEEE Transactions on Information Theory}, 42\penalty0 (2):\penalty0 348--363, 1996.

\bibitem[El~Arab and Al~Moosa(2025)]{el2025role}
Rabie~Adel El~Arab and Omayma~Abdulaziz Al~Moosa.
\newblock The role of ai in emergency department triage: an integrative systematic review.
\newblock \emph{Intensive and Critical Care Nursing}, 89:\penalty0 104058, 2025.

\bibitem[Foygel~Barber et~al.(2021)Foygel~Barber, Candes, Ramdas, and Tibshirani]{foygel2021limits}
Rina Foygel~Barber, Emmanuel~J Candes, Aaditya Ramdas, and Ryan~J Tibshirani.
\newblock The limits of distribution-free conditional predictive inference.
\newblock \emph{Information and Inference: A Journal of the IMA}, 10\penalty0 (2):\penalty0 455--482, 2021.

\bibitem[Gibbs and Cand{\`e}s(2021)]{gibbs2021adaptive}
Isaac Gibbs and Emmanuel~J Cand{\`e}s.
\newblock Adaptive conformal inference under distribution shift.
\newblock In \emph{Proceedings of the 35th International Conference on Neural Information Processing Systems}, pages 1660--1672, 2021.

\bibitem[Gibbs and Cand{\`e}s(2024)]{gibbs2024conformal}
Isaac Gibbs and Emmanuel~J Cand{\`e}s.
\newblock Conformal inference for online prediction with arbitrary distribution shifts.
\newblock \emph{Journal of Machine Learning Research}, 25\penalty0 (162):\penalty0 1--36, 2024.

\bibitem[Gibbs et~al.(2025)Gibbs, Cherian, and Cand{\`e}s]{gibbs2025conformal}
Isaac Gibbs, John~J Cherian, and Emmanuel~J Cand{\`e}s.
\newblock Conformal prediction with conditional guarantees.
\newblock \emph{Journal of the Royal Statistical Society Series B: Statistical Methodology}, 87\penalty0 (4):\penalty0 1100--1126, 2025.

\bibitem[Gneiting and Raftery(2007)]{gneiting2007strictly}
Tilmann Gneiting and Adrian~E Raftery.
\newblock Strictly proper scoring rules, prediction, and estimation.
\newblock \emph{Journal of the American statistical Association}, 102\penalty0 (477):\penalty0 359--378, 2007.

\bibitem[Grigorescu et~al.(2020)Grigorescu, Trasnea, Cocias, and Macesanu]{grigorescu2020survey}
Sorin Grigorescu, Bogdan Trasnea, Tiberiu Cocias, and Gigel Macesanu.
\newblock A survey of deep learning techniques for autonomous driving.
\newblock \emph{Journal of field robotics}, 37\penalty0 (3):\penalty0 362--386, 2020.

\bibitem[Gupta et~al.(2022)Gupta, Jung, Noarov, Pai, and Roth]{gupta2022online}
Varun Gupta, Christopher Jung, Georgy Noarov, Mallesh~M Pai, and Aaron Roth.
\newblock Online multivalid learning: Means, moments, and prediction intervals.
\newblock \emph{Innovations in Theoretical Computer Science (ITCS)}, 2022.

\bibitem[Hong et~al.(2022)Hong, Liu, Gao, Han, Chang, Gong, and Su]{hong2022state}
Na~Hong, Chun Liu, Jianwei Gao, Lin Han, Fengxiang Chang, Mengchun Gong, and Longxiang Su.
\newblock State of the art of machine learning--enabled clinical decision support in intensive care units: literature review.
\newblock \emph{JMIR medical informatics}, 10\penalty0 (3):\penalty0 e28781, 2022.

\bibitem[Huber(1992)]{huber1992robust}
Peter~J Huber.
\newblock Robust estimation of a location parameter.
\newblock In \emph{Breakthroughs in statistics: Methodology and distribution}, pages 492--518. Springer, 1992.

\bibitem[Johnson et~al.(2023)Johnson, Bulgarelli, Shen, Gayles, Shammout, Horng, Pollard, Hao, Moody, Gow, et~al.]{johnson2023mimic}
Alistair~EW Johnson, Lucas Bulgarelli, Lu~Shen, Alvin Gayles, Ayad Shammout, Steven Horng, Tom~J Pollard, Sicheng Hao, Benjamin Moody, Brian Gow, et~al.
\newblock Mimic-iv, a freely accessible electronic health record dataset.
\newblock \emph{Scientific data}, 10\penalty0 (1):\penalty0 1, 2023.

\bibitem[Jun et~al.(2017{\natexlab{a}})Jun, Orabona, Wright, and Willett]{jun2017improved}
Kwang-Sung Jun, Francesco Orabona, Stephen Wright, and Rebecca Willett.
\newblock Improved strongly adaptive online learning using coin betting.
\newblock In \emph{Artificial Intelligence and Statistics}, pages 943--951. PMLR, 2017{\natexlab{a}}.

\bibitem[Jun et~al.(2017{\natexlab{b}})Jun, Orabona, Wright, and Willett]{jun2017online}
Kwang-Sung Jun, Francesco Orabona, Stephen Wright, and Rebecca Willett.
\newblock Online learning for changing environments using coin betting.
\newblock \emph{Electronic Journal of Statistics}, 11:\penalty0 5282--5310, 2017{\natexlab{b}}.

\bibitem[Jung et~al.(2021)Jung, Lee, Pai, Roth, and Vohra]{jung2021moment}
Christopher Jung, Changhwa Lee, Mallesh Pai, Aaron Roth, and Rakesh Vohra.
\newblock Moment multicalibration for uncertainty estimation.
\newblock In \emph{Conference on Learning Theory}, pages 2634--2678. PMLR, 2021.

\bibitem[Jung et~al.(2023)Jung, Noarov, Ramalingam, and Roth]{jung2023batch}
Christopher Jung, Georgy Noarov, Ramya Ramalingam, and Aaron Roth.
\newblock Batch multivalid conformal prediction.
\newblock In \emph{International Conference on Learning Representations (ICLR)}, 2023.

\bibitem[Kingma and Ba(2014)]{kingma2014adam}
Diederik~P Kingma and Jimmy Ba.
\newblock Adam: A method for stochastic optimization.
\newblock \emph{arXiv preprint arXiv:1412.6980}, 2014.

\bibitem[Koenker and Hallock(2001)]{koenker2001quantile}
Roger Koenker and Kevin~F Hallock.
\newblock Quantile regression.
\newblock \emph{Journal of economic perspectives}, 15\penalty0 (4):\penalty0 143--156, 2001.

\bibitem[Lei and Wasserman(2014)]{lei2014distribution}
Jing Lei and Larry Wasserman.
\newblock Distribution-free prediction bands for non-parametric regression.
\newblock \emph{Journal of the Royal Statistical Society Series B: Statistical Methodology}, 76\penalty0 (1):\penalty0 71--96, 2014.

\bibitem[Liu et~al.(2026)Liu, Dobriban, and Orabona]{liu2026online}
Tuo Liu, Edgar Dobriban, and Francesco Orabona.
\newblock Online conformal prediction via universal portfolio algorithms.
\newblock \emph{arXiv preprint arXiv:2602.03168}, 2026.

\bibitem[Nugent(2018)]{Nugent_2018}
Cam Nugent.
\newblock S\&p 500 stock data, Feb 2018.
\newblock URL \url{https://www.kaggle.com/datasets/camnugent/sandp500}.

\bibitem[{NYSE}(2026)]{nyseHolidaysamp}
{NYSE}.
\newblock Holidays \& trading hours, 2026.
\newblock URL \url{https://www.nyse.com/trade/hours-calendars}.

\bibitem[Orabona(2019)]{orabona2019modern}
Francesco Orabona.
\newblock A modern introduction to online learning.
\newblock \emph{arXiv preprint arXiv:1912.13213}, 2019.

\bibitem[Orabona and P{\'a}l(2016)]{orabona2016coin}
Francesco Orabona and D{\'a}vid P{\'a}l.
\newblock Coin betting and parameter-free online learning.
\newblock In \emph{Proceedings of the 30th International Conference on Neural Information Processing Systems}, pages 577--585, 2016.

\bibitem[Orabona and P{\'a}l(2021)]{orabona2021parameter}
Francesco Orabona and D{\'a}vid P{\'a}l.
\newblock Parameter-free stochastic optimization of variationally coherent functions.
\newblock \emph{arXiv preprint arXiv:2102.00236}, 2021.

\bibitem[Papadopoulos et~al.(2002)Papadopoulos, Proedrou, Vovk, and Gammerman]{papadopoulos2002inductive}
Harris Papadopoulos, Kostas Proedrou, Volodya Vovk, and Alex Gammerman.
\newblock Inductive confidence machines for regression.
\newblock In \emph{European conference on machine learning}, pages 345--356. Springer, 2002.

\bibitem[Podkopaev et~al.(2024)Podkopaev, Xu, and Lee]{podkopaev2024adaptive}
Aleksandr Podkopaev, Darren Xu, and Kuang-chih Lee.
\newblock Adaptive conformal inference by betting.
\newblock In \emph{Proceedings of the 41st International Conference on Machine Learning}, pages 40886--40907, 2024.

\bibitem[Ramalingam et~al.(2025)Ramalingam, Kiyani, and Roth]{ramalingam2025relationship}
Ramya Ramalingam, Shayan Kiyani, and Aaron Roth.
\newblock The relationship between no-regret learning and online conformal prediction.
\newblock \emph{arXiv preprint arXiv:2502.10947}, 2025.

\bibitem[Romano et~al.(2020)Romano, Barber, Sabatti, and Cand{\` e}s]{romano2020malice}
Yaniv Romano, Rina~Foygel Barber, Chiara Sabatti, and Emmanuel Cand{\` e}s.
\newblock With {Malice} {Toward} {None}: Assessing {Uncertainty} via {Equalized} {Coverage}.
\newblock \emph{Harvard Data Science Review}, 2\penalty0 (2), apr 30 2020.
\newblock https://hdsr.mitpress.mit.edu/pub/qedrwcz3.

\bibitem[Rozenberg et~al.(2017)Rozenberg, Danish, Dombrovskiy, and Vogel]{rozenberg2017outcomes}
Aleksandr Rozenberg, Timothy Danish, Viktor~Y Dombrovskiy, and Todd~R Vogel.
\newblock Outcomes after motor vehicle trauma: transfers to level i trauma centers compared with direct admissions.
\newblock \emph{The Journal of Emergency Medicine}, 53\penalty0 (3):\penalty0 295--301, 2017.

\bibitem[Saunders et~al.(1999)Saunders, Gammerman, and Vovk]{saunders1999transduction}
C~Saunders, A~Gammerman, and V~Vovk.
\newblock Transduction with confidence and credibility.
\newblock In \emph{Proceedings of the 16th international joint conference on Artificial intelligence-Volume 2}, pages 722--726, 1999.

\bibitem[Shafer and Vovk(2008)]{shafer2008tutorial}
Glenn Shafer and Vladimir Vovk.
\newblock A tutorial on conformal prediction.
\newblock \emph{Journal of machine learning research}, 9\penalty0 (3), 2008.

\bibitem[Taylor and Letham(2018)]{taylor2018forecasting}
Sean~J Taylor and Benjamin Letham.
\newblock Forecasting at scale.
\newblock \emph{The American Statistician}, 72\penalty0 (1):\penalty0 37--45, 2018.

\bibitem[Teasdale and Jennett(1974)]{teasdale1974assessment}
Graham Teasdale and Bryan Jennett.
\newblock Assessment of coma and impaired consciousness: a practical scale.
\newblock \emph{The lancet}, 304\penalty0 (7872):\penalty0 81--84, 1974.

\bibitem[Vincent et~al.(1996)Vincent, Moreno, Takala, Willatts, De~Mendon{\c{c}}a, Bruining, Reinhart, Suter, and Thijs]{vincent1996sofa}
J-L Vincent, Rui Moreno, Jukka Takala, Sheila Willatts, Arnaldo De~Mendon{\c{c}}a, Hajo Bruining, C~Kathryn Reinhart, PeterM Suter, and Lambertius~G Thijs.
\newblock The sofa (sepsis-related organ failure assessment) score to describe organ dysfunction/failure: On behalf of the working group on sepsis-related problems of the european society of intensive care medicine (see contributors to the project in the appendix).
\newblock \emph{Intensive care medicine}, 22\penalty0 (7):\penalty0 707--710, 1996.

\bibitem[Vovk et~al.(2003)Vovk, Lindsay, Nouretdinov, Gammerman, et~al.]{vovk2003mondrian}
Vladimir Vovk, David Lindsay, Ilia Nouretdinov, Alex Gammerman, et~al.
\newblock Mondrian confidence machine.
\newblock \emph{Technical Report}, 2003.

\bibitem[Vovk et~al.(2005)Vovk, Gammerman, and Shafer]{vovk2005algorithmic}
Vladimir Vovk, Alexander Gammerman, and Glenn Shafer.
\newblock \emph{Algorithmic learning in a random world}.
\newblock Springer, 2005.

\bibitem[Vovk et~al.(1999)Vovk, Gammerman, and Saunders]{vovk1999machine}
Volodya Vovk, Alexander Gammerman, and Craig Saunders.
\newblock Machine-learning applications of algorithmic randomness.
\newblock In \emph{Proceedings of the Sixteenth International Conference on Machine Learning}, pages 444--453, 1999.

\bibitem[Yang et~al.(2024)Yang, Cand{\`e}s, and Lei]{yang2024bellman}
Zitong Yang, Emmanuel Cand{\`e}s, and Lihua Lei.
\newblock Bellman conformal inference: Calibrating prediction intervals for time series.
\newblock \emph{arXiv preprint arXiv:2402.05203}, 2024.

\bibitem[Yu and Neely(2020)]{yu2020low}
Hao Yu and Michael~J Neely.
\newblock A low complexity algorithm with o ($\sqrt{}$ t) regret and o (1) constraint violations for online convex optimization with long term constraints.
\newblock \emph{Journal of Machine Learning Research}, 21\penalty0 (1):\penalty0 1--24, 2020.

\bibitem[Zaffran et~al.(2022)Zaffran, F{\'e}ron, Goude, Josse, and Dieuleveut]{zaffran2022adaptive}
Margaux Zaffran, Olivier F{\'e}ron, Yannig Goude, Julie Josse, and Aymeric Dieuleveut.
\newblock Adaptive conformal predictions for time series.
\newblock In \emph{ICML 2022-39th International Conference on Machine Learning}, volume 162, 2022.

\bibitem[Zhang et~al.(2024)Zhang, Bombara, and Yang]{zhang2024discounted}
Zhiyu Zhang, David Bombara, and Heng Yang.
\newblock Discounted adaptive online learning: towards better regularization.
\newblock In \emph{Proceedings of the 41st International Conference on Machine Learning}, pages 58631--58661, 2024.

\end{thebibliography}
\newpage
\appendix
\renewcommand\thefigure{\thesection.\arabic{figure}}
\renewcommand\thealgorithm{\thesection.\arabic{algorithm}}
\renewcommand\thetable{\thesection.\arabic{table}}

\section{Proofs \label{app:proofs}}
\subsection{Proof of \cref{theorem:CW_group_cov} \label{app:CW_group_cov_proof}}

\coordcov*
\begin{proof}
Recall $\bc_t = (c_{t,1}, \dots, c_{t,k}) \in [0,1]^k$ and the subgradient vector $\bg_t$ at time $t$ is defined as 
\begin{align}
    \bg_t = Z_t\bc_t, \quad \text{where} \quad Z_t = \mathbf{1}\{S_t \leq \langle \theta_t, \bc_t\rangle\} - (1-\alpha).
\end{align} 
Let $g_{t,j}$ denote the $j^{\text{th}}$ entry of the subgradient vector $\bg_t$, which, by definition above, is $g_{t,j} = c_{t,j} Z_t$. Define $w_{t,j,1} = 1 - \frac{g_{t,j}}{\alpha}$ and $w_{t,j,2} = 1 + \frac{g_{t,j}}{1-\alpha}$ to be the returns of two stocks and the $j^{th}$ wealth process
\begin{align}
    W_{t, j} = W_{t-1, j}\left(\lambda_{t,j}w_{t,j,1} + (1-\lambda_{t,j})w_{t,j,2}\right) \quad \text{with} \quad W_{0,j} = \frac{1}{k}.
\end{align}
where $(\lambda_{t,j})^T_{t=1}$ is a sequence of portfolios in $[0,1]$. By this recursive definition of $W_{t,j}$, it is clear that $W_{T,j} = \frac{1}{k}\prod^T_{t=1}\left(\lambda_{t,j}w_{t,j,1} + (1-\lambda_{t,j})w_{t,j,2}\right)$. As a result,
\begin{align}
    \ln(W_{T,j}) &= \ln\left(\frac{1}{k}\prod^T_{t=1}\left(\lambda_{t,j}w_{t,j,1} + (1-\lambda_{t,j})w_{t,j,2}\right)\right)\\
    &= \ln\left(\prod^T_{t=1}\left(\lambda_{t,j}w_{t,j,1} + (1-\lambda_{t,j})w_{t,j,2}\right)\right) + \ln\left(\frac{1}{k}\right).
\end{align}
Since the $\lambda_{t,j}$ are determined by Universal Portfolio \citep{cover1991universal, cover1996universal} with Jeffrey's prior, we have 
\begin{align}
     \ln\left(\prod^T_{t=1}\left(\lambda_{t,j}w_{t,j,1} + (1-\lambda_{t,j})w_{t,j,2}\right)\right) &\geq
     \max_{\lambda \in [0,1]} \ln\left(\prod^T_{s=1}\left(\lambda w_{t,j,1} + (1-\lambda) w_{t,j,2}\right)\right) - \frac{1}{2}\ln(\pi(T+1)).
\end{align}
By the definitions of $w_{t,j,1}$ and $w_{t,j,2}$, along with noting that 
$g_{t,j} = c_{t,j} Z_t$, where $c_{t,j} \in [0,1]$ and $Z_t \in \{\alpha - 1, \alpha\}$, by \cref{lemma:opt_wealth_lb} 
\begin{align}
    \max_{\lambda \in [0,1]} \ln\left(\prod^T_{t=1}\left(\lambda_{t,j}w_{t,j,1} + (1-\lambda_{t,j})w_{t,j,2}\right)\right) \geq \frac{\left(\sum^T_{t=1} g_{t,j}\right)^2}{{2}T_j\alpha(1-\alpha) + {\frac{2}{3}}\left|\sum^T_{t=1} g_{t,j}\right|},
\end{align}
where $T_j = \sum_{t = 1}^{T} c_{t, j}$. Therefore,
\begin{align}
    \ln(W_{T,j}) &\geq  \frac{\left(\sum^T_{t=1} g_{t,j}\right)^2}{{2}T_j\alpha(1-\alpha) + {\frac{2}{3}}\left|\sum^T_{t=1} g_{t,j}\right|} + \ln\left(\frac{1}{k}\right) - \frac{1}{2}\ln(\pi(T+1)).
\end{align}
Applying $\exp(\cdot)$ to both sides and summing the $k$ wealth processes $(W_{t,1})^T_{t=1}, \dots (W_{t,k})^T_{t=1}$, gives
\begin{align}
    \sum^k_{j=1} W_{T,j} \geq \sum^k_{j=1}\frac{1}{k\sqrt{\pi}(T+1)^{1/2}}\exp\left(\frac{\left(\sum^T_{t=1} g_{t,j}\right)^2}{{2}T_j\alpha(1-\alpha) + {\frac{2}{3}}\left|\sum^T_{t=1} g_{t,j}\right|}\right).
\end{align}
Thus we have a lower bound on $\sum^k_{j=1} W_{T,j}$.

Now, we have that for any $j \in [k]$, by the definitions of $w_{t,j,1}$ and $w_{t,j,2}$,
\begin{align}
    \lambda_{t,j}w_{t,j,1} + (1-\lambda_{t,j})w_{t,j,2} &= \lambda_{t,j} - \frac{\lambda_{t,j}}{\alpha}g_{t,j} + 1-\lambda_{t,j} + \frac{1-\lambda_{t,j}}{1-\alpha}g_{t,j}\\
    &= 1 - \left(\frac{\lambda_{t,j}}{\alpha} - \frac{1-\lambda_{t,j}}{1-\alpha}\right)g_{t,j}\\
    &= 1 - \left(\frac{\lambda_{t,j} - \alpha}{\alpha(1-\alpha)}\right)g_{t,j}.
\end{align}
By defining $\theta_{t,j} = W_{t-1,j}\left(\frac{\lambda_{t,j} - \alpha}{\alpha(1-\alpha)}\right)$, we have
\begin{align}
    W_{T,j} 
    &= W_{T-1,j}( \lambda_{t,j}w_{t,j,1} + (1-\lambda_{t,j})w_{t,j,2})\\
    &= W_{T-1,j}\left(1 - \left(\frac{\lambda_{t,j} - \alpha}{\alpha(1-\alpha)}\right)g_{t,j}\right) \\
    &= W_{T-1,j} - \theta_{T,j}g_{T,j}\\
    &= W_{T-2,j} - \theta_{T-1,j} g_{T-1,j} - \theta_{T,j} g_{T,j} \\
    &= W_{0,j} - \sum^T_{t=1}\theta_{t,j}g_{t,j} = \frac{1}{k} - \sum^T_{t=1} \theta_{t,j}g_{t,j}.
\end{align}
Summing the individual $W_{T,j}$, we obtain
\begin{align}
    \sum^k_{j=1}W_{T,j} &= \sum^k_{j=1} \left(\frac{1}{k} - \sum^T_{t=1} \theta_{t,j}g_{t,j} \right)\\
    &= 1 - \sum^k_{j=1} \left(\sum^T_{t=1} \theta_{t,j}g_{t,j} \right)\\
    &= 1 - \sum^T_{t=1} \sum^k_{j=1}\theta_{t,j}g_{t,j} =  1 - \sum^T_{t=1}\langle \bm{\theta}_t , \bg_t\rangle,
\end{align}
where we define $(\bm{\theta}_t = (\theta_{t,1}, \dots \theta_{t,k}))$. By \cref{lemma:wealth_ub}, and the assumption $S_t \leq Dt^q$ for $D > 0$ and $q \geq 0$, we obtain
\begin{align}
    1 - \sum^T_{t=1}\langle \theta_t , \bg_t\rangle \leq 1 + (1-\alpha)\sum^T_{t=1} S_t \leq 1 + (1-\alpha)\frac{D(T+1)^{q+1}}{q+1},
\end{align}
where we have used the integral inequality $\sum_{t=1}^{T} t^q \cdot 1 \leq \int_{0}^{T+1} t^q dt = \frac{(T+1)^{q+1}}{q + 1}$.
Therefore, we have an upper bound for $\sum^k_{j=1}W_{T,j}$.

Comparing the upper and the lower bounds, 
\begin{align}
    \sum^T_{j=1}\frac{1}{k\sqrt{\pi}(T+1)^{1/2}}\exp\left(\frac{\left(\sum^T_{t=1} g_{t,j}\right)^2}{{2}T_j\alpha(1-\alpha) + {\frac{2}{3}}\left|\sum^T_{t=1} g_{t,j}\right|}\right) \leq 1 + (1-\alpha)\frac{D(T+1)^{q+1}}{q+1}
\end{align}
Every term in the left hand side equation is non-negative. Therefore, for any $j \in [k]$,
\begin{align}
    \label{eq:eq1}
    \frac{1}{k\sqrt{\pi}(T+1)^{1/2}}\exp\left(\frac{\left(\sum^T_{t=1} g_{t,j}\right)^2}{{2}T_j\alpha(1-\alpha) + {\frac{2}{3}}\left|\sum^T_{t=1} g_{t,j}\right|}\right) \leq 1 + (1-\alpha)\frac{D(T+1)^{q+1}}{q+1}.
\end{align}
Multiplying both sides of \eqref{eq:eq1} by $k\sqrt{\pi}(T+1)^{1/2}$ and then applying $\ln(\cdot)$ to both sides, we obtain
\begin{align}
\frac{\left(\sum_{t=1}^T g_{t,j}\right)^2}{2T_j\alpha(1-\alpha) + \frac{2}{3}\left|\sum_{t=1}^T g_{t,j}\right|} \leq U_T(k),
\end{align}
where we define
\begin{align}
    U_T(k) \coloneqq \ln\left(1 + (1-\alpha)\frac{D(T+1)^{q+1}}{q+1}\right) + \frac{1}{2}\ln(\pi(T+1)) + \ln(k).
\end{align}
Equivalently,
\begin{align}
\left(\sum_{t=1}^T g_{t,j}\right)^2 - \frac{2}{3}U_T(k)\left|\sum_{t=1}^T g_{t,j}\right| - 2U_T(k)T_j\alpha(1-\alpha) \leq 0.
\end{align}
The inequality above is of the form $z^2 + Q|z| + R \leq 0$, and the LHS might be non-convex depending on $Q, R$. Fortunately, we only require an upper bound on $|z|$, and we can look at the part of the curve in $z \geq 0$, which is a quadratic, and remains non-positive in $0 \leq z \leq z^\star$, where $z^\star = \frac{-Q + \sqrt{Q^2 - 4R}}{2}$.
\begin{align}
\left|\sum_{t=1}^T g_{t,j}\right|
&\leq \frac{1}{3}U_T(k) + \sqrt{2U_T(k)T_j\alpha(1-\alpha) + \frac{1}{9}U_T(k)^2} \\
&\leq U_T(k) + \sqrt{2T_j\alpha(1-\alpha)U_T(k)},
\end{align}
where we have used the inequality $\sqrt{Q + R} \leq \sqrt{Q} + \sqrt{R}$, with $Q = 2T_j\alpha(1-\alpha)U_T(k)$, $R = \frac{1}{9}U_T(k)^2$. Finally, dividing both sides by $T_j$ obtains the desired result.
\end{proof}

\section{Useful Lemmas and Inequalities \label{app:useful_lemmas}}

In this section, we present  results that are used throughout the proofs in \cref{app:proofs}. For cited results, their proofs can be found in the referenced works; otherwise, proofs are provided here.

\begin{lemma} 
\label{lemma:opt_wealth_lb}
Let $\alpha \in (0,1)$. Consider any integer $T \geq 1$ and a sequence $(c_tZ_t)^T_{t=1}$ where $c_t \in [0,1]$ and $Z_t \in \{\alpha-1, \alpha\}$. Define $T_c = \sum^T_{t=1} c_t$. Then
\begin{align}
    \max_{\lambda  \in [0,1]} \ln \left(\prod^T_{t=1} \lambda \left(1 - \frac{c_tZ_t}{\alpha}\right) + (1-\lambda)\left(1 + \frac{c_tZ_t}{1-\alpha}\right)\right) \geq \frac{\left(\sum^T_{t=1} c_tZ_t\right)^2}{2T_c\alpha(1-\alpha) + \frac{2}{3}\left| \sum^T_{t=1} c_tZ_t\right|}.
\end{align}
\end{lemma}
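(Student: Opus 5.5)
We reparametrize the portfolio and work with a per-round concave function. Set $\beta = \frac{\lambda-\alpha}{\alpha(1-\alpha)}$. As $\lambda$ ranges over $[0,1]$, $\beta$ ranges over $[-\frac{1}{1-\alpha}, \frac{1}{\alpha}]$. The computation in the proof of \cref{theorem:CW_group_cov} shows that the $t$-th factor equals $1-\beta c_t Z_t$, so the objective is $F(\beta)=\sum_{t=1}^T \ln(1-\beta x_t)$ with $x_t = c_tZ_t$. The bound only needs to hold for the maximum, so it suffices to exhibit one admissible $\beta$ that achieves it. Write $G=\sum_t x_t$ and $V=T_c\alpha(1-\alpha)$. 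We will choose $\beta$ with sign opposite to $G$, so that $-\beta G=|\beta||G|$.

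The key step is a Bernstein-type lower bound on $\ln(1+u)$ that is valid on the range of $u_t=-\beta x_t$. The standard inequality $\ln(1+u)\ge u-\frac{u^2}{2(1-|u|/3)}$ is awkward in this form. A cleaner route is the inequality $\ln(1+u)\ge u - \frac{u^2}{2}\cdot\psi$ with $\psi$ controlled uniformly on the interval. We need a variance proxy for each coordinate. Since $x_t\in\{c_t(\alpha-1),c_t\alpha\}$, we will not have $\sum x_t^2\le V$ in general, but we do have $x_t^2\le c_t\max(\alpha,1-\alpha)^2$. The target denominator $2V$ suggests a more refined approach that exploits the asymmetry. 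I would first try the following: for $Z_t=\alpha$ (coverage), $u_t=-\beta c_t\alpha$, and for $Z_t=\alpha-1$, $u_t=\beta c_t(1-\alpha)$. I would then use the exact two-point structure. Since $\ln(1-\beta cz)$ is concave in $c$, and since for $c=0$ it equals $0$, we have $\ln(1-\beta c z)\ge c\ln(1-\beta z)$ for $c\in[0,1]$. This reduces to $\sum_t c_t \ln(1-\beta Z_t)$. Now $Z_t$ takes two values, and with $n_1$ and $n_2$ the $c$-weighted counts we have $n_1+n_2=T_c$ and $G=\alpha n_1-(1-\alpha)n_2$. The reduced objective, $n_1\ln(1-\beta\alpha)+n_2\ln(1+\beta(1-\alpha))$, is exactly a binary KL / Bernoulli log-likelihood-ratio problem. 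Its maximum over $\beta$ is $T_c\,\mathrm{KL}(\hat p\,\|\,1-\alpha)$, where $\hat p = n_1/T_c$ is the weighted coverage frequency and $\hat p-(1-\alpha)=G/T_c$. Note that the maximizer lies in the admissible interval, and at the boundary cases it is attained in the limit.

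The final step is a known lower bound on Bernoulli KL of Bernstein form: $\mathrm{KL}(p\|q)\ge \frac{(p-q)^2}{2q(1-q)+\frac{2}{3}|p-q|}$. Substituting $q=1-\alpha$ and $p-q=G/T_c$ gives
\begin{align}
T_c\,\mathrm{KL}\ge \frac{G^2/T_c}{2\alpha(1-\alpha)+\frac{2}{3}|G|/T_c}=\frac{G^2}{2V+\frac{2}{3}|G|},
\end{align}
which is the claim. The main obstacle is justifying this KL inequality. I would prove it by a Taylor expansion of $h(p)=\mathrm{KL}(p\|q)$ around $q$. We have $h(q)=h'(q)=0$ and $h''(s)=\frac{1}{s(1-s)}$. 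Bounding $s(1-s)\le q(1-q)+|s-q|\cdot|1-2q|$-type terms, we compare with the function $\phi(p)=\frac{(p-q)^2}{2q(1-q)+\frac{2}{3}|p-q|}$ by checking $h''\ge\phi''$ (or comparing with the Bennett function $(1+u)\ln(1+u)-u\ge \frac{u^2}{2(1+u/3)}$ after rewriting KL via it). The cleanest version applies Bennett's inequality to each of the two terms of the KL, using $\mathrm{KL}(p\|q)=q\,h_B(\tfrac{p-q}{q})+(1-q)\,h_B(\tfrac{q-p}{1-q})$ with $h_B(u)=(1+u)\ln(1+u)-u$. The bound $h_B(u)\ge\frac{u^2}{2(1+u/3)}$ then yields two fractions whose sum, after putting both under a common worst-case denominator, should dominate $\phi$. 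I expect this algebra to be the step requiring care.
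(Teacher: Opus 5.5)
Your final argument is the paper's proof. The concavity step $\ln(1-\beta c_t Z_t)\ge c_t\ln(1-\beta Z_t)$ (the paper's $\ln(1-c_t+c_tu_t)\ge c_t\ln u_t$) reduces the problem to a $c$-weighted two-point problem with optimum $T_c\,\mathrm{KL}(\mathrm{Ber}(\hat p)\,\|\,\mathrm{Ber}(1-\alpha))=T_c\,\mathrm{KL}(\mathrm{Ber}(\lambda^\star)\,\|\,\mathrm{Ber}(\alpha))$ with $\lambda^\star=1-\hat p$, and the Bernstein-type KL bound then finishes it; the paper simply cites that bound from \citet{liu2026online}, and your unfinished Bennett-decomposition route to it does close: with $\delta=p-q$, $A=2q+\tfrac23\delta$, $B=2(1-q)-\tfrac23\delta$ one has $A+B=2$, so the two fractions sum to $2\delta^2/(AB)$, and $AB=4q(1-q)+\tfrac43(1-2q)\delta-\tfrac49\delta^2\le 4q(1-q)+\tfrac43|\delta|$.
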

\begin{proof}
First, observe
\begin{align}
    \lambda \left(1 - \frac{c_tZ_t}{\alpha}\right) + (1-\lambda)\left(1 + \frac{c_tZ_t}{1-\alpha}\right) &= \lambda  - \frac{\lambda}{\alpha}c_tZ_t + 1 - \lambda + \frac{(1-\lambda)}{1-\alpha}c_tZ_t\\
    &= 1 - \left(\frac{\lambda}{\alpha} - \frac{1-\lambda}{1-\alpha}\right)c_tZ_t\\
    &= 1 - \left(\frac{\lambda-\alpha}{\alpha(1-\alpha)}\right)c_tZ_t.
\end{align}
Then, rearrange to obtain
\begin{align}
    1 - \frac{\lambda - \alpha}{\alpha(1 - \alpha)} c_t Z_t
    &= 1 - c_t + c_t - \frac{\lambda - \alpha}{\alpha(1 - \alpha)} c_t Z_t \\ 
    &= 1 - c_t + \left(1 + \frac{\lambda - \alpha}{\alpha (\alpha - 1)} Z_t \right) \cdot c_t \\
    &= 1 - c_t + u_t \cdot c_t,
\end{align}
where $u_t = \frac{\lambda}{\alpha}$ when $Z_t = \alpha - 1$, and $u_t = \frac{1-\lambda}{1-\alpha}$ when $Z_t = \alpha$.

Since $c_t \in [0,1]$, by the concavity of $\ln(x)$,
\begin{align}
    \ln((1 - c_t)(1) + c_t u_t) &\geq (1-c_t)\ln(1) + c_t\ln(u_t) = c_t\ln(u_t).
\end{align}
Therefore, for any $\lambda \in [0,1]$
\begin{align}
    \max_{\lambda  \in [0,1]} &\ln \left(\prod^T_{t=1} \lambda \left(1 - \frac{c_tZ_t}{\alpha}\right) + (1-\lambda)\left(1 + \frac{c_tZ_t}{1-\alpha}\right)\right)\\
     &\geq \ln \left(\prod^T_{t=1} \lambda \left(1 - \frac{c_tZ_t}{\alpha}\right) + (1-\lambda)\left(1 + \frac{c_tZ_t}{1-\alpha}\right)\right)\\
    &= \sum^T_{t=1}\ln \left( 1 - c_t + c_t u_t\right) \\
    & \geq \sum^T_{t=1}c_t\ln(u_t) \\
    &= \sum_{t: Z_t = \alpha }c_t\ln(u_t) + \sum_{t: Z_t = \alpha-1 }c_t\ln(u_t)\\
    &=  \ln\left(\frac{1-\lambda}{1-\alpha}\right) T_1 + \ln\left(\frac{\lambda}{\alpha}\right) T_2, \label{eq:lnlowerbd}
\end{align}
where $T_1 = \sum_{\{t \colon Z_t = \alpha\}} c_t$ and $T_2 = \sum_{\{t \colon Z_t=\alpha-1\}} c_t$. Recall that $T_c = T_1 + T_2$.

The lower bound in \eqref{eq:lnlowerbd} holds for any $\lambda \in [0,1]$: we instantiate it for $\lambda^*$, which is the maximizer of \eqref{eq:lnlowerbd}. To maximize \eqref{eq:lnlowerbd}, set the partial derivative with respect to $\lambda$ to $0$, to obtain that 
\begin{align}
    \lambda^\star = \frac{T_2}{T_c} \in [0,1]. \label{eq:lmbstar}
\end{align}
Now $T_1 = (1 - \lambda^\star) T_c$ and $T_2 = \lambda^\star T_c$. Substituting this,  \eqref{eq:lnlowerbd} can be rewritten as
\begin{align}
     T_c\left((1-\lambda^*) \ln \left(\frac{1-\lambda^*}{1-\alpha}\right) + \lambda^*\ln \left(\frac{\lambda^*}{\alpha}\right)\right)
     = T_c\cdot\text{KL}\left(\text{Ber}(\lambda^*) ||\text{Ber}(\alpha)\right).
\end{align}
Further, we can express $\lambda^\star$ as a function of $\alpha$ and $\sum c_t Z_t$ using \eqref{eq:lmbstar}:
\begin{align}
    \sum_{t = 1}^T c_t Z_t = \alpha T_1 + (\alpha - 1) T_2 = \alpha T_c - T_2 = \alpha T_c - \lambda^\star T_c.
\end{align}
Thus
\begin{align}
T_c\cdot\text{KL}\left(\text{Ber}(\lambda^*) ~|| \text{Ber}(\alpha)\right) = T_c\cdot\text{KL}\left(\text{Ber}\left(\alpha - \frac{\sum^T_{t=1} c_t Z_t}{T_c}\right) \bigg|\bigg| ~\text{Ber}(\alpha)\right)
\end{align}
By \cref{lemma:kl_lb}
\begin{align}
    T_c \cdot \text{KL}\left(\text{Ber}\left(\alpha - \frac{\sum^T_{t=1} c_t Z_t}{T_c}\right)|| \text{Ber}(\alpha)\right) &\geq T_c \frac{\left(\frac{\sum^T_{t=1} c_t Z_t}{T_c}\right)^2}{2\alpha(1-\alpha) + \frac{2}{3}\left|\frac{\sum^T_{t=1} c_t Z_t}{T_c}\right|}\\
    &= \frac{\left(\sum^T_{t=1} c_t Z_t\right)^2}{2T_c\alpha(1-\alpha) + \frac{2}{3}\left|\sum^T_{t=1} c_t Z_t\right|}
\end{align}
Thus we have shown the result
\begin{align}
     \max_{\lambda  \in [0,1]} \ln \left(\prod^T_{t=1} \lambda \left(1 - \frac{c_tZ_t}{\alpha}\right) + (1-\lambda)\left(1 + \frac{c_tZ_t}{1-\alpha}\right)\right) 
    &\geq \frac{\left(\sum^T_{t=1} c_t Z_t\right)^2}{2T_c\alpha(1-\alpha) + \frac{2}{3}\left|\sum^T_{t=1} c_t Z_t\right|}
\end{align}
\end{proof}

\subsection{Upper bound on the wealth process}
\begin{lemma} 
\label{lemma:wealth_ub} 
Let $\alpha \in (0,1)$. Consider any integer $T \geq 1$ and let $(S_t)^T_{t=1}$ and $(\bc_t)^T_{t=1}$ be a sequence of non-conformity scores and group membership vectors. At every time $t$ consider a vector $\bm{\theta}_t \in \bR^k$ and define
\begin{align}
    \bg_t = [{\bf 1}\{S_t \leq \langle \theta_t, \bc_t \rangle \} - (1-\alpha)]\bc_t.
\end{align}
Then 
\begin{align}
    1 - \sum^T_{t=1} \langle \bm{\theta}_t, \bg_t \rangle \leq 1 + (1-\alpha)\sum^T_{t=1} S_t.
\end{align}
\end{lemma}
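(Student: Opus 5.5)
The plan is to prove the inequality term by term. Writing $\tau_t \coloneqq \langle \bm{\theta}_t, \bc_t\rangle$ and $Z_t \coloneqq \mathbf{1}\{S_t \leq \tau_t\} - (1-\alpha)$, we have $\bg_t = Z_t \bc_t$ by definition. By linearity of the inner product, this gives $\langle \bm{\theta}_t, \bg_t\rangle = Z_t \langle \bm{\theta}_t, \bc_t\rangle = Z_t \tau_t$. The claim is then equivalent to
\begin{align}
    -\sum_{t=1}^T Z_t \tau_t \leq (1-\alpha)\sum_{t=1}^T S_t .
\end{align}
It therefore suffices to show the pointwise bound $-Z_t\tau_t \leq (1-\alpha) S_t$ for every $t$ and then sum over $t$; the constant $1$ on both sides is irrelevant.

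The pointwise bound follows from two cases on whether $S_t \le \tau_t$.

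\emph{Covered case} ($S_t \leq \tau_t$). Here $Z_t = \alpha$, so $-Z_t\tau_t = -\alpha \tau_t$. Since the scores are non-conformity scores $S_t = |Y_t - \hY_t| \geq 0$, we get $\tau_t \geq S_t \geq 0$. Hence $-\alpha\tau_t \leq 0 \leq (1-\alpha)S_t$, using $\alpha \in (0,1)$.

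\emph{Miscovered case} ($S_t > \tau_t$). Here $Z_t = \alpha - 1$, so $-Z_t\tau_t = (1-\alpha)\tau_t < (1-\alpha) S_t$, because $1-\alpha>0$. This case holds whatever the sign of $\tau_t$, and in particular covers the convention that a negative radius gives an empty interval.

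Summing the pointwise bounds over $t = 1,\dots,T$ and adding $1$ to both sides yields the lemma. There is no real obstacle in this argument. The only point needing care is the use of $S_t \geq 0$ in the covered case, which rules out a negative radius that still covers. This nonnegativity comes from the definition $S_t = |Y_t - \hY_t|$ in the problem setting, and I would state it explicitly in the proof.
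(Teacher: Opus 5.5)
Your proof is correct and follows the paper's argument: both reduce the claim to the pointwise bound $-\langle \bm{\theta}_t, \bg_t\rangle \le (1-\alpha)S_t$ and check it by cases on $\tau_t$ versus $S_t$, using $S_t \ge 0$ in the covered case. The paper splits your miscovered case into the two subcases $\tau_t < 0$ and $0 \le \tau_t < S_t$, which your merged case handles directly.
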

\begin{proof}
It suffices to show
\begin{align}
    - \langle \bm{\theta}, \bg_t \rangle \leq (1-\alpha)S_t, \quad \forall t \geq 1
\end{align}
because, if true, then $-\sum^T_{t=1}\langle \bm{\theta}, \bg_t \rangle \leq  -\sum^T_{t=1}(1-\alpha)S_t$, and the result is immediate.

By definition of $\bg_t$, we have
\begin{align}
    \langle \theta_t, \bg_t \rangle = \langle \theta_t, \bc_t \rangle\cdot [\mathbf{1}\{S_t \leq \langle \theta_t, \bc_t\rangle\} - (1-\alpha)].
\end{align}
The range of $\langle \theta_t, \bg_t \rangle$ is $[-\infty, \infty]$. We will analyze the three exhaustive cases of where this quantity is less than $0$, in between $0$ and $S_t$, and greater than zero. We will show that no matter the scenario, the inequality $\langle \bm{\theta}, \bg_t \rangle \leq (1-\alpha)S_t$ holds  $\forall t \geq 1$.

\paragraph{Case 1:} Suppose $\langle \theta_t, \bc_t \rangle < 0$. Then, since $S_t \geq 0$ because its a non-conformity score, we have
\begin{align}
    [\mathbf{1}\{S_t \leq \langle \theta_t, \bc_t\rangle\} - (1-\alpha)] = -(1-\alpha).
\end{align}
Thus, $-\langle \theta_t, \bg_t \rangle = \langle \theta_t, \bc_t \rangle(1-\alpha) \leq 0 \leq (1-\alpha)S_t$.

\paragraph{Case 2:}: Suppose $0 \leq \langle \theta_t, \bc_t \rangle < S_t$. Then, 
\begin{align}
    [\mathbf{1}\{S_t \leq \langle \theta_t, \bc_t\rangle\} - (1-\alpha)] = -(1-\alpha)
\end{align}
and, as a result, $-\langle \theta_t, \bg_t \rangle = \langle \theta_t, \bc_t \rangle(1-\alpha) \leq (1-\alpha)S_t$.

\paragraph{Case 3:} Suppose $\langle \theta_t, \bc_t \rangle \geq  S_t$. Then,
\begin{align}
    [\mathbf{1}\{S_t \leq \langle \theta_t, \bc_t\rangle\} - (1-\alpha)] = \alpha.
\end{align}
Thus,  $-\langle \theta_t, \bg_t \rangle = -\langle \theta_t, \bc_t \rangle\alpha \leq 0 \leq (1-\alpha)S_t$.
\end{proof}

\subsection{A lower bound on the KL divergence between two Bernoulli random variables}
\begin{lemma}[\citep{liu2026online} Lemma F.1] 
\label{lemma:kl_lb}
Let $p, q \in [0,1]$, then
\begin{align}
    \text{\emph{KL}}(\text{\emph{Ber}}(p)~||\text{\emph{Ber}}(q)) = p\ln \frac{p}{q} + (1-p)\ln \frac{1-p}{1-q} \geq \frac{(p-q)^2}{2q(1-q) + \frac{2}{3}|p-q|}.
\end{align}  
\end{lemma}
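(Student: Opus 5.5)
The plan is to use the Cram\'er--Chernoff (Donsker--Varadhan) variational representation of the Bernoulli KL divergence together with Bernstein's moment generating function bound. First I dispose of the degenerate cases. If $p=q$, both sides are $0$ (reading the right side as $0$). If $q\in\{0,1\}$ and $p\neq q$, the left side is $+\infty$. So assume $q\in(0,1)$ and set $v=q(1-q)$ and $x=p-q$.

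Let $X\sim\mathrm{Ber}(q)$ and $Y=X-q$. The first step is the duality formula
\begin{align*}
\text{KL}(\text{Ber}(p)\,||\,\text{Ber}(q)) = \sup_{\lambda\in\bR}\Big[\lambda p - \ln\big(1-q+qe^{\lambda}\big)\Big] = \sup_{\lambda\in\bR}\Big[\lambda x - \ln \mathbb{E}\, e^{\lambda Y}\Big].
\end{align*}
The supremum is attained at $e^{\lambda}=\frac{p(1-q)}{q(1-p)}$, and a one-line substitution confirms the value. In fact only the inequality ``$\geq$'' is needed, and it follows from Gibbs' inequality by tilting $\mathrm{Ber}(q)$ by $e^{\lambda X}$.

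The second step is the Bernstein MGF bound. We have $\mathbb{E}Y=0$, $\mathbb{E}Y^2=v$ and $|Y|\leq 1$, so $\mathbb{E}|Y|^m\leq v$ for all $m\geq 2$. Combining this with $m!\geq 2\cdot 3^{m-2}$ gives, for $|\lambda|<3$,
\begin{align*}
\mathbb{E}\,e^{\lambda Y}\leq 1+\sum_{m\geq 2}\frac{|\lambda|^m v}{m!}\leq 1+\frac{v\lambda^2}{2}\sum_{m\geq 2}\Big(\frac{|\lambda|}{3}\Big)^{m-2}=1+\frac{v\lambda^2}{2(1-|\lambda|/3)}.
\end{align*}
Applying $\ln(1+u)\leq u$ then yields $\ln\mathbb{E}\,e^{\lambda Y}\leq \frac{v\lambda^2}{2(1-|\lambda|/3)}$.

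The third step is to optimize. I plug in the specific choice $\lambda=\frac{x}{v+|x|/3}$. This choice satisfies $|\lambda|<3$, and it gives $1-|\lambda|/3=\frac{v}{v+|x|/3}$. The penalty term then equals $\frac{x^2}{2(v+|x|/3)}$, while the linear term $\lambda x$ equals $\frac{x^2}{v+|x|/3}$. Their difference is
\begin{align*}
\frac{x^2}{2(v+|x|/3)}=\frac{(p-q)^2}{2q(1-q)+\frac{2}{3}|p-q|},
\end{align*}
which is the claimed bound.

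The only delicate point is the Bernstein moment step, where the sign of $\lambda$ must be handled through $|\lambda|$ so that both $p>q$ and $p<q$ are covered at once. The remaining calculations are routine.
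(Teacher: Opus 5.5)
Your proof is correct. There is nothing in the paper to compare it against: the paper gives no argument for this lemma and simply defers to Liu et al.\ (their Appendix F.1).

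What you provide is a self-contained replacement built from three steps:
\begin{itemize}
\item the tilting (Donsker--Varadhan) bound $\mathrm{KL}(\mathrm{Ber}(p)\,\|\,\mathrm{Ber}(q)) \ge \lambda(p-q) - \ln \mathbb{E}\,e^{\lambda(X-q)}$, valid for every $\lambda \in \bR$;
\item Bernstein's moment bound $\ln \mathbb{E}\,e^{\lambda Y} \le \frac{v\lambda^2}{2(1-|\lambda|/3)}$ for the centered Bernoulli, obtained from $\mathbb{E}|Y|^m \le v$ and $m! \ge 2\cdot 3^{m-2}$;
\item the explicit choice $\lambda = x/(v+|x|/3)$, which I checked returns exactly $x^2/(2(v+|x|/3))$.
\end{itemize}

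Compared with a bare citation, your route has several advantages:
\begin{itemize}
\item It makes the constants transparent: the denominator is twice the variance $q(1-q)$ plus $\tfrac{2}{3}$ of the range bound times $|p-q|$.
\item It treats both signs of $p-q$ at once.
\item It covers $p\in\{0,1\}$, because you only use the one-sided inequality.
\item It is easy to sharpen. With Bennett's one-sided bound ($Y \le 1-q$, resp.\ $-Y \le q$), the same argument replaces $\tfrac{2}{3}|p-q|$ by $\tfrac{2}{3}(1-q)(p-q)$ when $p>q$, and by $\tfrac{2}{3}q(q-p)$ when $p<q$.
\end{itemize}

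Two wording points, neither affecting validity:
\begin{itemize}
\item The supremum is attained at $e^{\lambda} = \frac{p(1-q)}{q(1-p)}$ only when $p \in (0,1)$. For $p \in \{0,1\}$ it is a limit as $\lambda \to \pm\infty$. This is harmless, since you only need the direction ``$\ge$''.
\item For $q \in \{0,1\}$ with $p = q$, the right-hand side is genuinely $0/0$, so a convention is needed there. This is irrelevant for the paper, which applies the lemma only with $q = \alpha \in (0,1)$.
\end{itemize}
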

\begin{proof} See \citet{liu2026online} Appendix F.1.
\end{proof}

\newpage
\section{\label{supp:experiments}Additional Experiments}
\setcounter{table}{0}
\subsection{\label{supp:additional_synthetic_results} Additional synthetic experiment results}

{\bf Results for $k=25$ number of groups.}

\begin{figure}[h!]
\centering
\begin{subfigure}[t]{\textwidth}
  \begin{subfigure}[t]{0.30\textwidth}
    \centering
    \includegraphics[width=\linewidth]{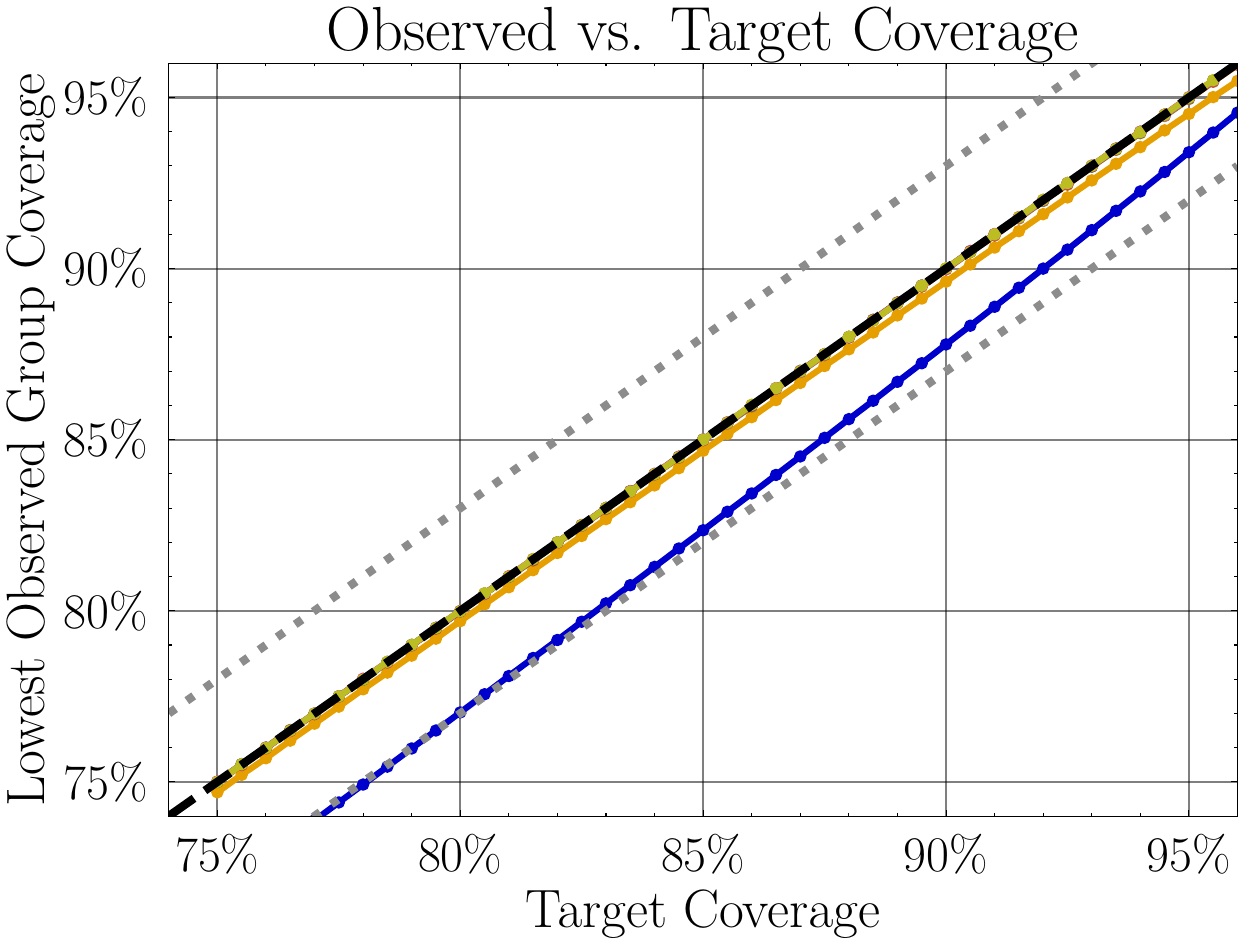}
  \end{subfigure}\hfill
  \begin{subfigure}[t]{0.30\textwidth}
    \centering
    \includegraphics[width=\linewidth]{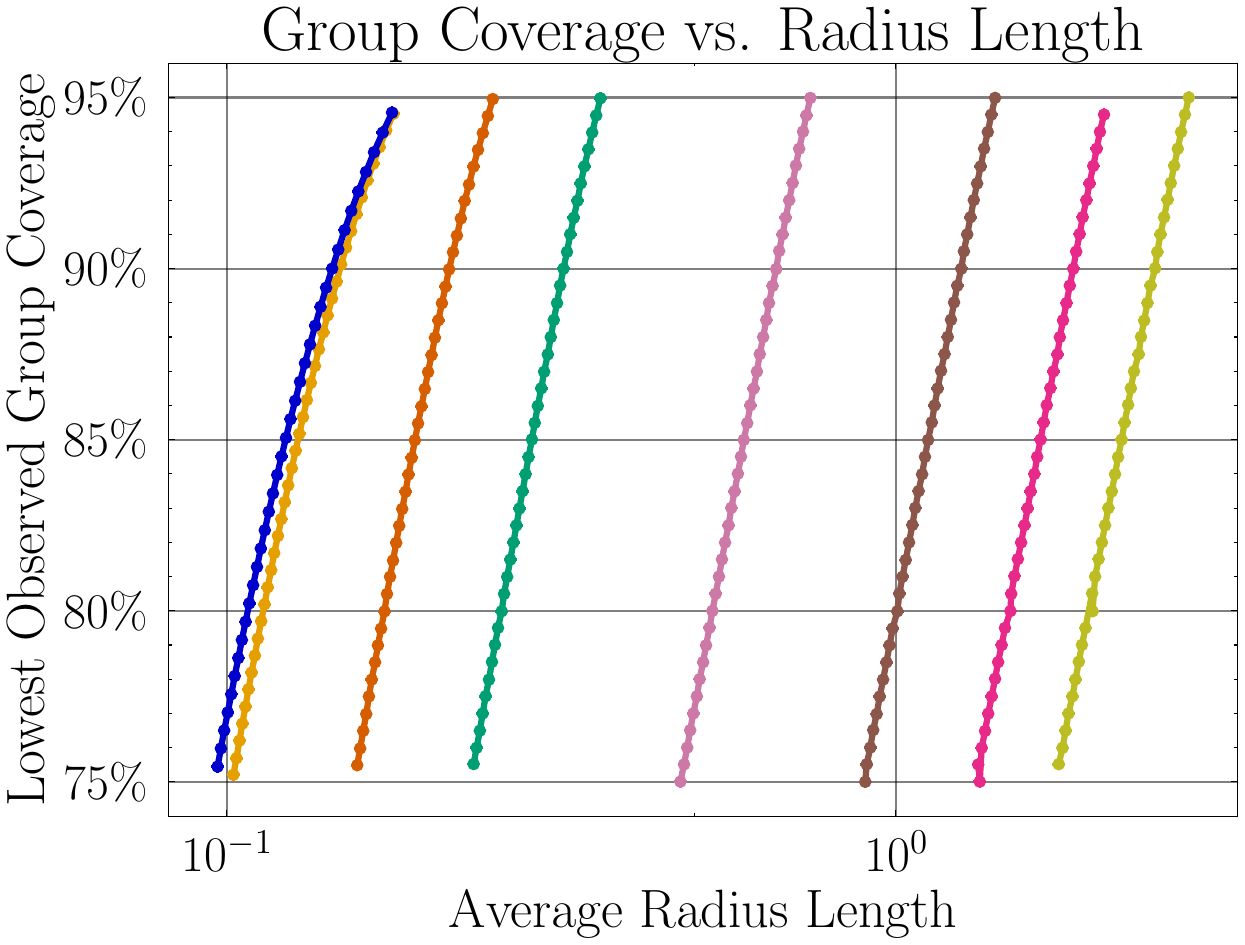}
  \end{subfigure}\hfill
  \begin{subfigure}[t]{0.30\textwidth}
    \centering
    \includegraphics[width=\linewidth]{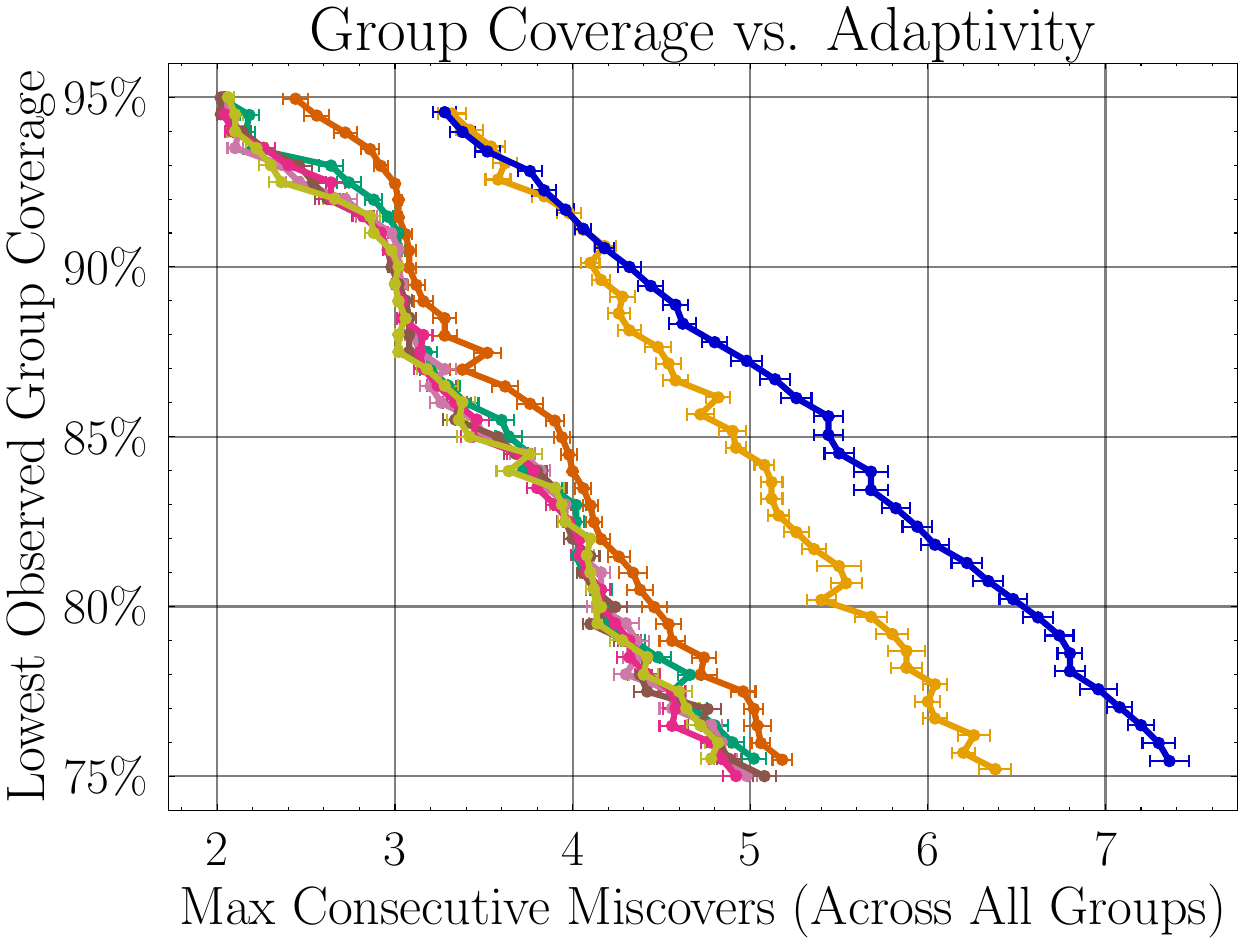}
  \end{subfigure}\hfill
  \caption{Results for synthetic setting with bounded, gradually varying scores.}
\end{subfigure}
\vspace{0.5em}

\begin{subfigure}[t]{\textwidth}
  \begin{subfigure}[t]{0.30\textwidth}
    \centering
    \includegraphics[width=\linewidth]{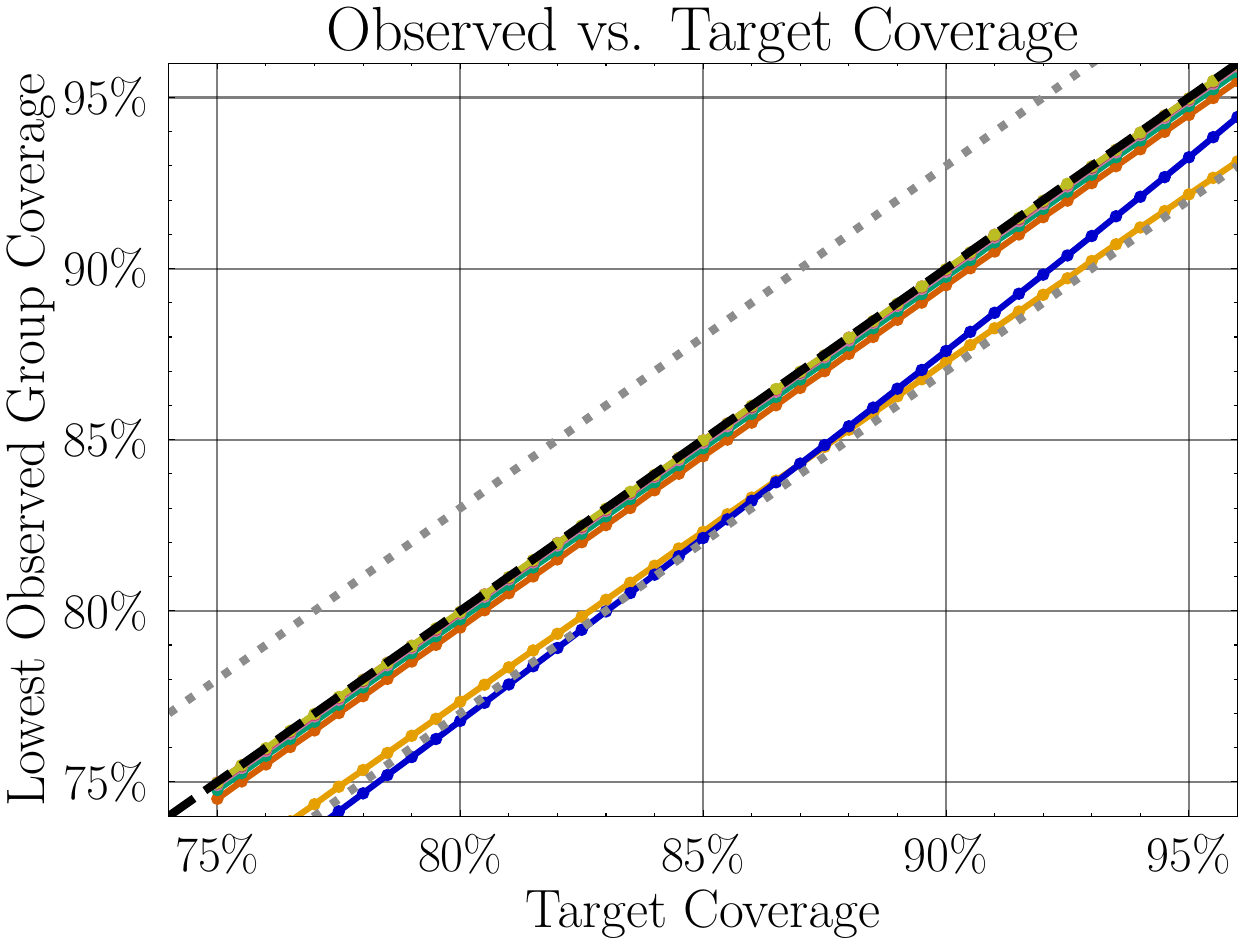}
  \end{subfigure}\hfill
  \begin{subfigure}[t]{0.30\textwidth}
    \centering
    \includegraphics[width=\linewidth]{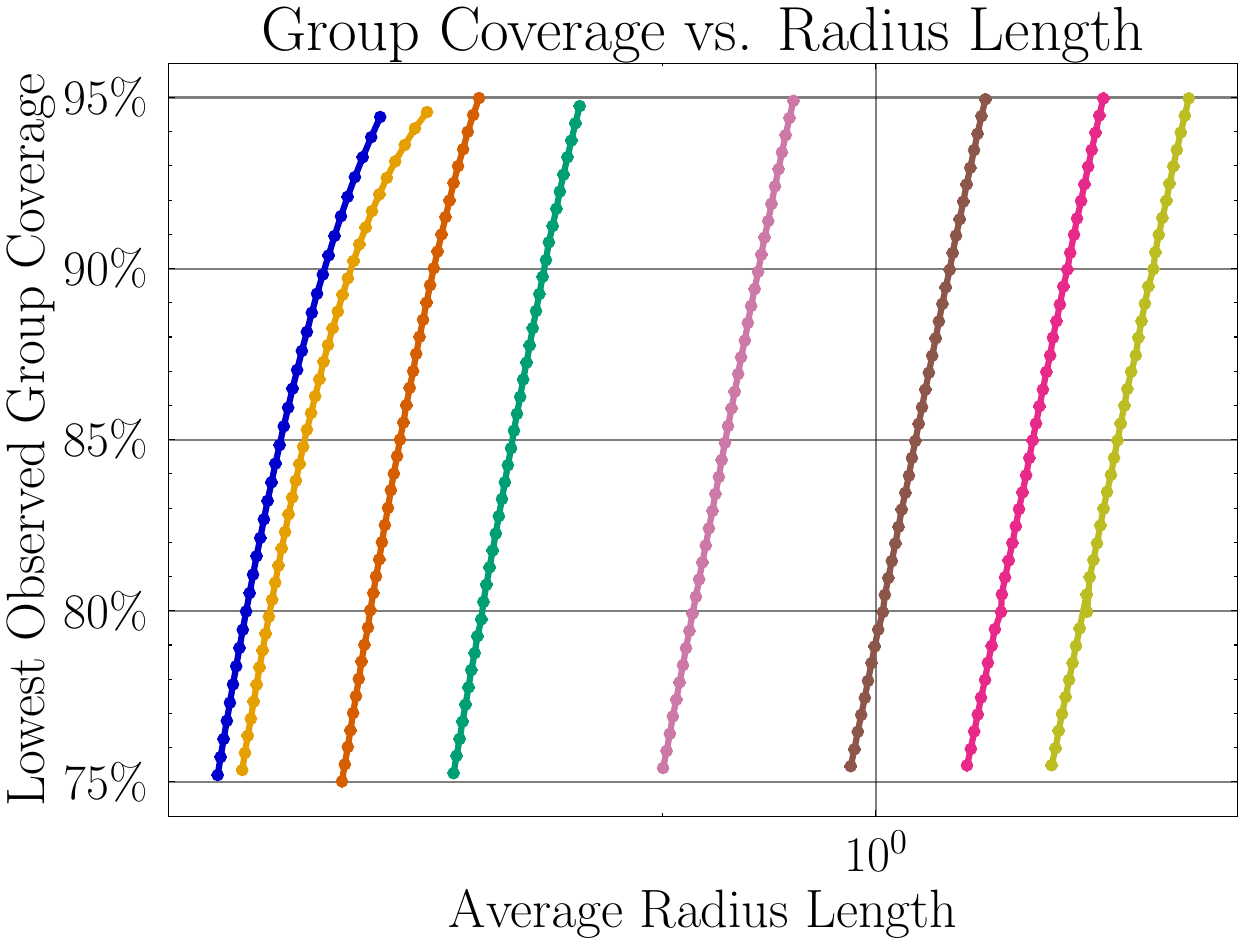}
  \end{subfigure}\hfill
  \begin{subfigure}[t]{0.30\textwidth}
    \centering
    \includegraphics[width=\linewidth]{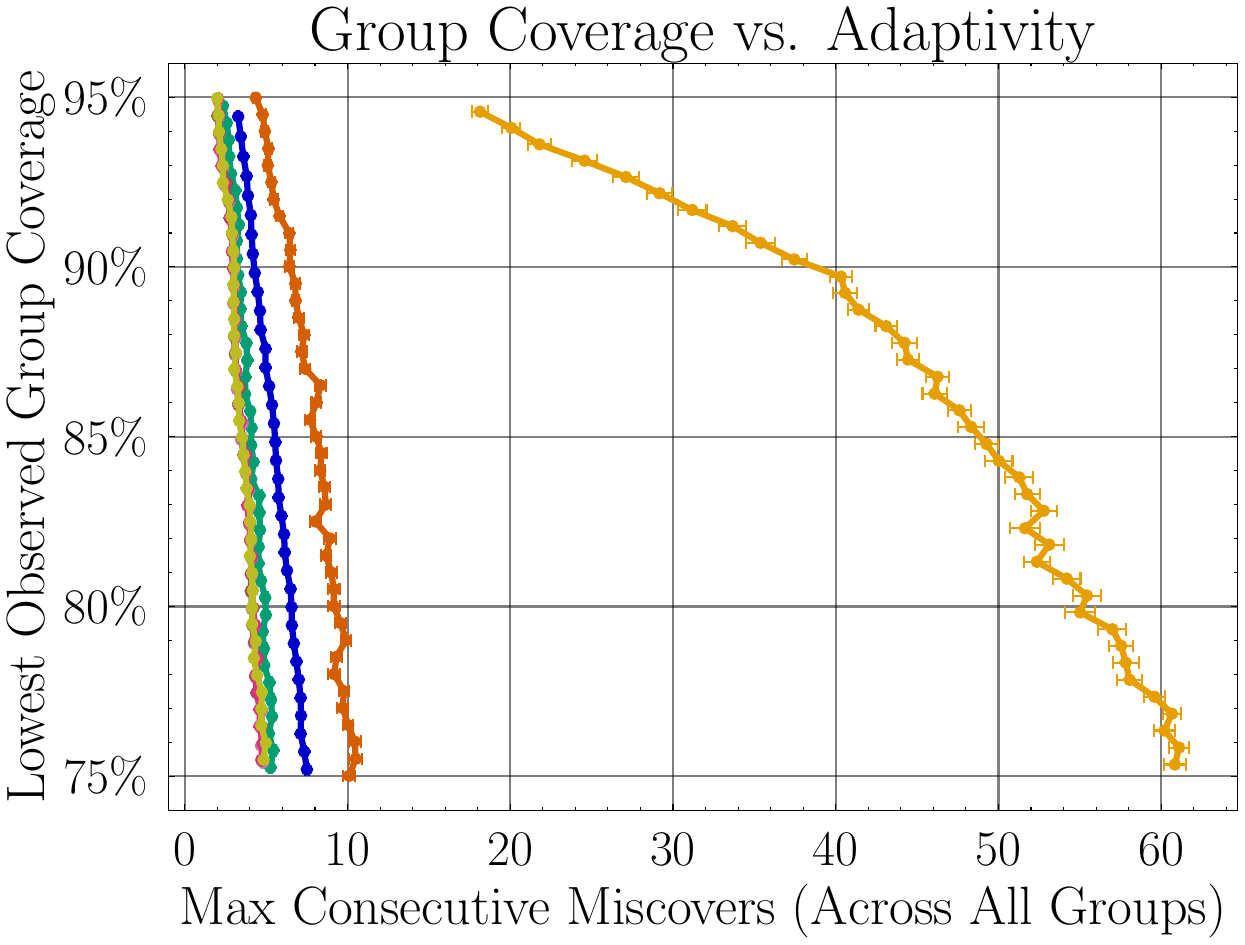}
  \end{subfigure}\hfill
  \caption{Results for synthetic setting with bounded scores with a changepoint. }
\end{subfigure}
\vspace{0.5em}

\centering
\includegraphics[width=1\textwidth]{figures/arxiv/synthetic/legend.pdf}
\vspace{-1.5em}

\caption{Algorithms across three performance criteria for target coverage levels $1-\alpha \in [0.75,0.99]$. {\bf Left:} lowest observed group coverage rate vs. target coverage rate. Dashed line denotes perfect performance, dotted lines show \(\pm 3\%\) tolerance band. {\bf Middle-Right}: Pareto frontiers (curves closer to the top-left indicate better performance; results for observed coverage rates in $[0.75, 0.95]$ are displayed). {\bf Middle:} lowest achieved group coverage rate vs. average radius length. {\bf Right:} lowest achieved group coverage rate vs. maximum length of consecutive miscovers (across groups). Error bars represent the standard error of the mean (SEM) of the variable along the \(x\)-axis.}
\end{figure}
\newpage

{\bf Results for $k=25$ number of groups (continued).}

\begin{figure}[h!]
\centering


\begin{subfigure}[t]{\textwidth}
  \begin{subfigure}[t]{0.30\textwidth}
    \centering
    \includegraphics[width=\linewidth]{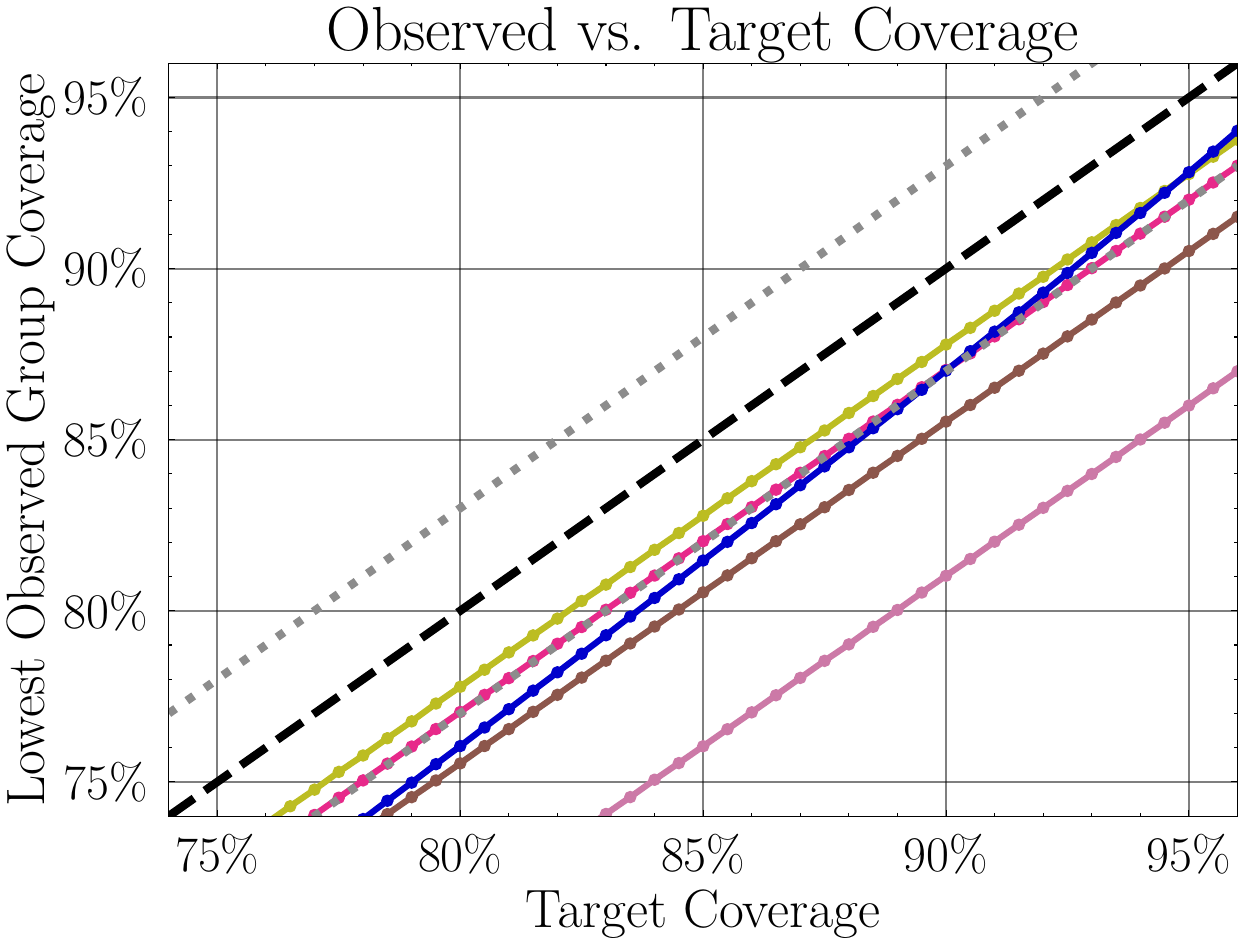}
  \end{subfigure}\hfill
  \begin{subfigure}[t]{0.30\textwidth}
    \centering
    \includegraphics[width=\linewidth]{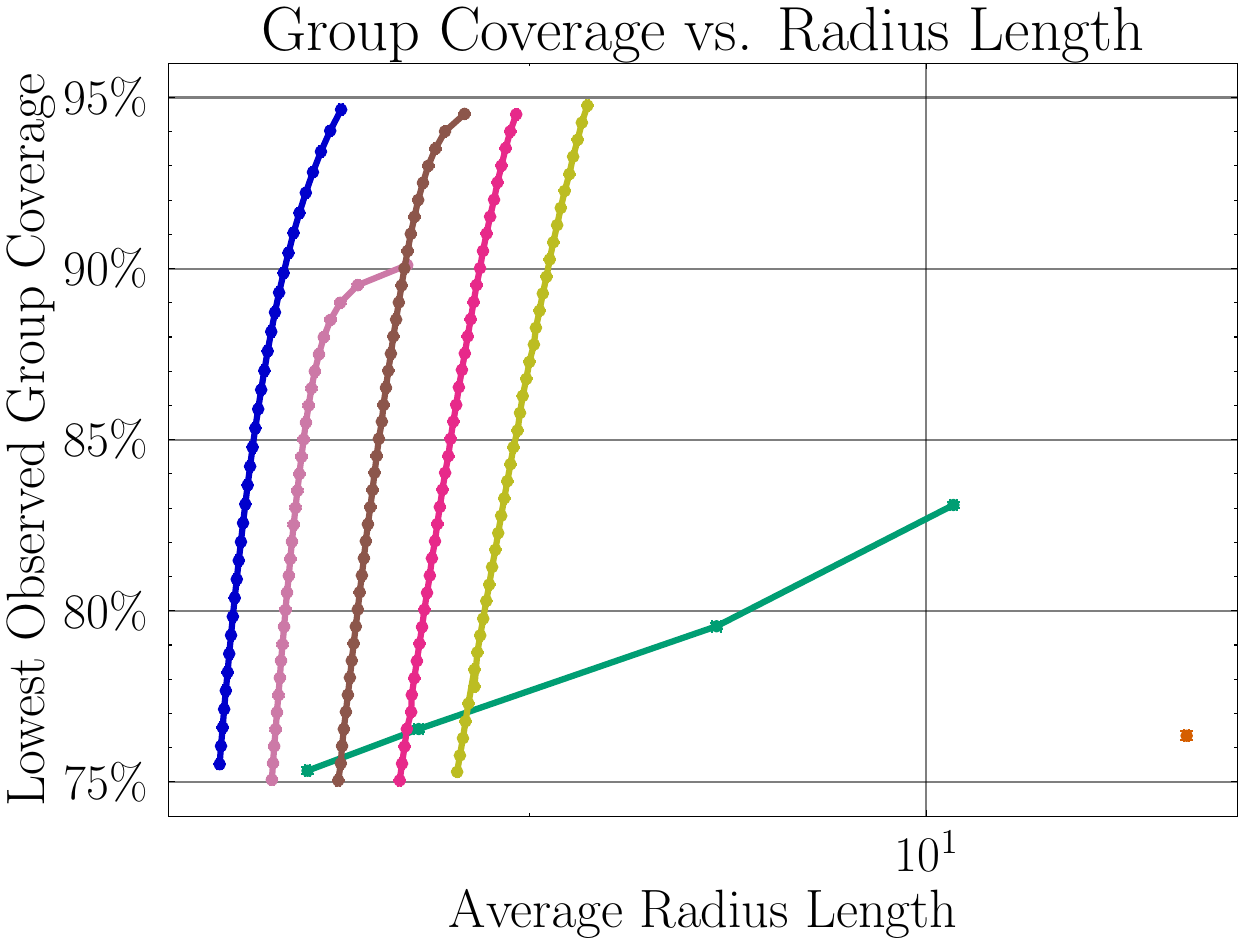}
  \end{subfigure}\hfill
  \begin{subfigure}[t]{0.30\textwidth}
    \centering
    \includegraphics[width=\linewidth]{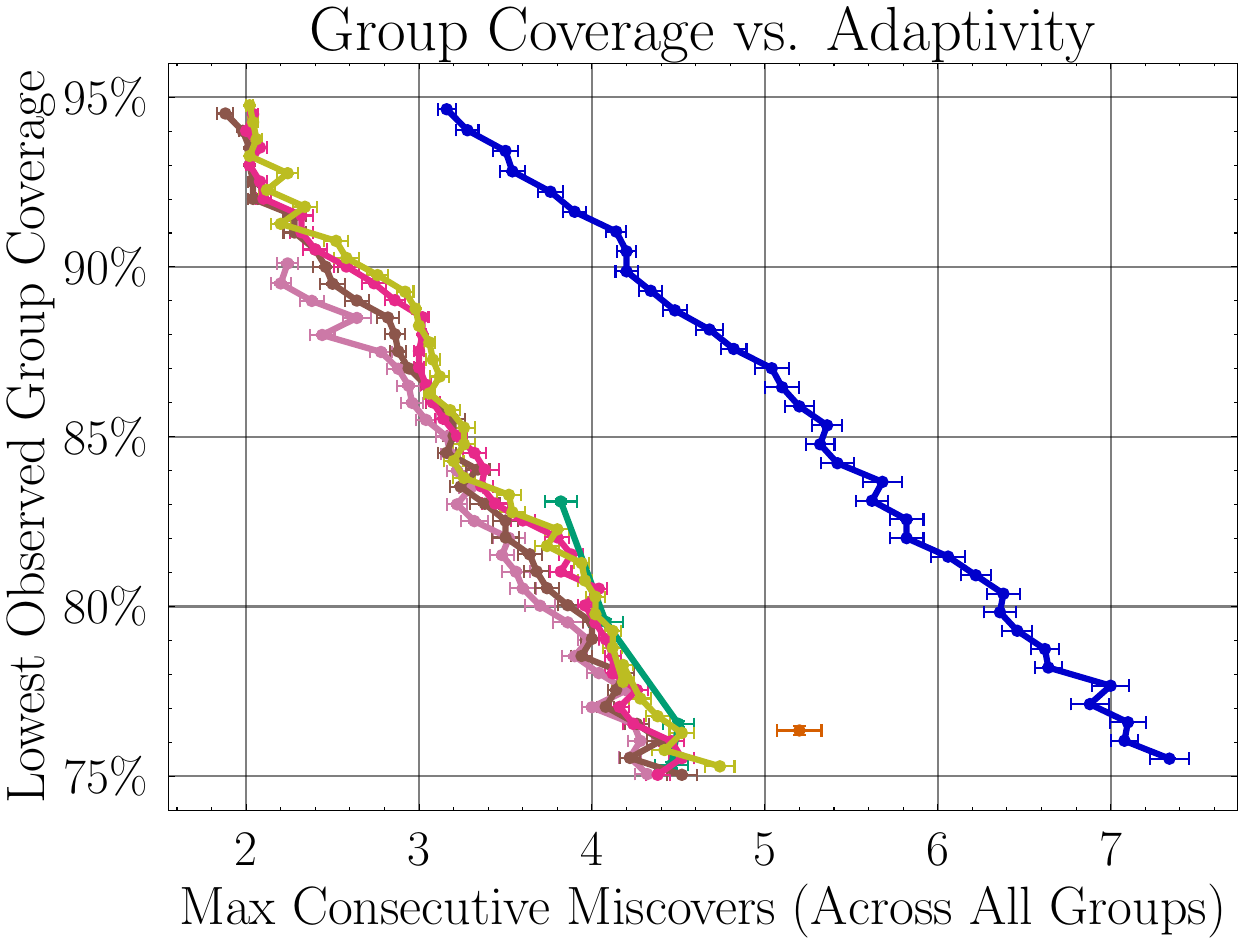}
  \end{subfigure}\hfill
  \caption{Results for synthetic setting with unbounded, growing scores ($A=5$ in \eqref{eq:quad_growth}).}
\end{subfigure}
\vspace{0.5em}

\begin{subfigure}[t]{\textwidth}
  \begin{subfigure}[t]{0.30\textwidth}
    \centering
    \includegraphics[width=\linewidth]{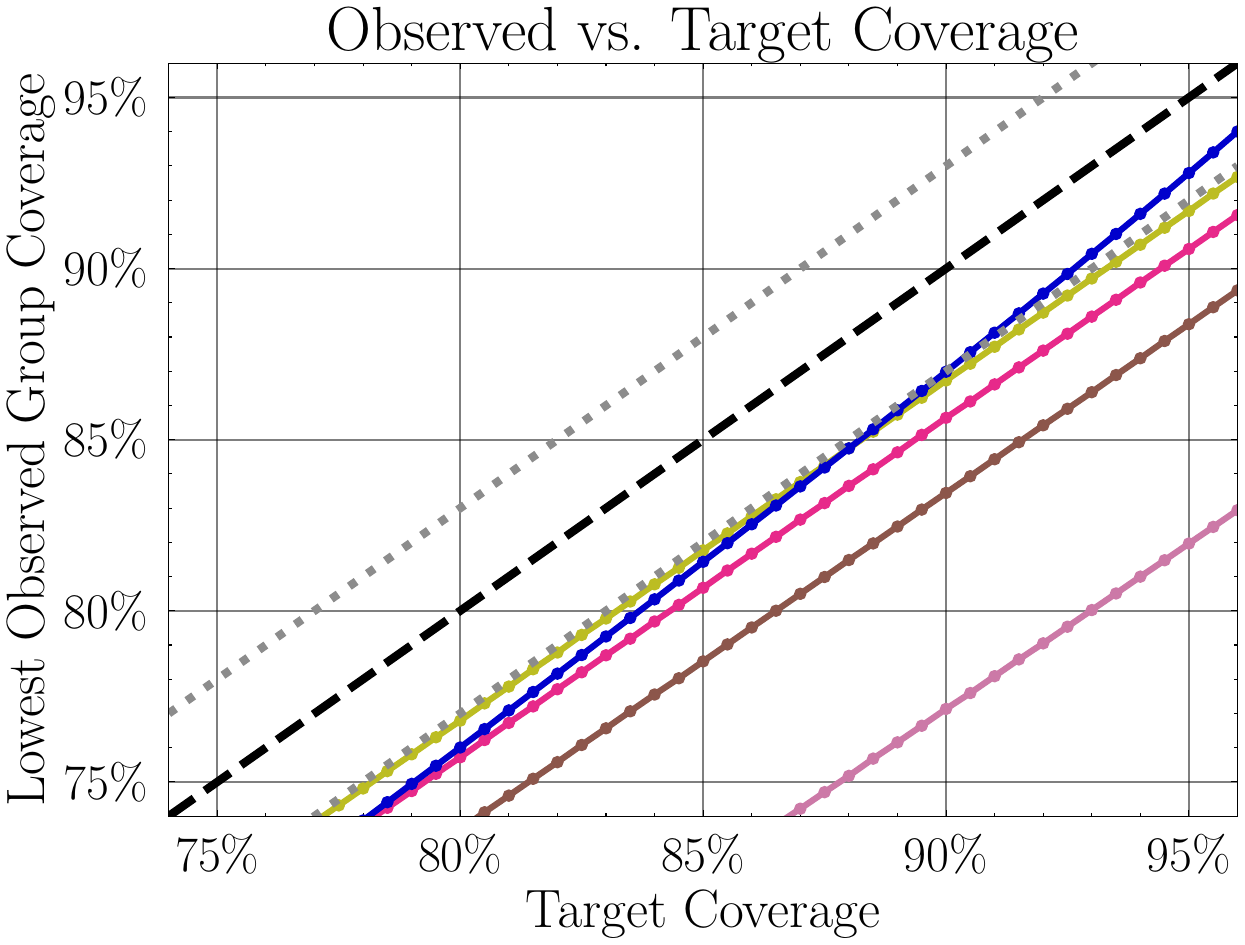}
  \end{subfigure}\hfill
  \begin{subfigure}[t]{0.30\textwidth}
    \centering
    \includegraphics[width=\linewidth]{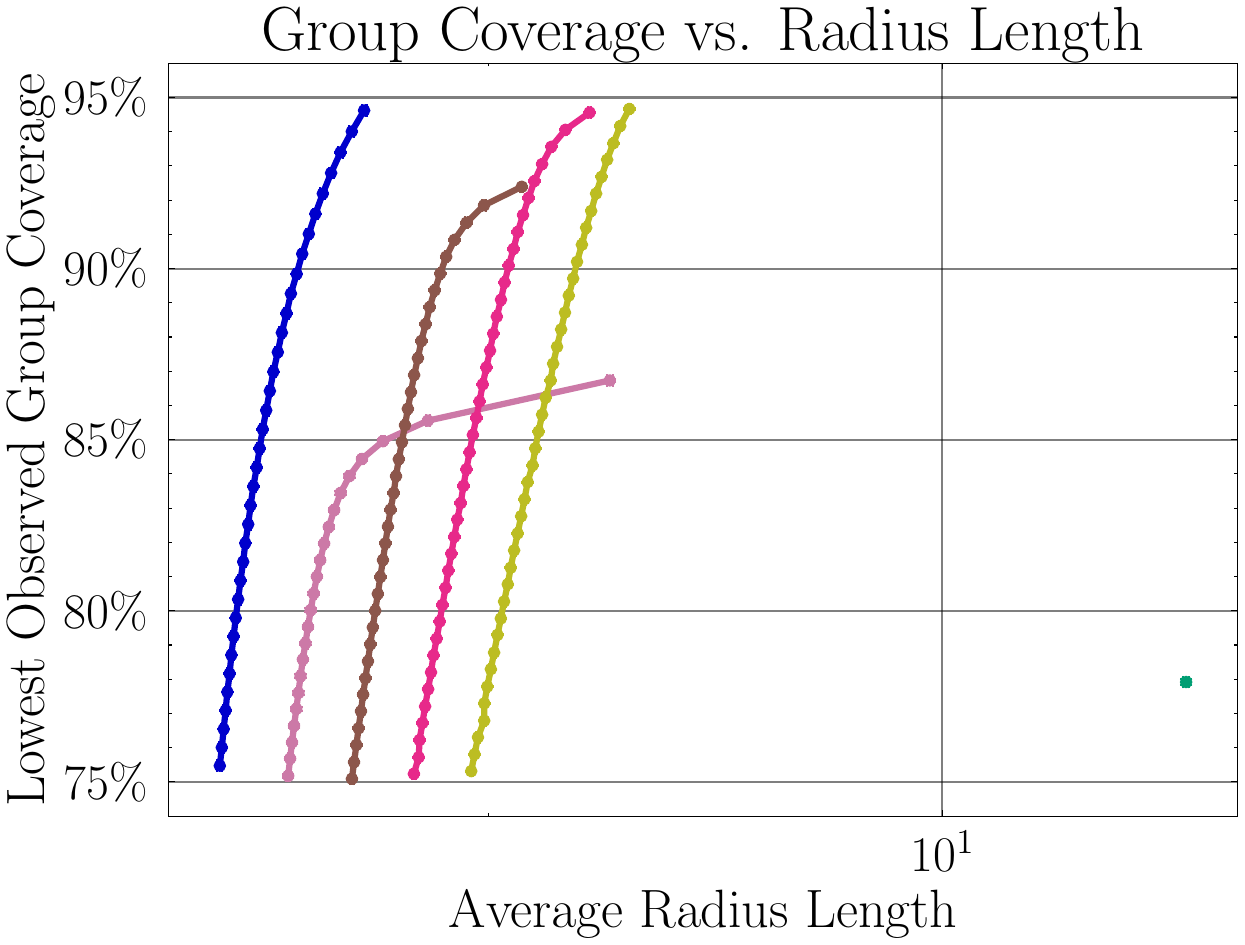}
  \end{subfigure}\hfill
  \begin{subfigure}[t]{0.30\textwidth}
    \centering
    \includegraphics[width=\linewidth]{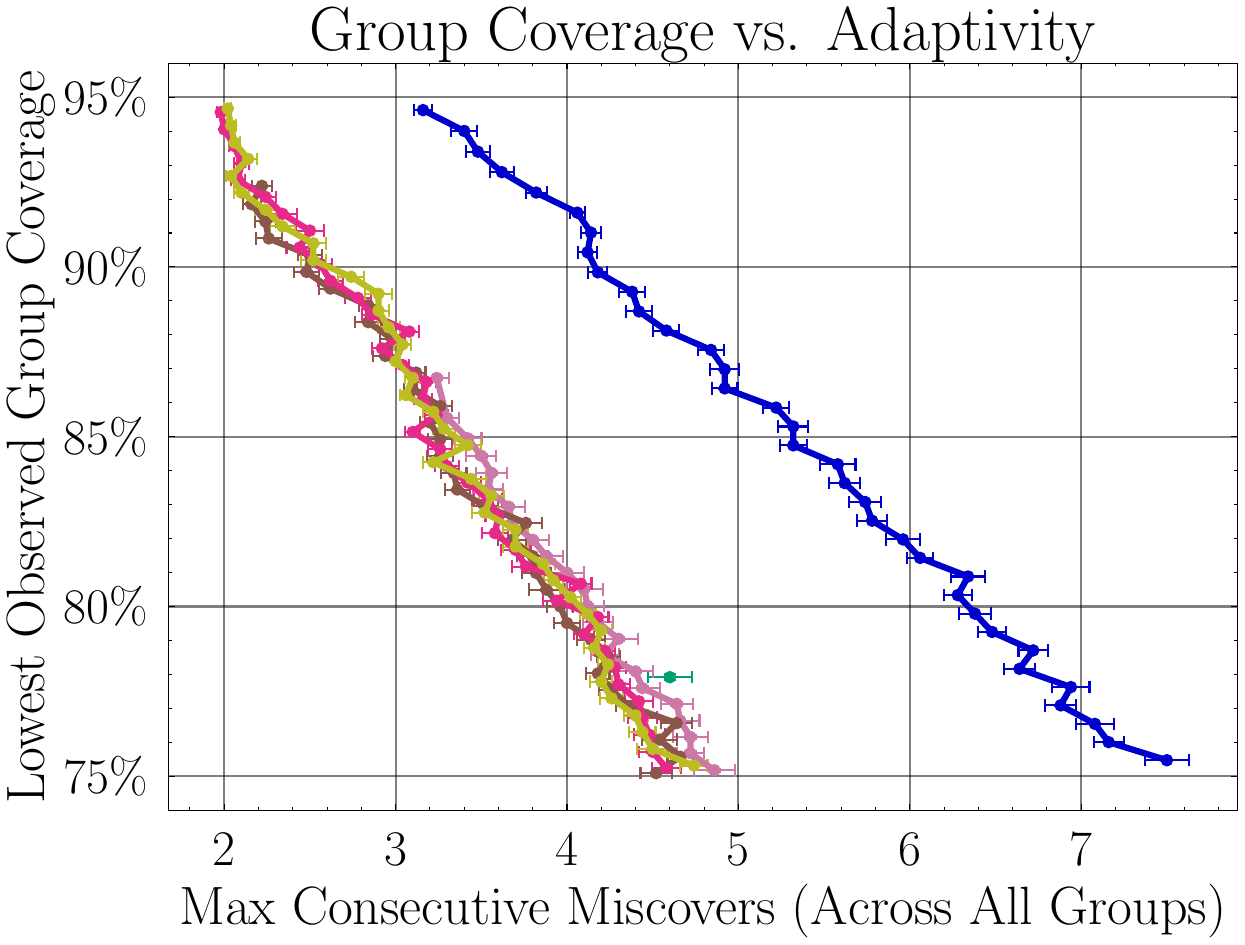}
  \end{subfigure}\hfill
  \caption{Results for synthetic setting with unbounded, growing scores ($A=25$ in \eqref{eq:quad_growth}).}
\end{subfigure}
\vspace{0.5em}

\begin{subfigure}[t]{\textwidth}
  \begin{subfigure}[t]{0.30\textwidth}
    \centering
    \includegraphics[width=\linewidth]{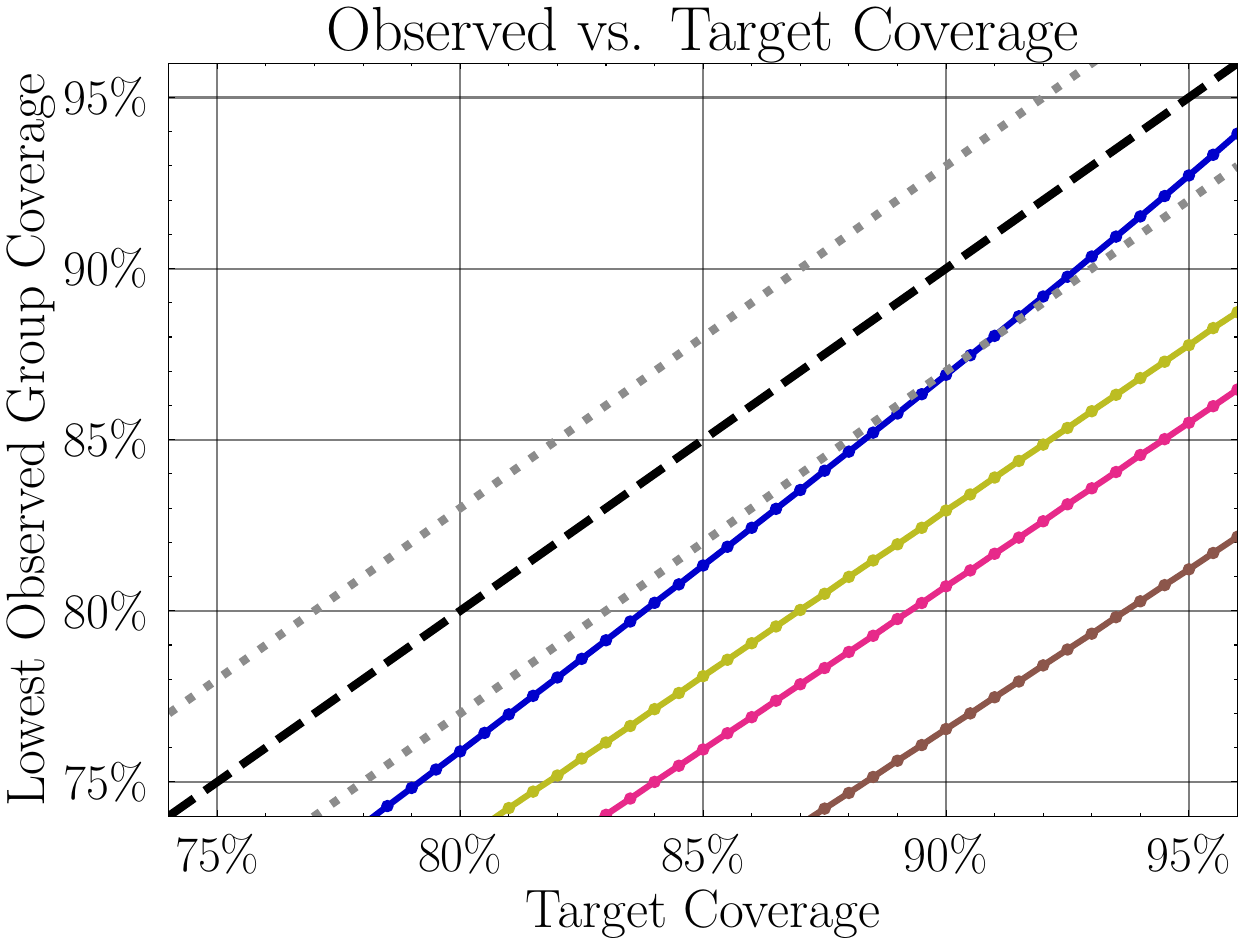}
  \end{subfigure}\hfill
  \begin{subfigure}[t]{0.30\textwidth}
    \centering
    \includegraphics[width=\linewidth]{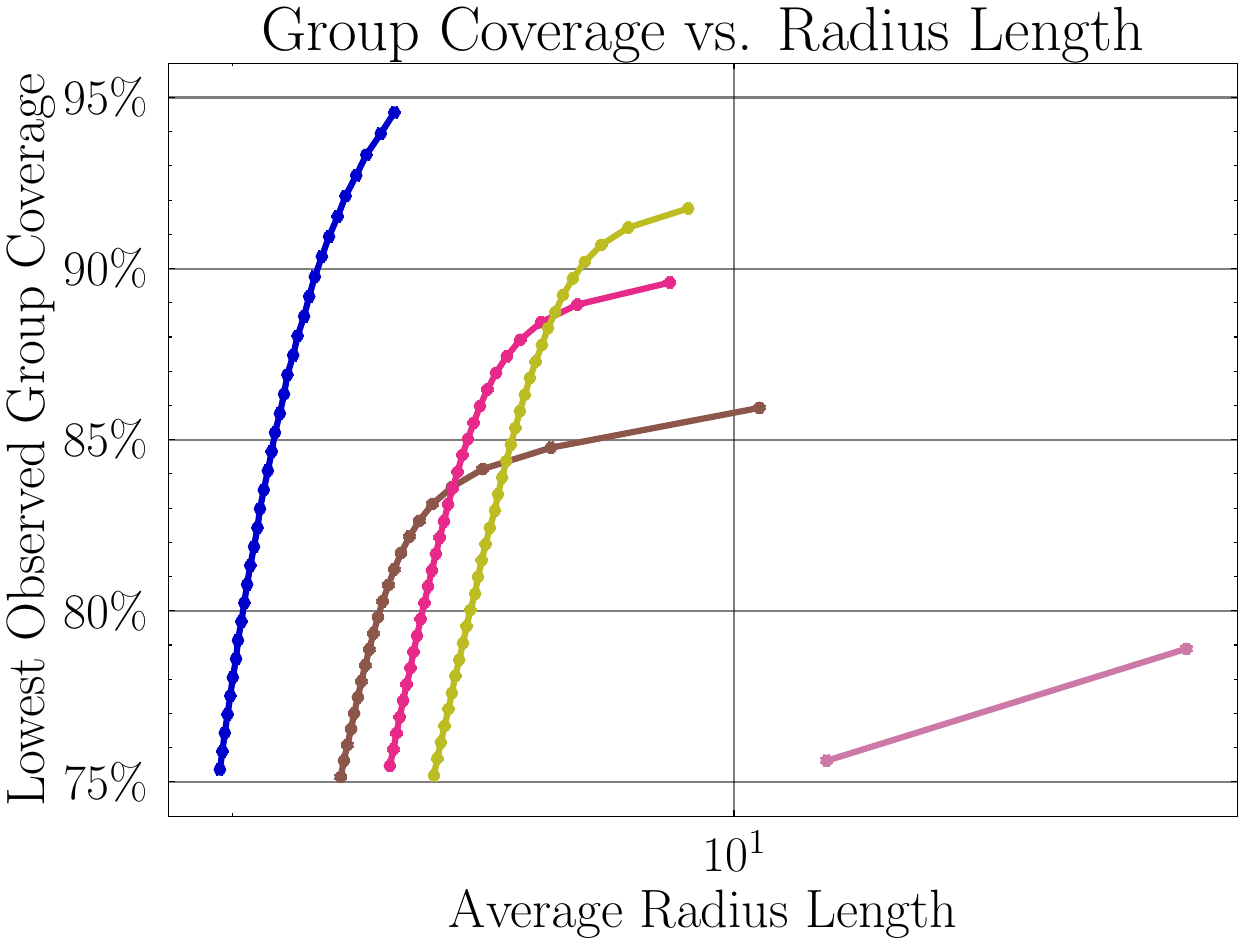}
  \end{subfigure}\hfill
  \begin{subfigure}[t]{0.30\textwidth}
    \centering
    \includegraphics[width=\linewidth]{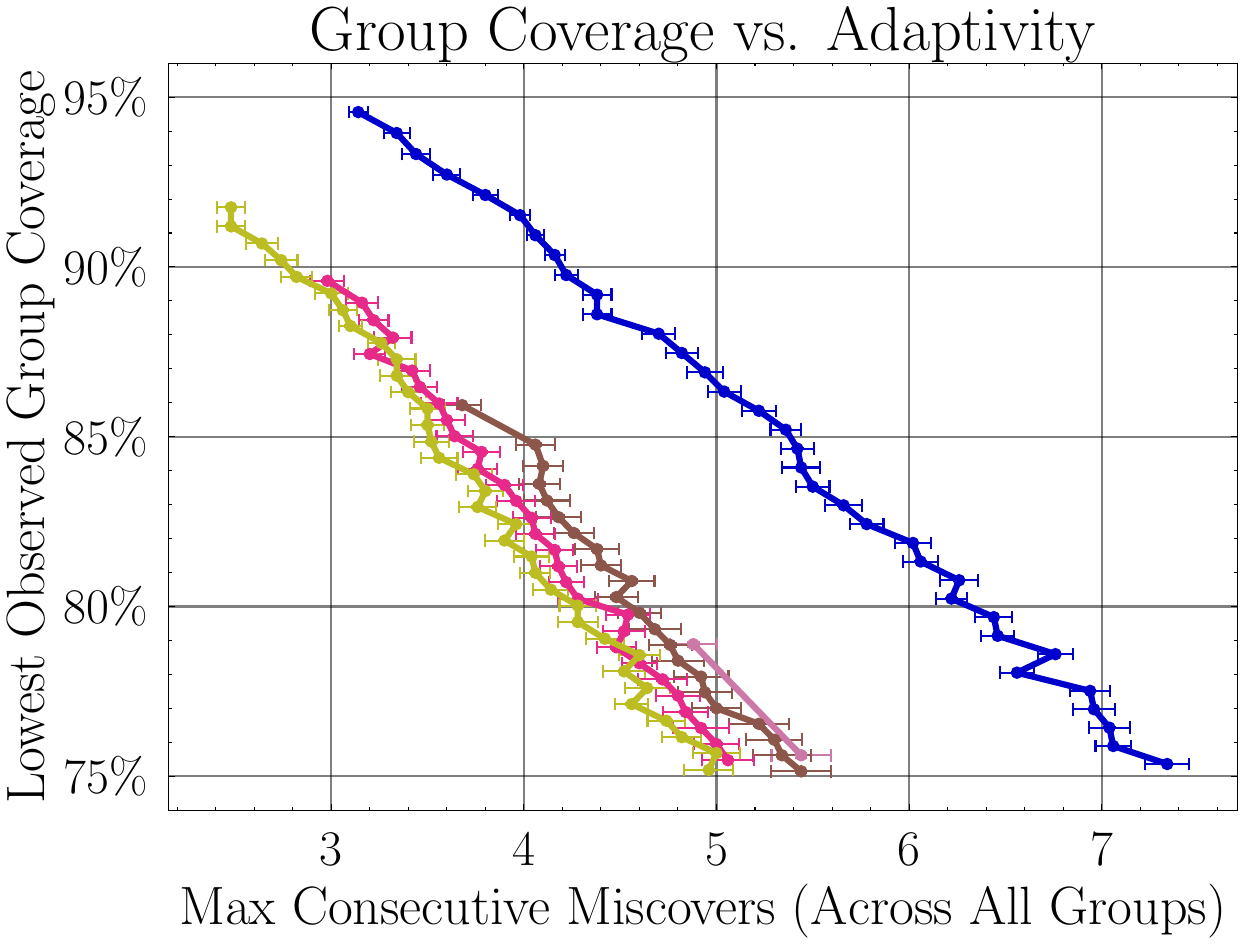}
  \end{subfigure}\hfill
  \caption{Results for synthetic setting with unbounded, growing scores ($A=100$ in \eqref{eq:quad_growth}).}
\end{subfigure}
\vspace{0.5em}

\centering
\includegraphics[width=1\textwidth]{figures/arxiv/synthetic/legend.pdf}
\vspace{-1.5em}

\caption{Algorithms across three performance criteria for target coverage levels $1-\alpha \in [0.75,0.99]$. {\bf Left:} lowest observed group coverage rate vs. target coverage rate. Dashed line denotes perfect performance, dotted lines show \(\pm 3\%\) tolerance band. {\bf Middle-Right}: Pareto frontiers (curves closer to the top-left indicate better performance; results for observed coverage rates in $[0.75, 0.95]$ are displayed). {\bf Middle:} lowest achieved group coverage rate vs. average radius length. {\bf Right:} lowest achieved group coverage rate vs. maximum length of consecutive miscovers (across groups). Error bars represent the standard error of the mean (SEM) of the variable along the \(x\)-axis.}
\end{figure}

\newpage

{\bf Additional results for $k=50$ number of groups.}

\begin{figure}[h!]
\centering


\begin{subfigure}[t]{\textwidth}
  \begin{subfigure}[t]{0.30\textwidth}
    \centering
    \includegraphics[width=\linewidth]{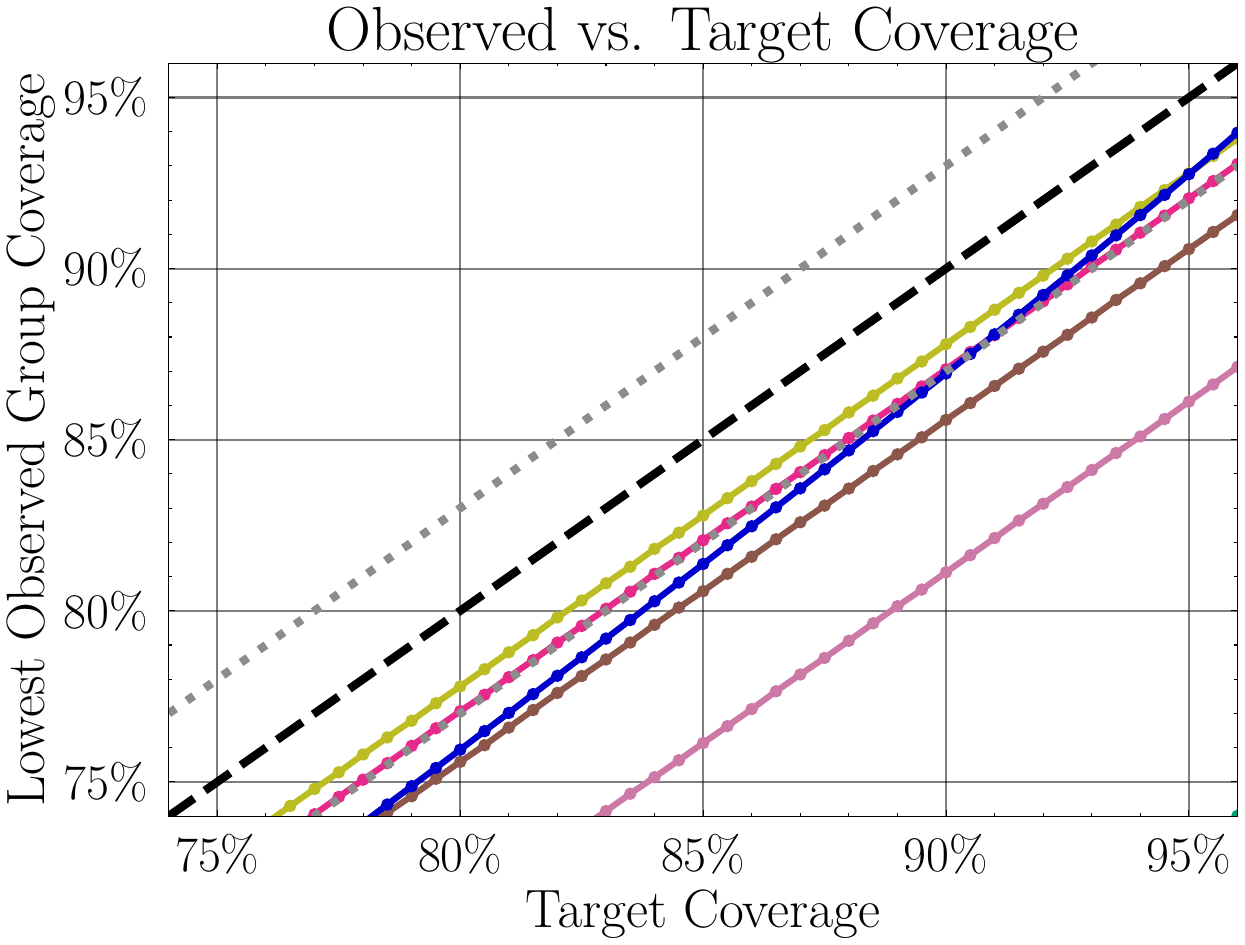}
  \end{subfigure}\hfill
  \begin{subfigure}[t]{0.30\textwidth}
    \centering
    \includegraphics[width=\linewidth]{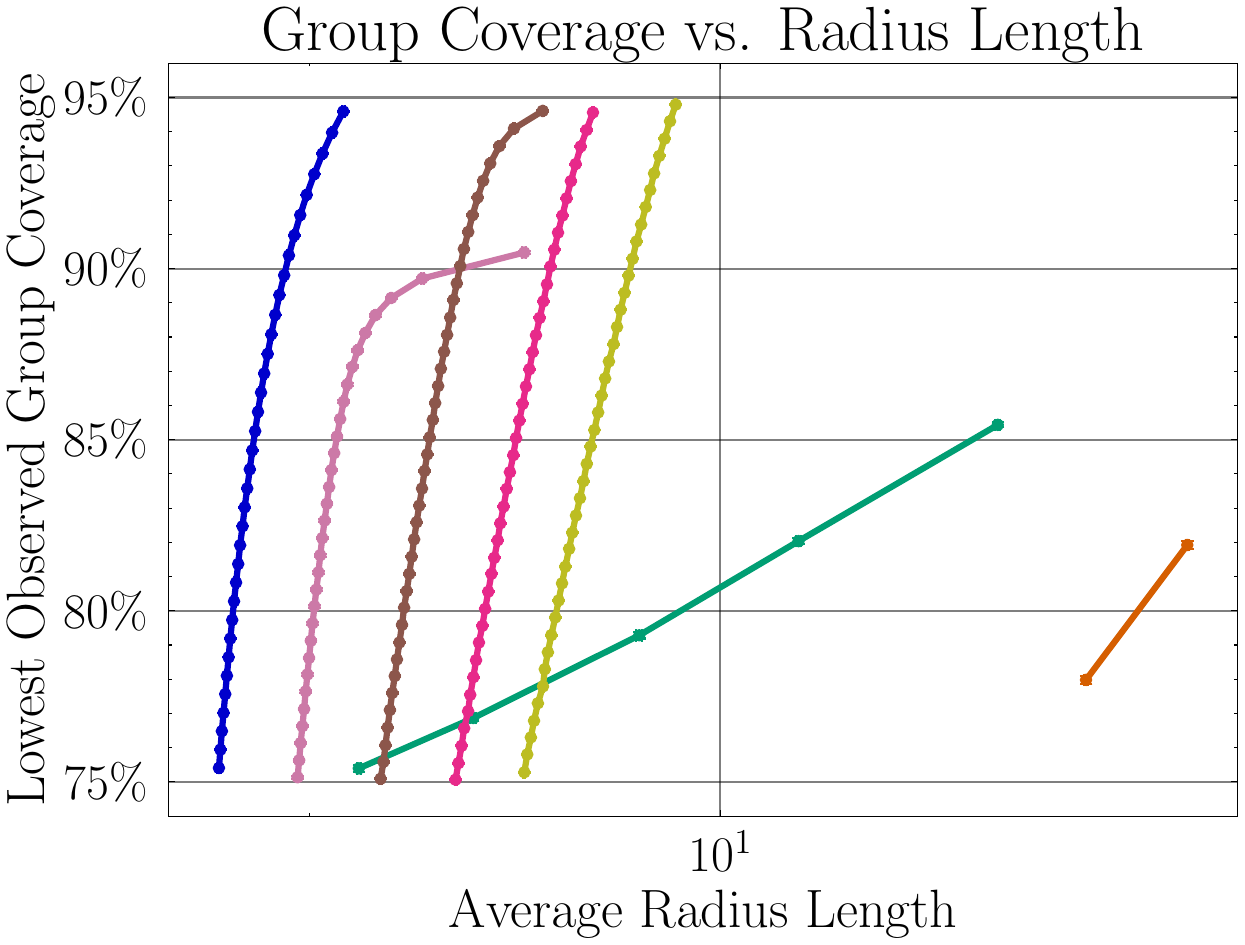}
  \end{subfigure}\hfill
  \begin{subfigure}[t]{0.30\textwidth}
    \centering
    \includegraphics[width=\linewidth]{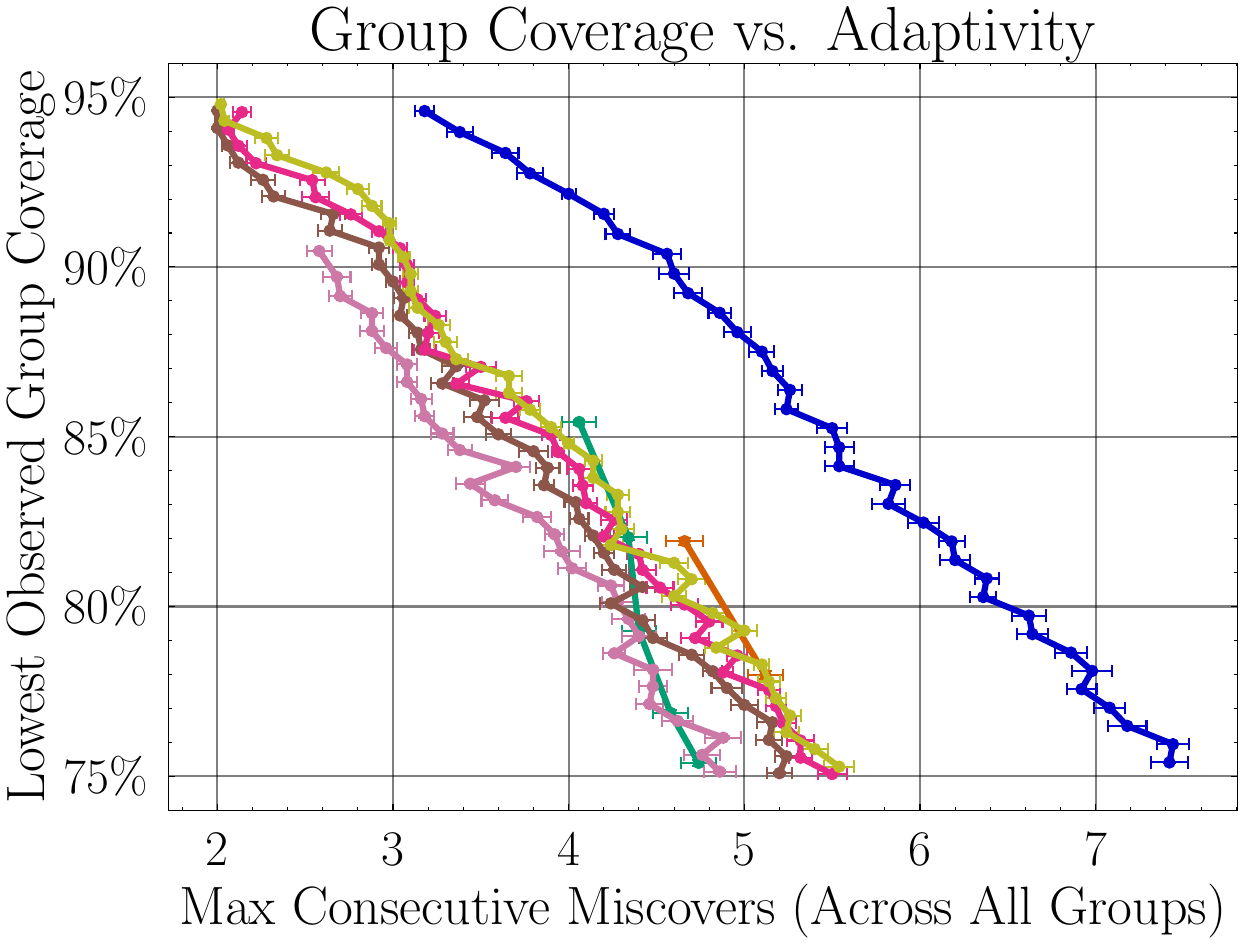}
  \end{subfigure}\hfill
  \caption{Results for synthetic setting with unbounded, growing scores ($A=5$ in \eqref{eq:quad_growth}).}
\end{subfigure}
\vspace{0.5em}

\begin{subfigure}[t]{\textwidth}
  \begin{subfigure}[t]{0.30\textwidth}
    \centering
    \includegraphics[width=\linewidth]{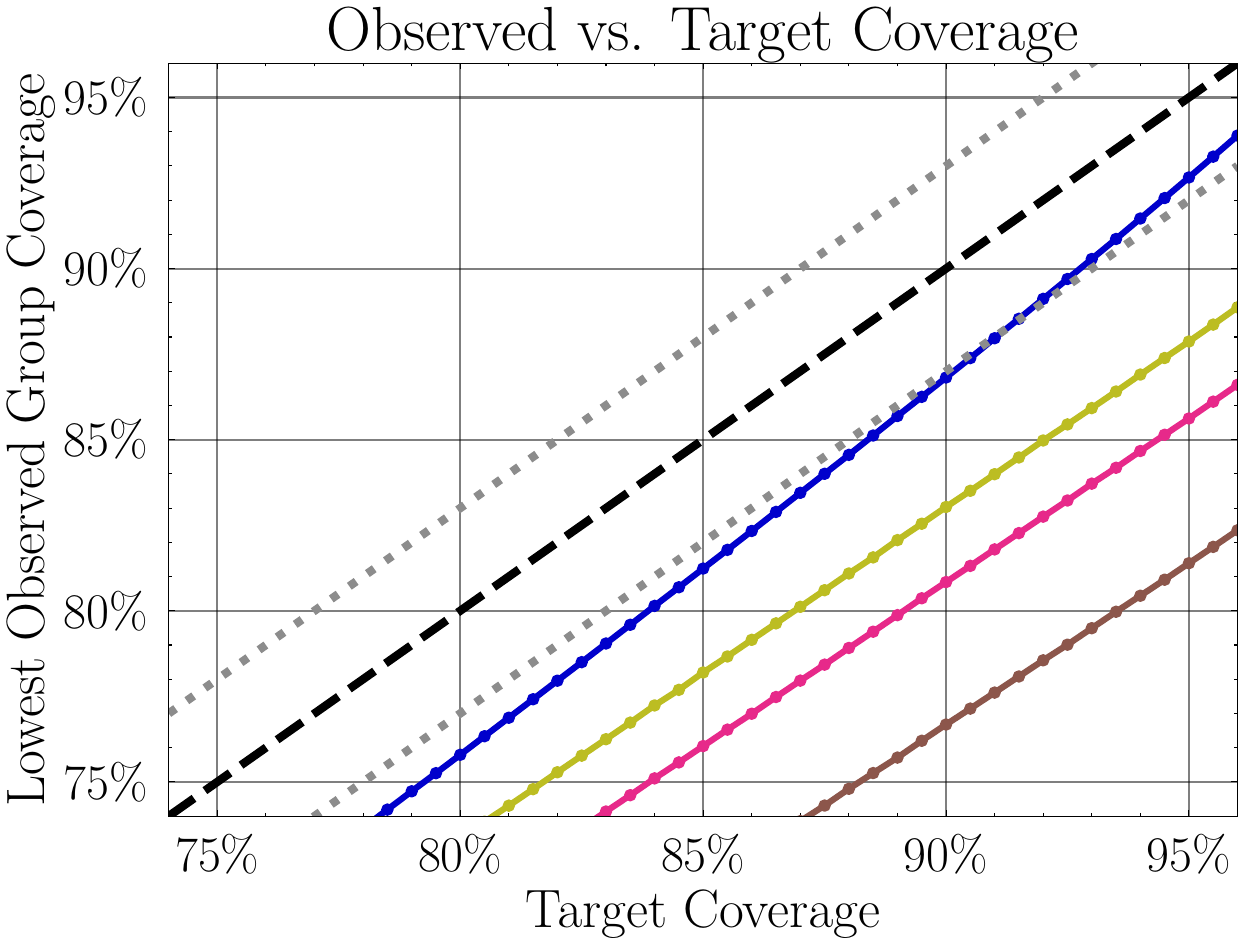}
  \end{subfigure}\hfill
  \begin{subfigure}[t]{0.30\textwidth}
    \centering
    \includegraphics[width=\linewidth]{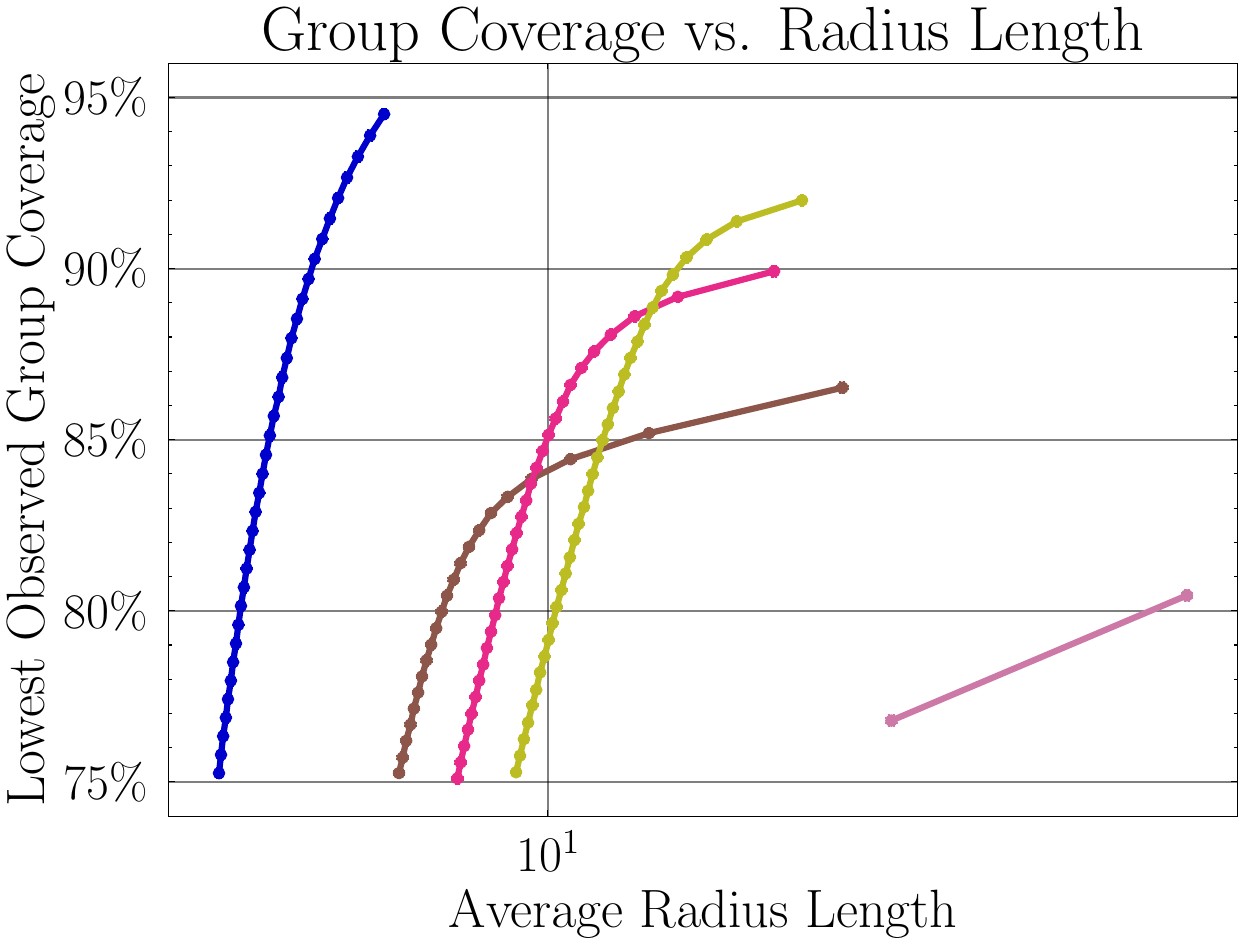}
  \end{subfigure}\hfill
  \begin{subfigure}[t]{0.30\textwidth}
    \centering
    \includegraphics[width=\linewidth]{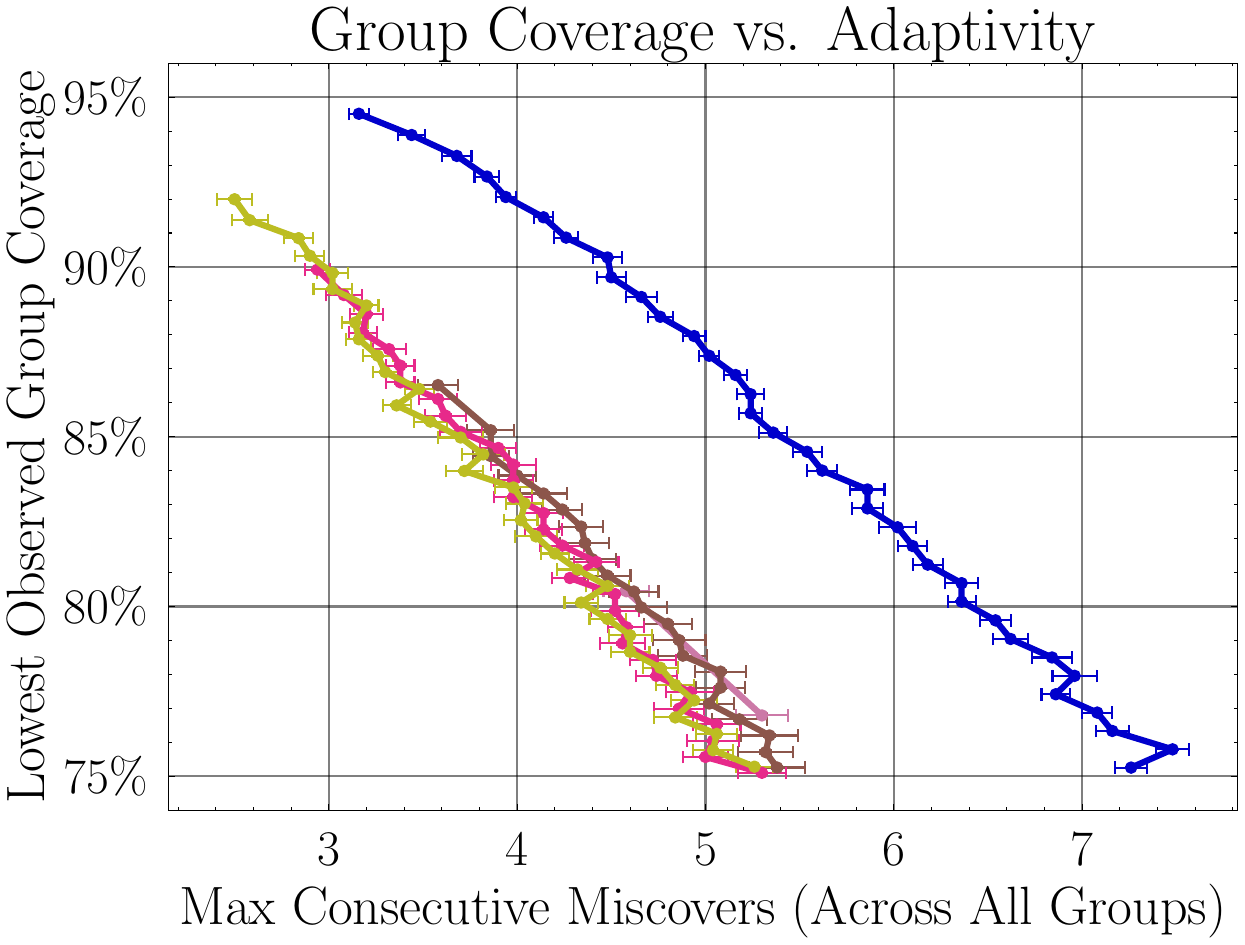}
  \end{subfigure}\hfill
  \caption{Results for synthetic setting with unbounded, growing scores ($A=5$ in \eqref{eq:quad_growth}).}
\end{subfigure}
\vspace{0.5em}

\centering
\includegraphics[width=1\textwidth]{figures/arxiv/synthetic/legend.pdf}
\vspace{-1.5em}

\caption{Algorithms across three performance criteria for target coverage levels $1-\alpha \in [0.75,0.99]$. {\bf Left:} lowest observed group coverage rate vs. target coverage rate. Dashed line denotes perfect performance, dotted lines show \(\pm 3\%\) tolerance band. {\bf Middle-Right}: Pareto frontiers (curves closer to the top-left indicate better performance; results for observed coverage rates in $[0.75, 0.95]$ are displayed). {\bf Middle:} lowest achieved group coverage rate vs. average radius length. {\bf Right:} lowest achieved group coverage rate vs. maximum length of consecutive miscovers (across groups). Error bars represent the standard error of the mean (SEM) of the variable along the \(x\)-axis.}
\end{figure}

\newpage

{\bf Results for $k=100$ number of groups.}
\begin{figure}[h!]
\centering
\begin{subfigure}[t]{\textwidth}
  \begin{subfigure}[t]{0.30\textwidth}
    \centering
    \includegraphics[width=\linewidth]{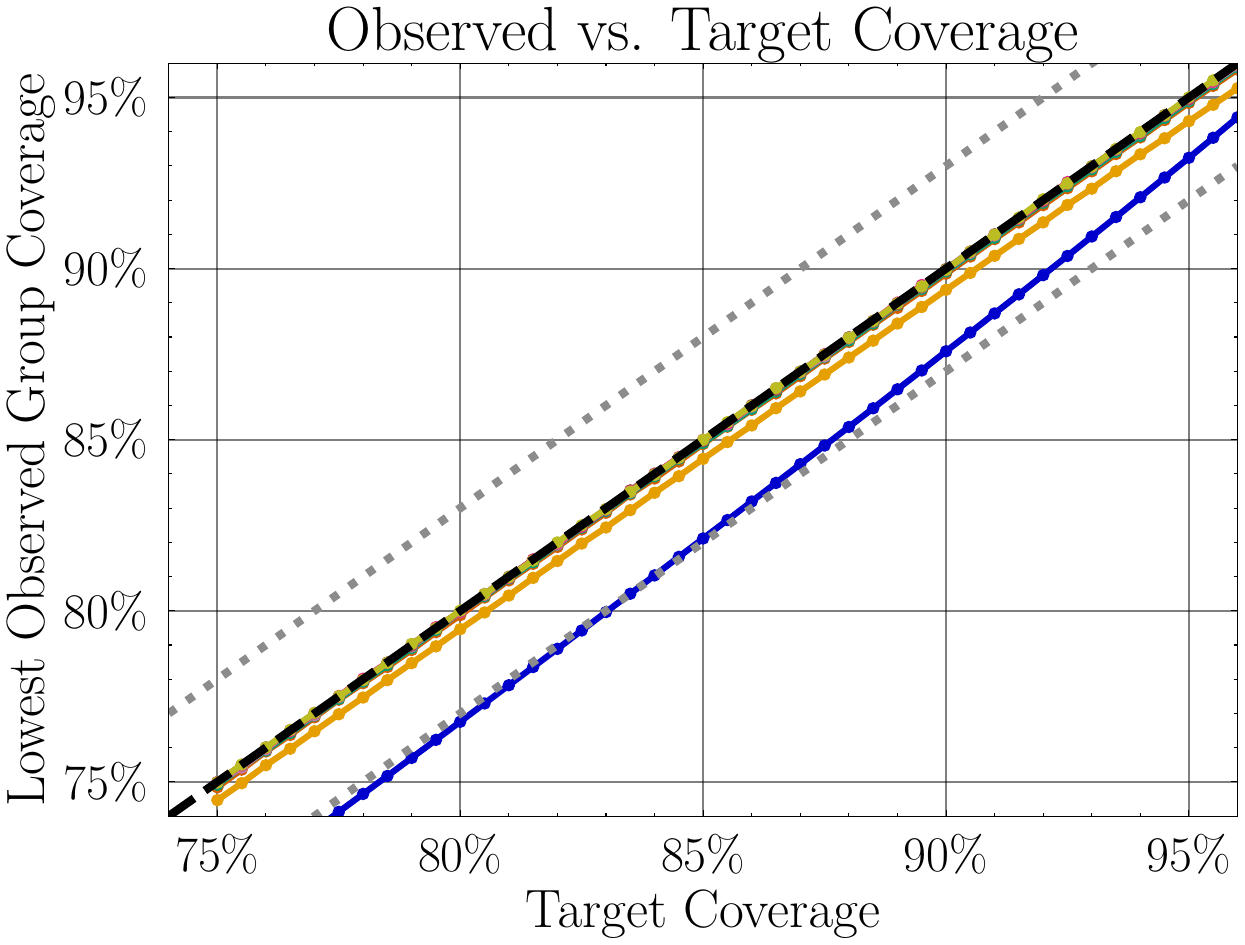}
  \end{subfigure}\hfill
  \begin{subfigure}[t]{0.30\textwidth}
    \centering
    \includegraphics[width=\linewidth]{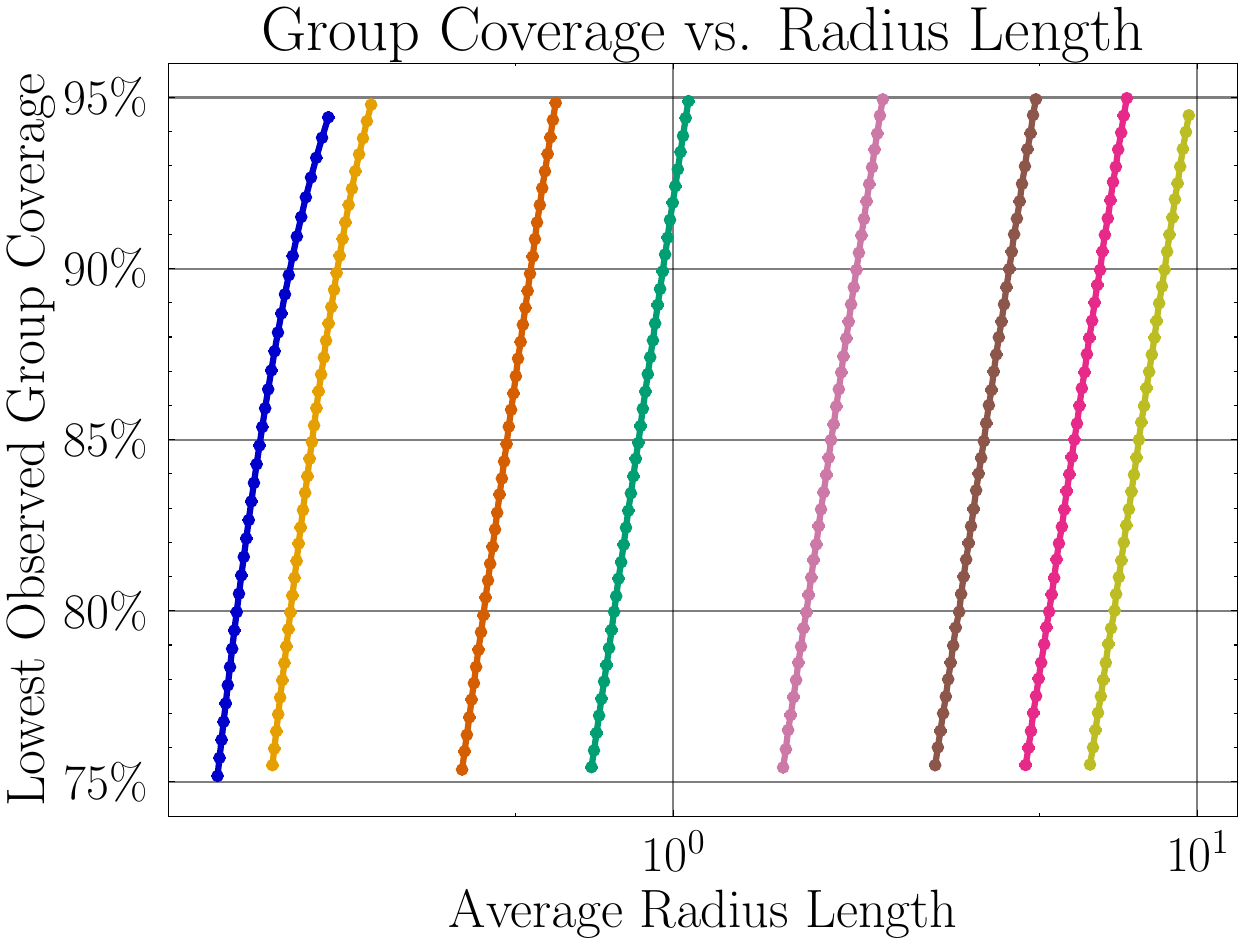}
  \end{subfigure}\hfill
  \begin{subfigure}[t]{0.30\textwidth}
    \centering
    \includegraphics[width=\linewidth]{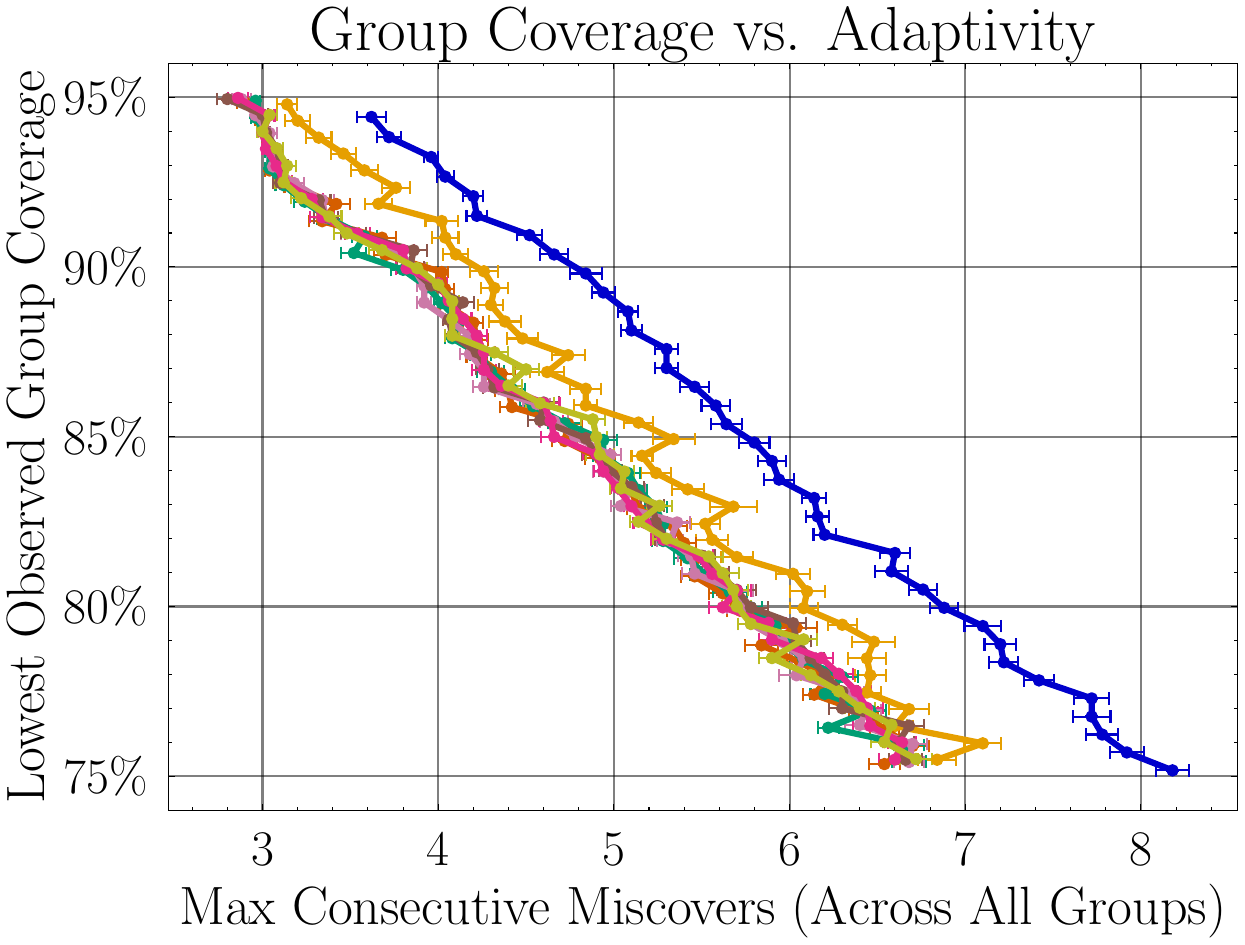}
  \end{subfigure}\hfill
  \caption{Results for synthetic setting with bounded, gradually varying scores.}
\end{subfigure}
\vspace{0.5em}

\begin{subfigure}[t]{\textwidth}
  \begin{subfigure}[t]{0.30\textwidth}
    \centering
    \includegraphics[width=\linewidth]{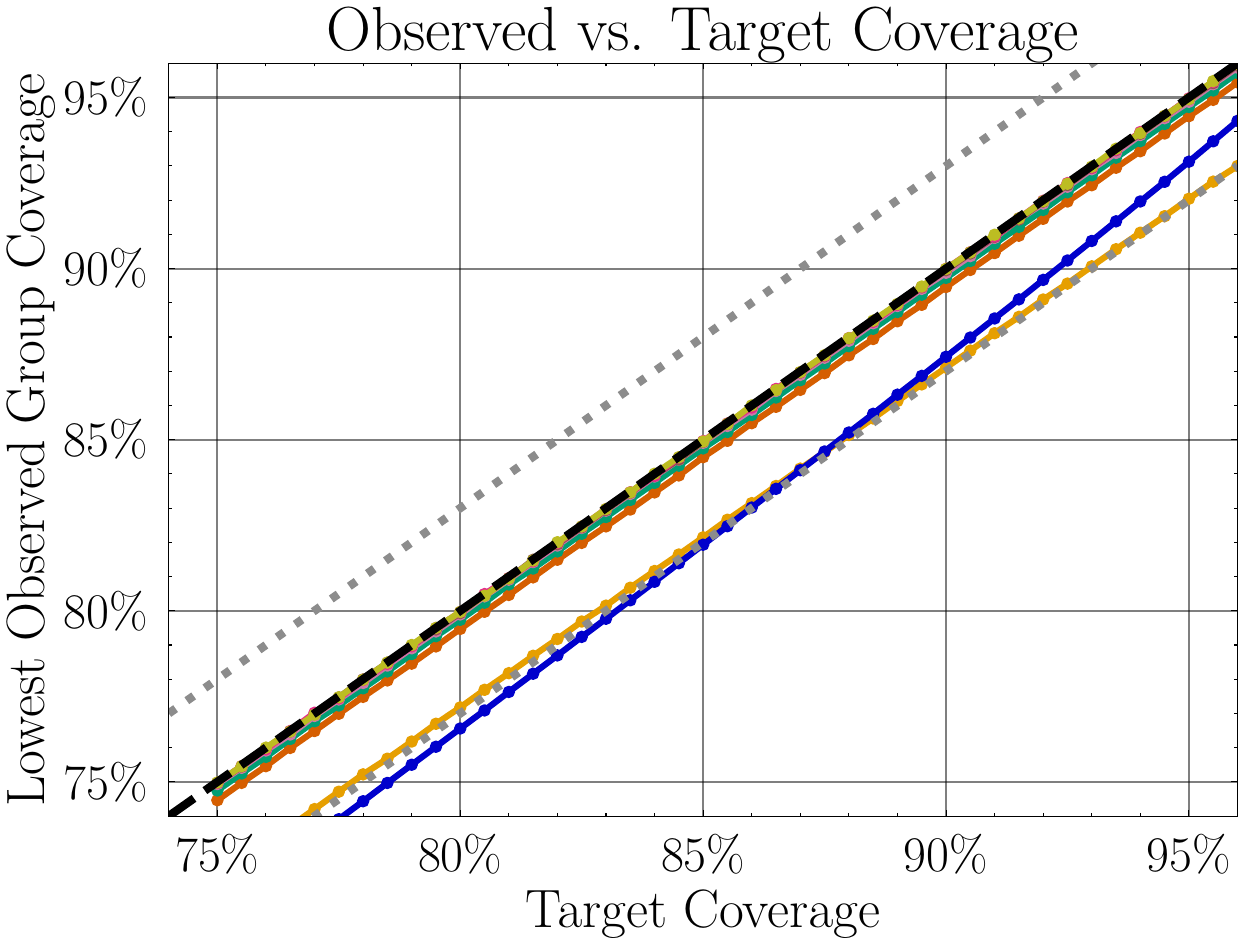}
  \end{subfigure}\hfill
  \begin{subfigure}[t]{0.30\textwidth}
    \centering
    \includegraphics[width=\linewidth]{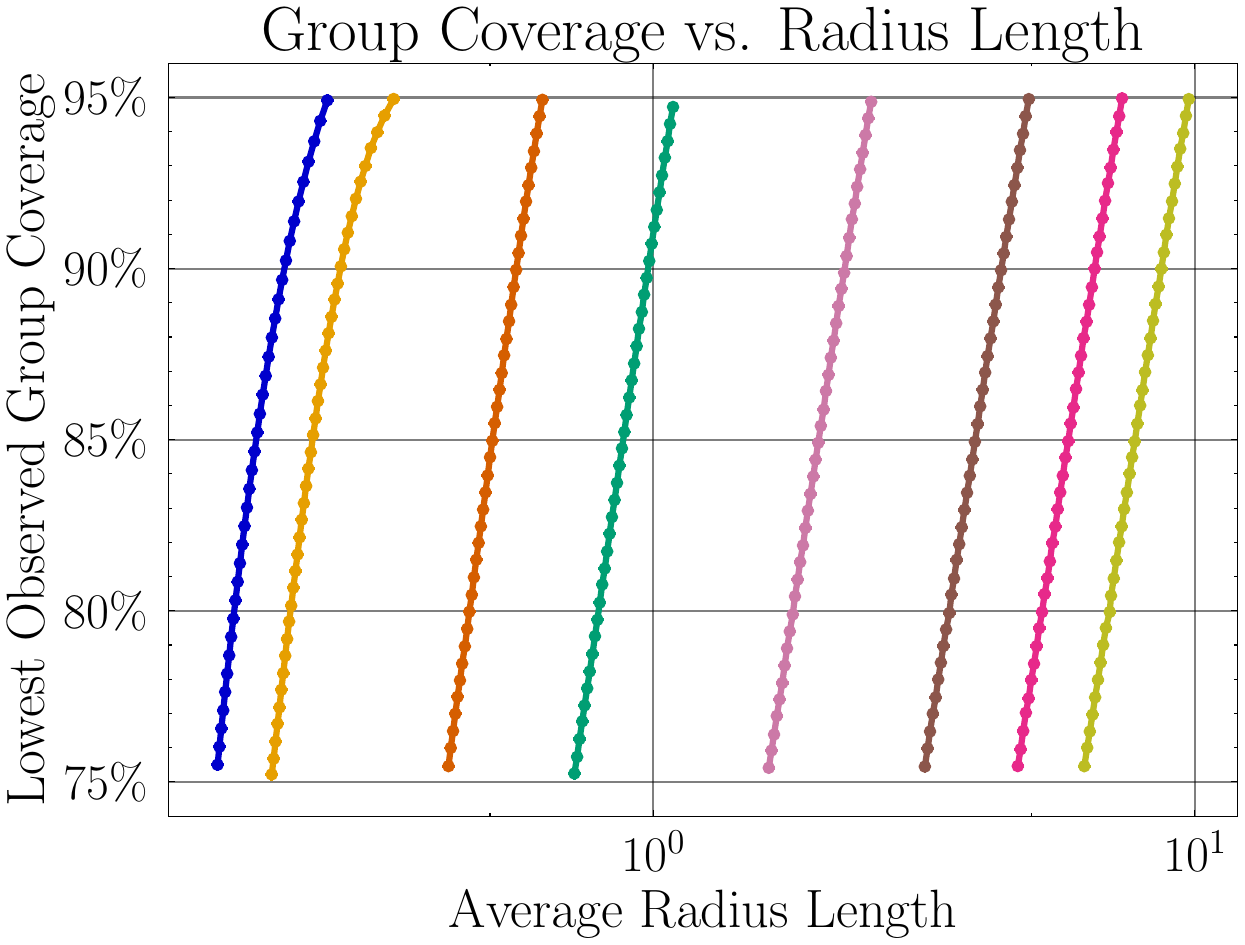}
  \end{subfigure}\hfill
  \begin{subfigure}[t]{0.30\textwidth}
    \centering
    \includegraphics[width=\linewidth]{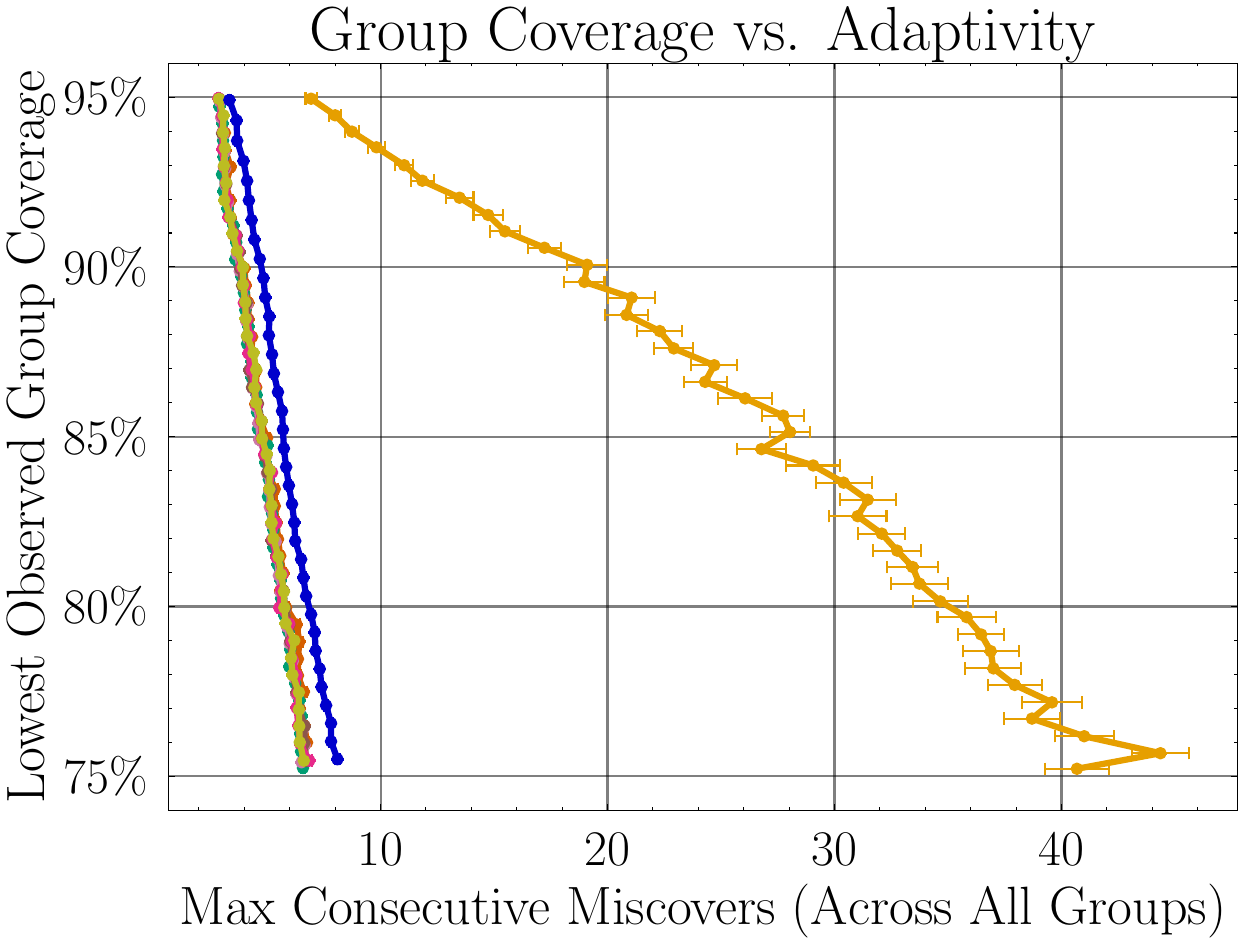}
  \end{subfigure}\hfill
  \caption{Results for synthetic setting with bounded scores with a changepoint. }
\end{subfigure}
\vspace{0.5em}

\centering
\includegraphics[width=1\textwidth]{figures/arxiv/synthetic/legend.pdf}
\vspace{-1.5em}

\caption{Algorithms across three performance criteria for target coverage levels $1-\alpha \in [0.75,0.99]$. {\bf Left:} lowest observed group coverage rate vs. target coverage rate. Dashed line denotes perfect performance, dotted lines show \(\pm 3\%\) tolerance band. {\bf Middle-Right}: Pareto frontiers (curves closer to the top-left indicate better performance; results for observed coverage rates in $[0.75, 0.95]$ are displayed). {\bf Middle:} lowest achieved group coverage rate vs. average radius length. {\bf Right:} lowest achieved group coverage rate vs. maximum length of consecutive miscovers (across groups). Error bars represent the standard error of the mean (SEM) of the variable along the \(x\)-axis.}
\end{figure}

\newpage

{\bf Results for $k=100$ number of groups (continued).}

\begin{figure}[h!]
\centering


\begin{subfigure}[t]{\textwidth}
  \begin{subfigure}[t]{0.30\textwidth}
    \centering
    \includegraphics[width=\linewidth]{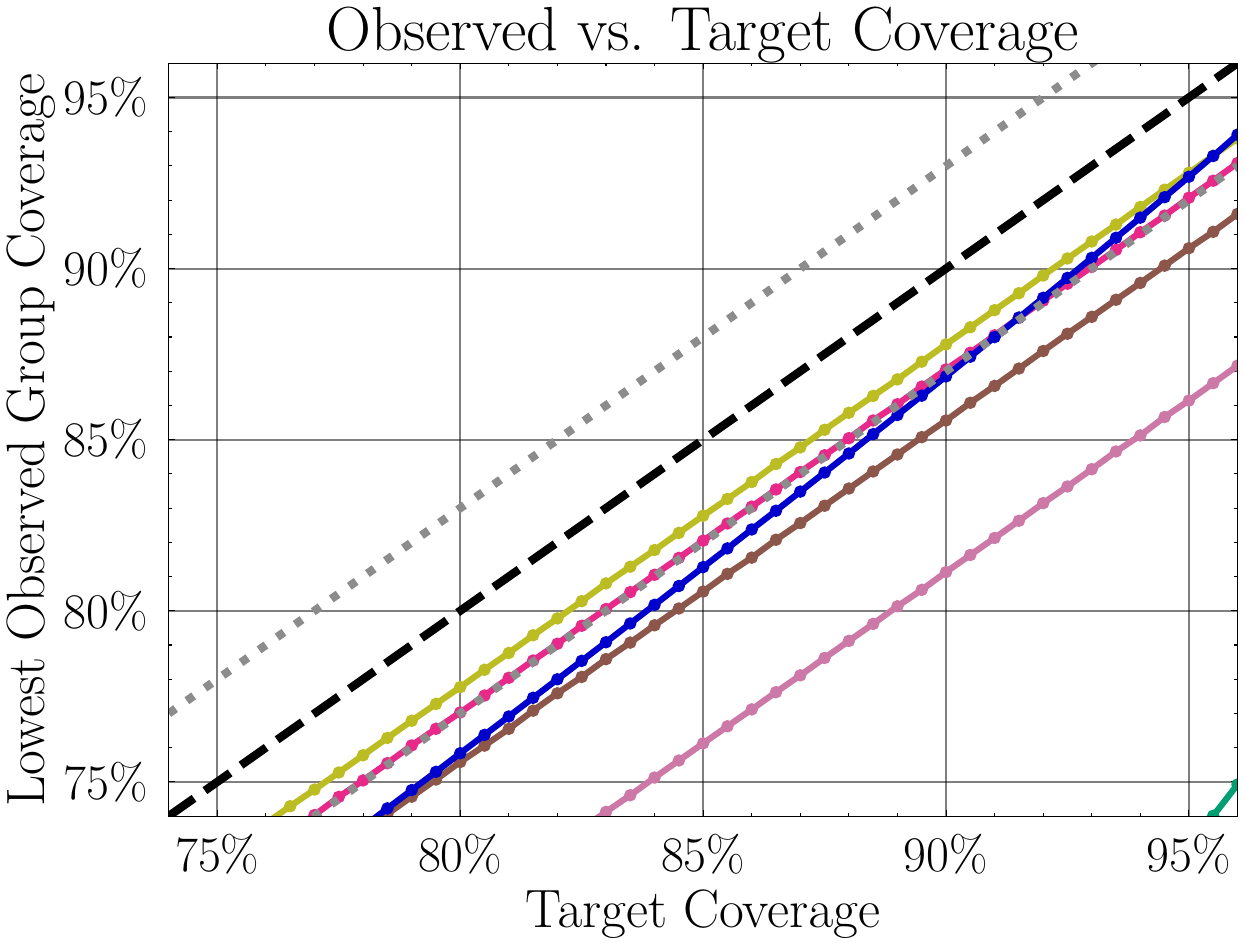}
  \end{subfigure}\hfill
  \begin{subfigure}[t]{0.30\textwidth}
    \centering
    \includegraphics[width=\linewidth]{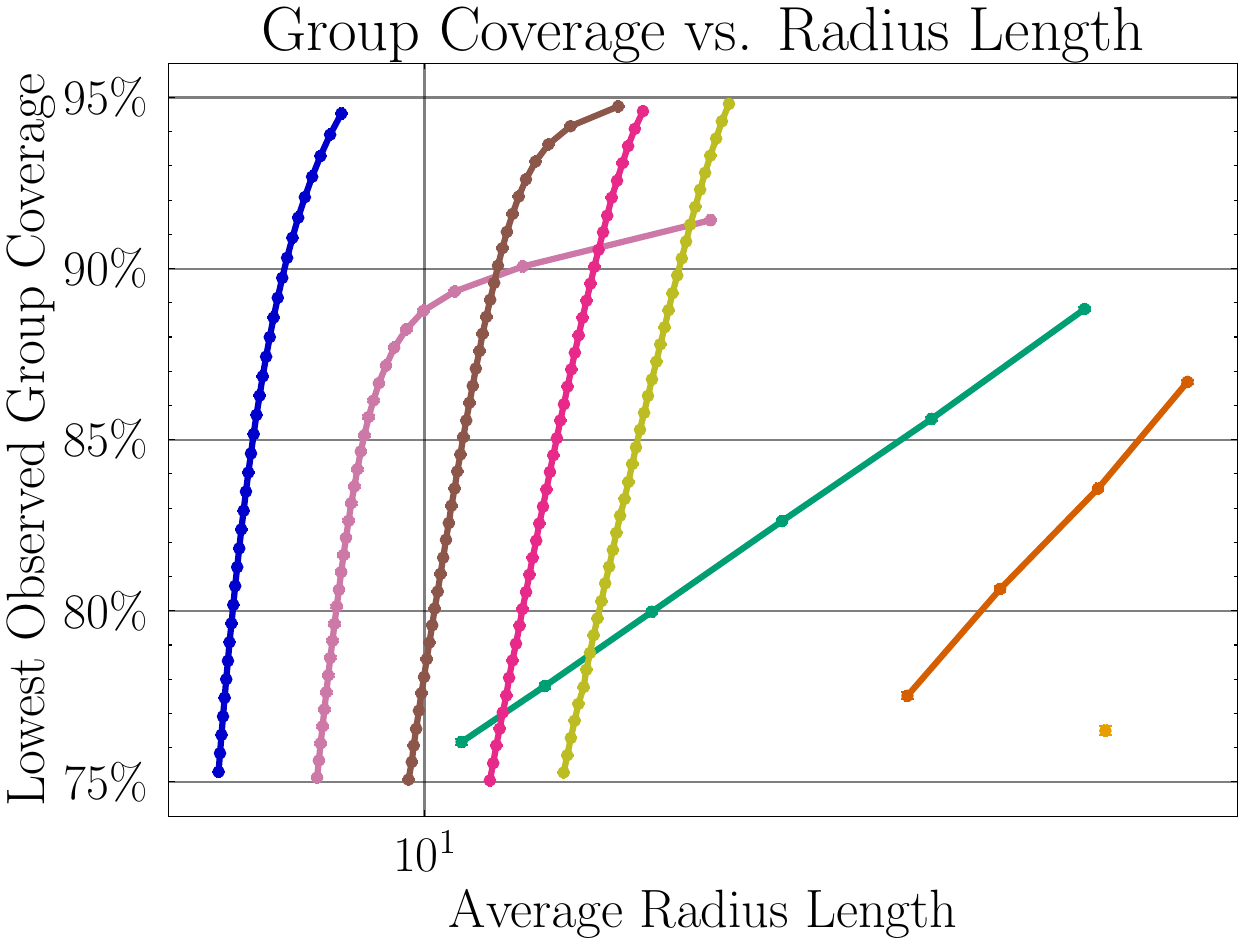}
  \end{subfigure}\hfill
  \begin{subfigure}[t]{0.30\textwidth}
    \centering
    \includegraphics[width=\linewidth]{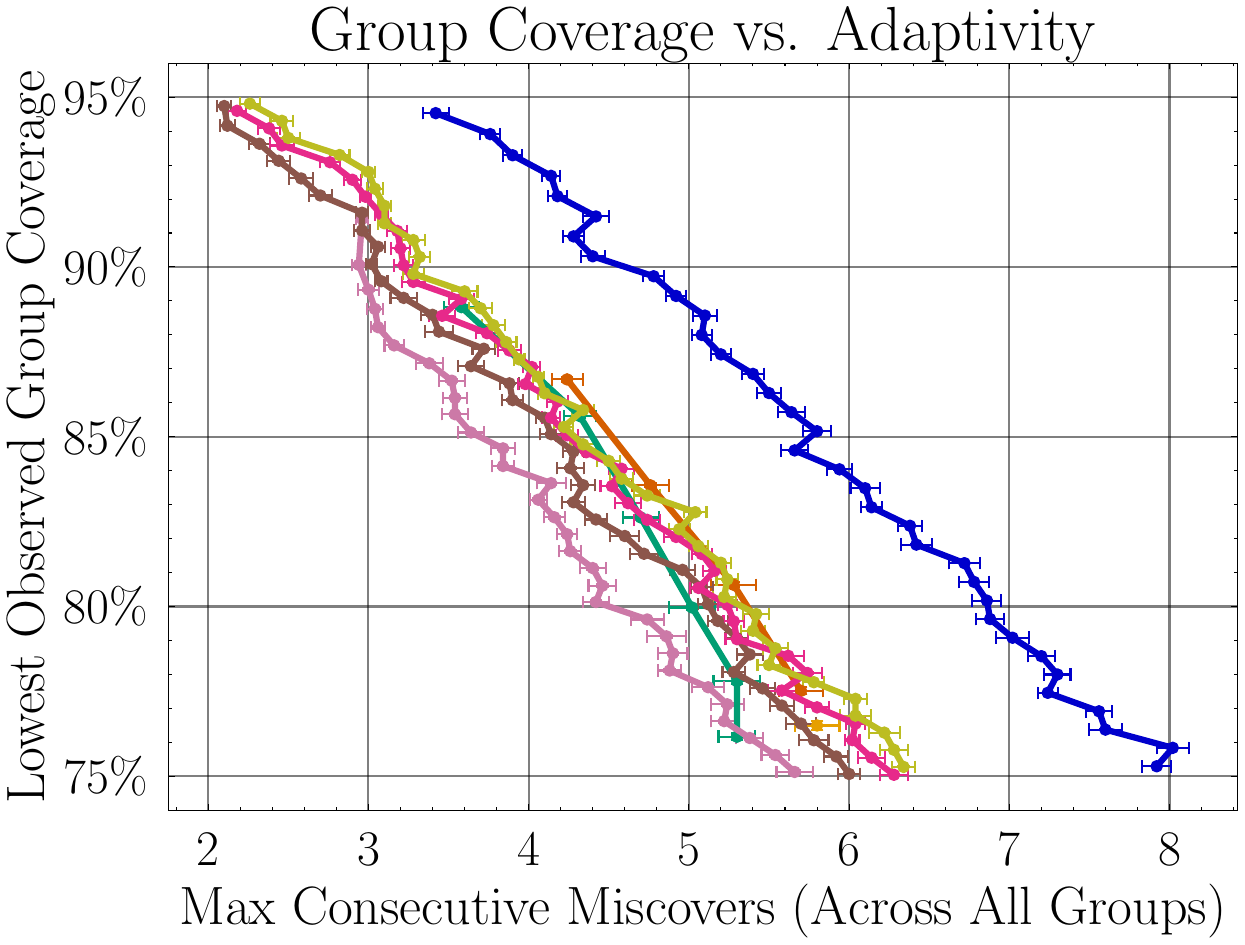}
  \end{subfigure}\hfill
  \caption{Results for synthetic setting with unbounded, growing scores ($A=5$ in \eqref{eq:quad_growth}).}
\end{subfigure}
\vspace{0.5em}

\begin{subfigure}[t]{\textwidth}
  \begin{subfigure}[t]{0.30\textwidth}
    \centering
    \includegraphics[width=\linewidth]{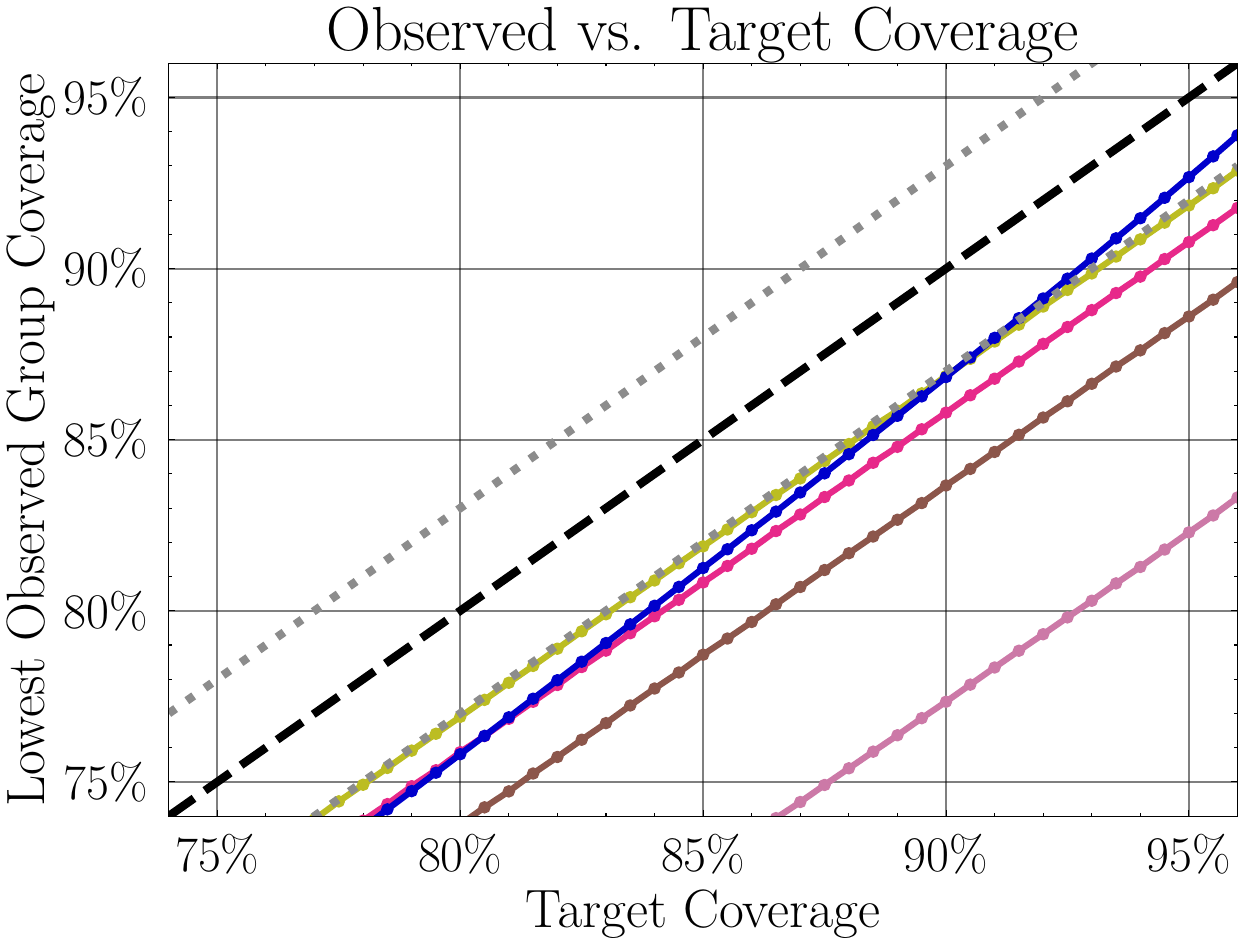}
  \end{subfigure}\hfill
  \begin{subfigure}[t]{0.30\textwidth}
    \centering
    \includegraphics[width=\linewidth]{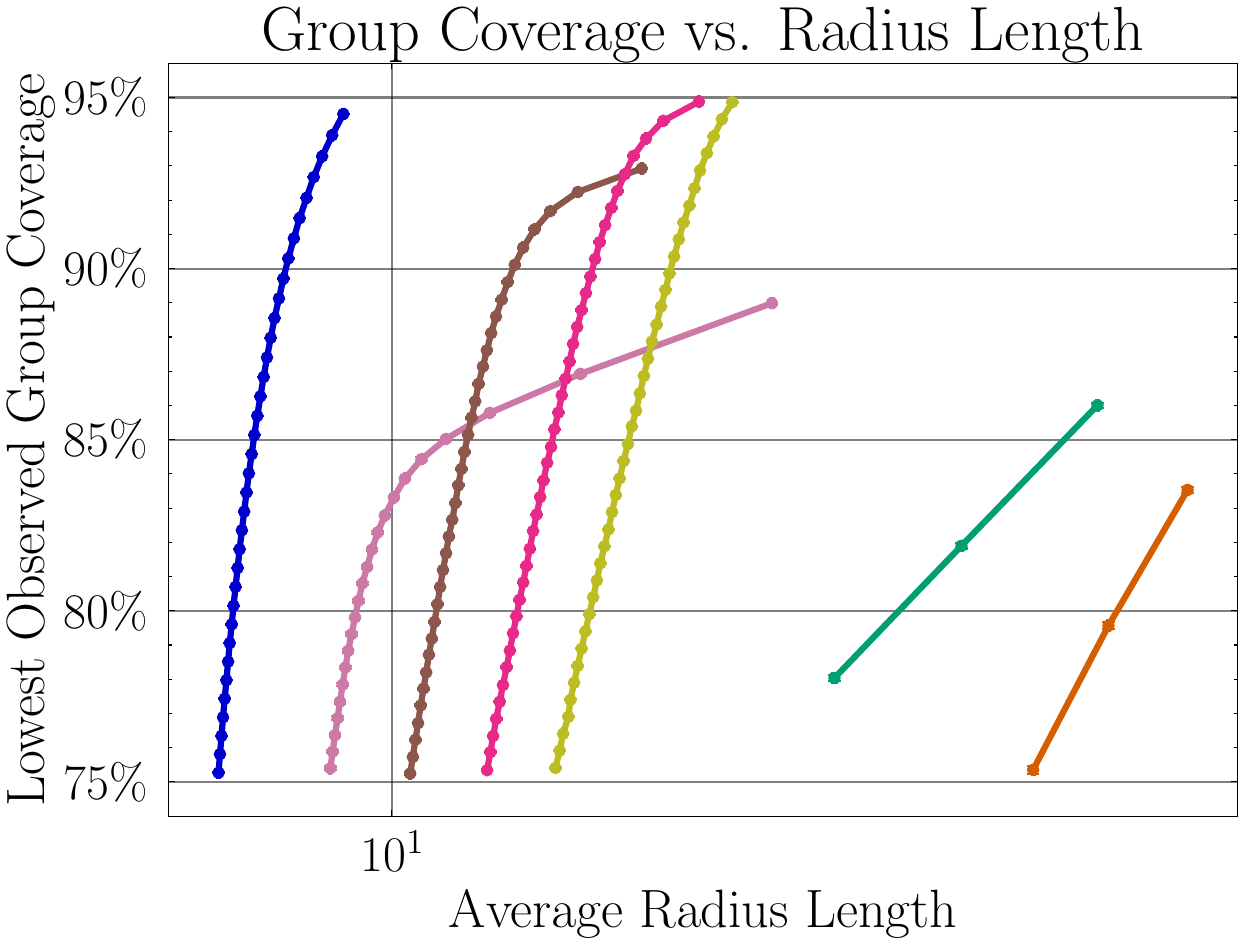}
  \end{subfigure}\hfill
  \begin{subfigure}[t]{0.30\textwidth}
    \centering
    \includegraphics[width=\linewidth]{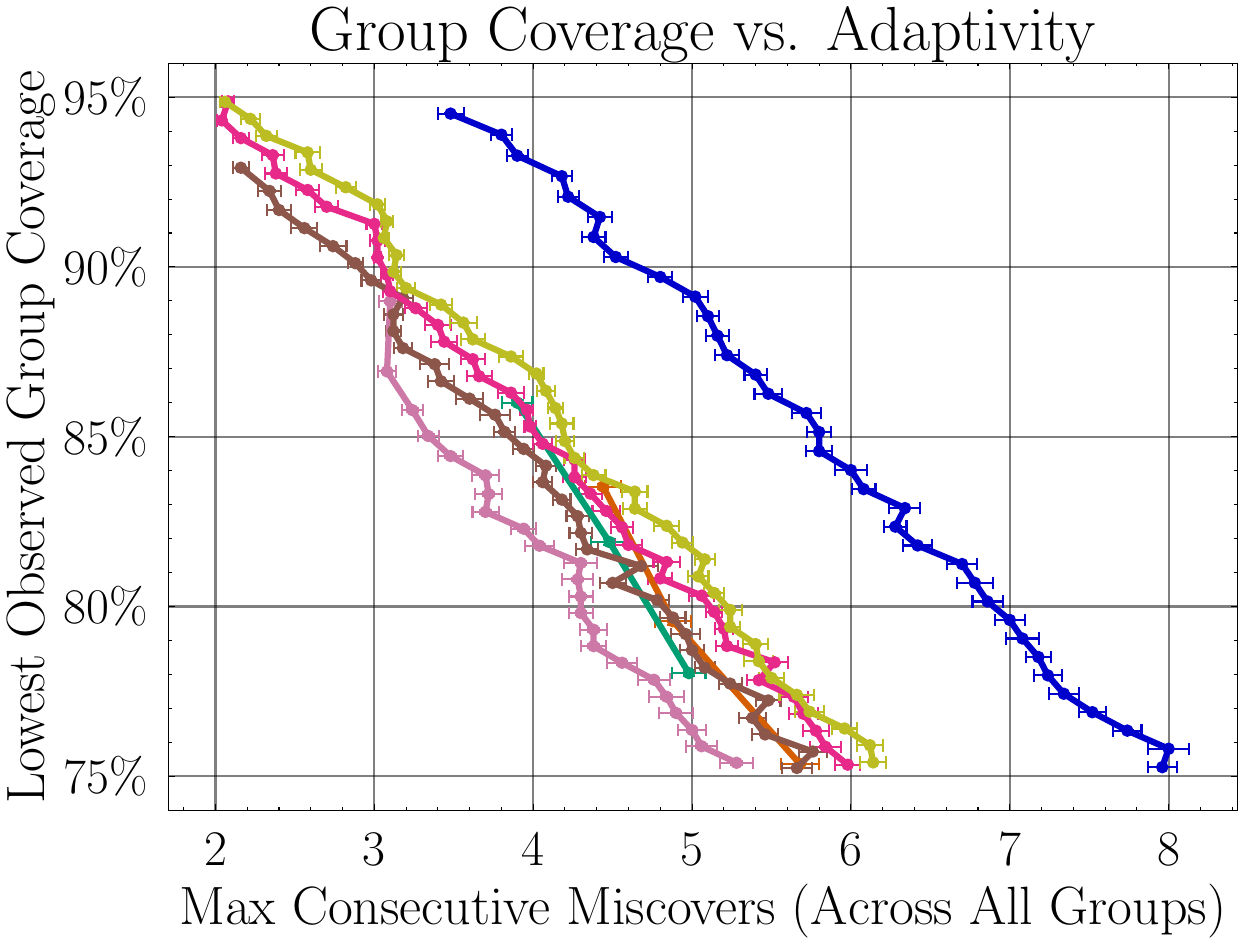}
  \end{subfigure}\hfill
  \caption{Results for synthetic setting with unbounded, growing scores ($A=25$ in \eqref{eq:quad_growth}).}
\end{subfigure}
\vspace{0.5em}

\begin{subfigure}[t]{\textwidth}
  \begin{subfigure}[t]{0.30\textwidth}
    \centering
    \includegraphics[width=\linewidth]{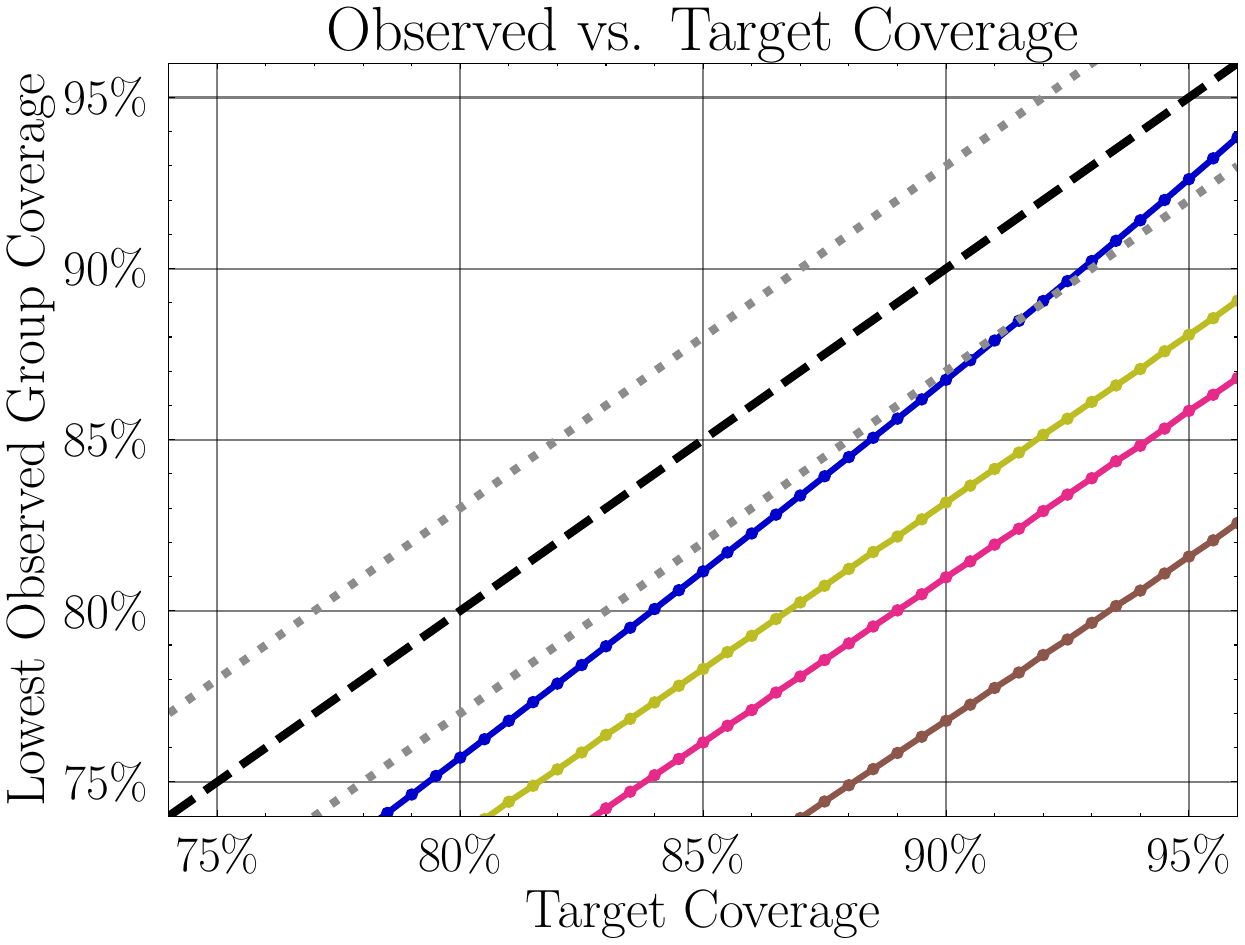}
  \end{subfigure}\hfill
  \begin{subfigure}[t]{0.30\textwidth}
    \centering
    \includegraphics[width=\linewidth]{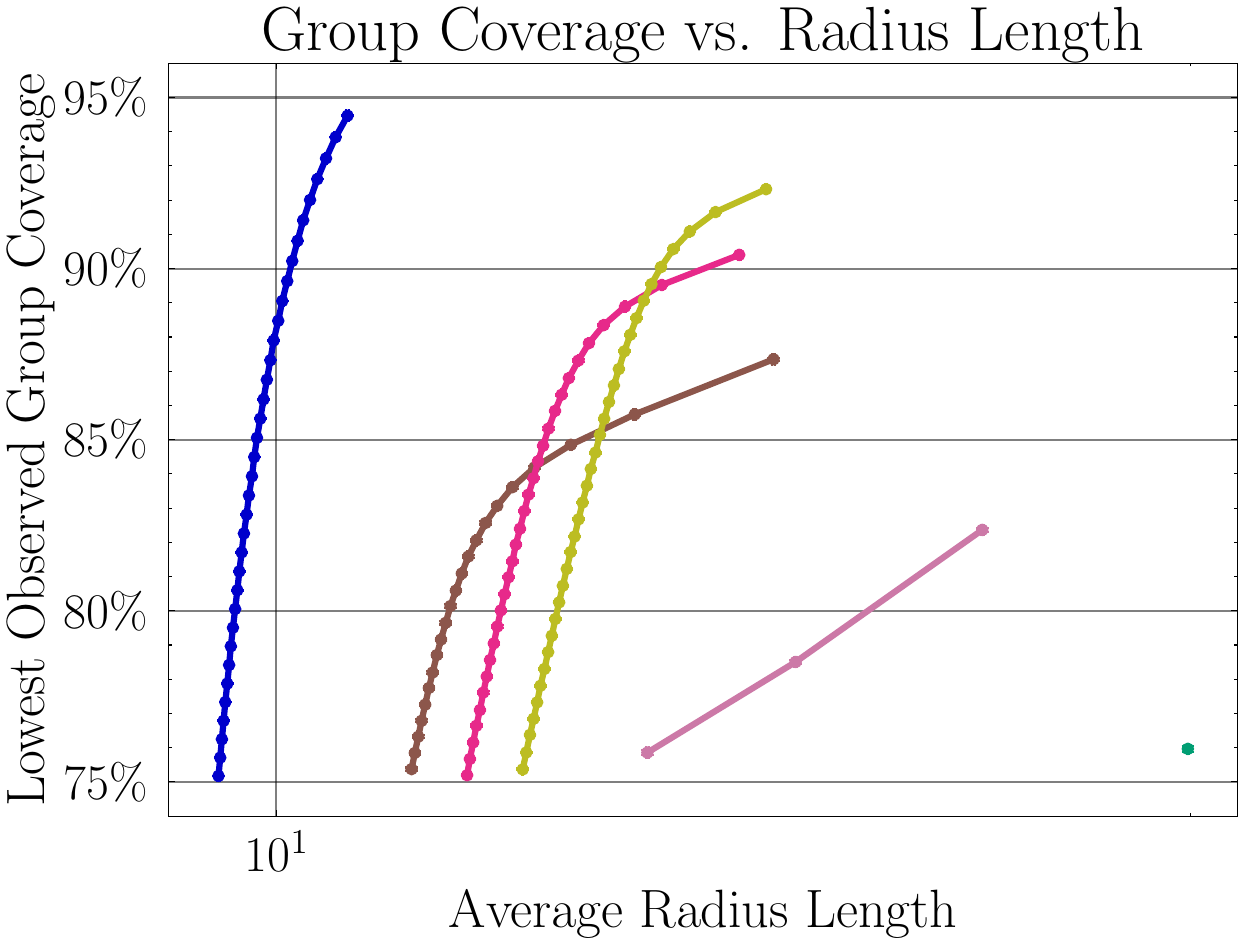}
  \end{subfigure}\hfill
  \begin{subfigure}[t]{0.30\textwidth}
    \centering
    \includegraphics[width=\linewidth]{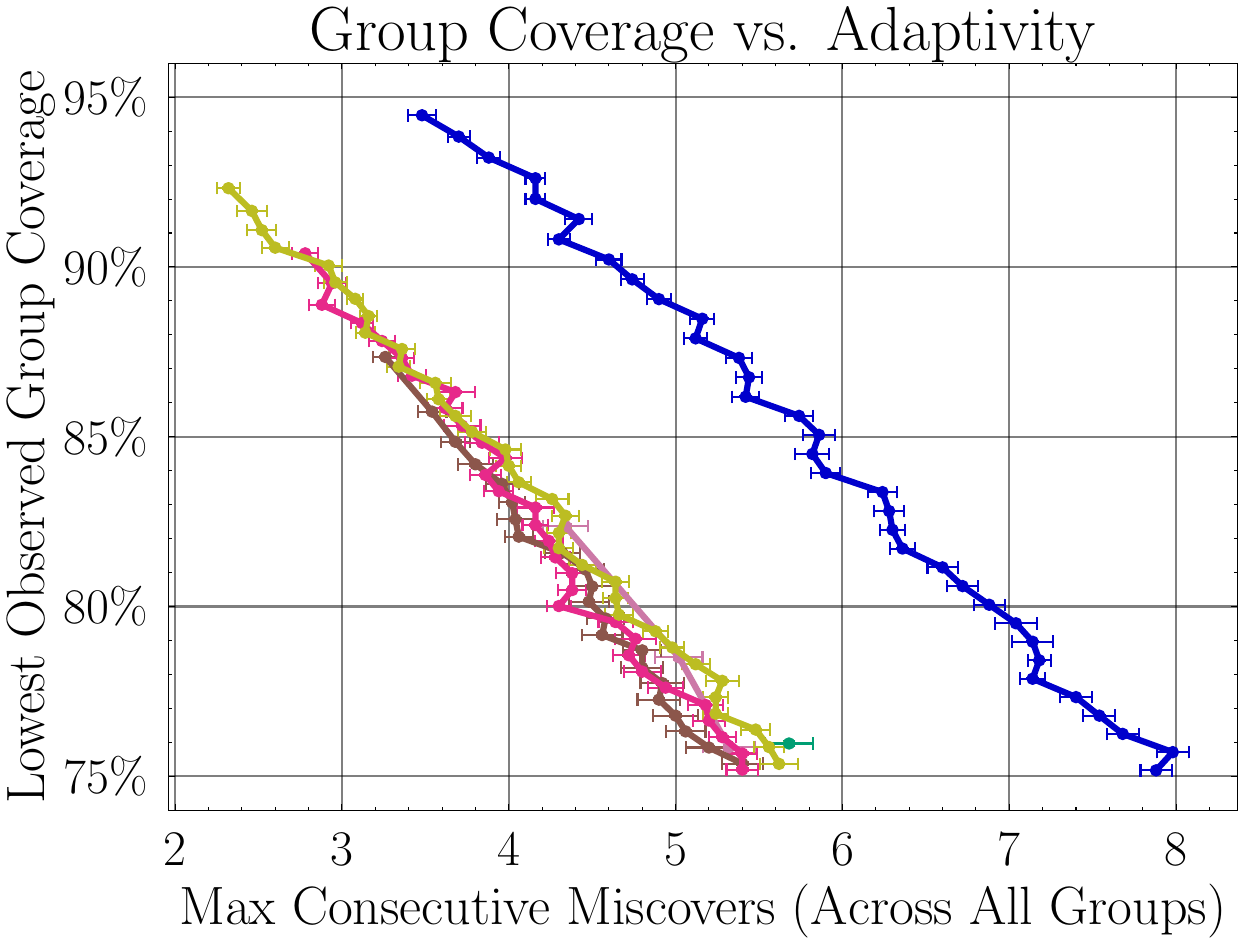}
  \end{subfigure}\hfill
  \caption{Results for synthetic setting with unbounded, growing scores ($A=100$ in \eqref{eq:quad_growth}).}
\end{subfigure}
\vspace{0.5em}

\centering
\includegraphics[width=1\textwidth]{figures/arxiv/synthetic/legend.pdf}
\vspace{-1.5em}

\caption{Algorithms across three performance criteria for target coverage levels $1-\alpha \in [0.75,0.99]$. {\bf Left:} lowest observed group coverage rate vs. target coverage rate. Dashed line denotes perfect performance, dotted lines show \(\pm 3\%\) tolerance band. {\bf Middle-Right}: Pareto frontiers (curves closer to the top-left indicate better performance; results for observed coverage rates in $[0.75, 0.95]$ are displayed). {\bf Middle:} lowest achieved group coverage rate vs. average radius length. {\bf Right:} lowest achieved group coverage rate vs. maximum length of consecutive miscovers (across groups). Error bars represent the standard error of the mean (SEM) of the variable along the \(x\)-axis.}
\end{figure}

\newpage
\subsection{\label{supp:experiments_stock_results} Additional results: Stock Market \& Length-of-Stay in the ICU.}

\begin{figure}[h!]
\centering
\begin{subfigure}[t]{\textwidth}
  \begin{subfigure}[t]{0.30\textwidth}
    \centering
    \includegraphics[width=\linewidth]{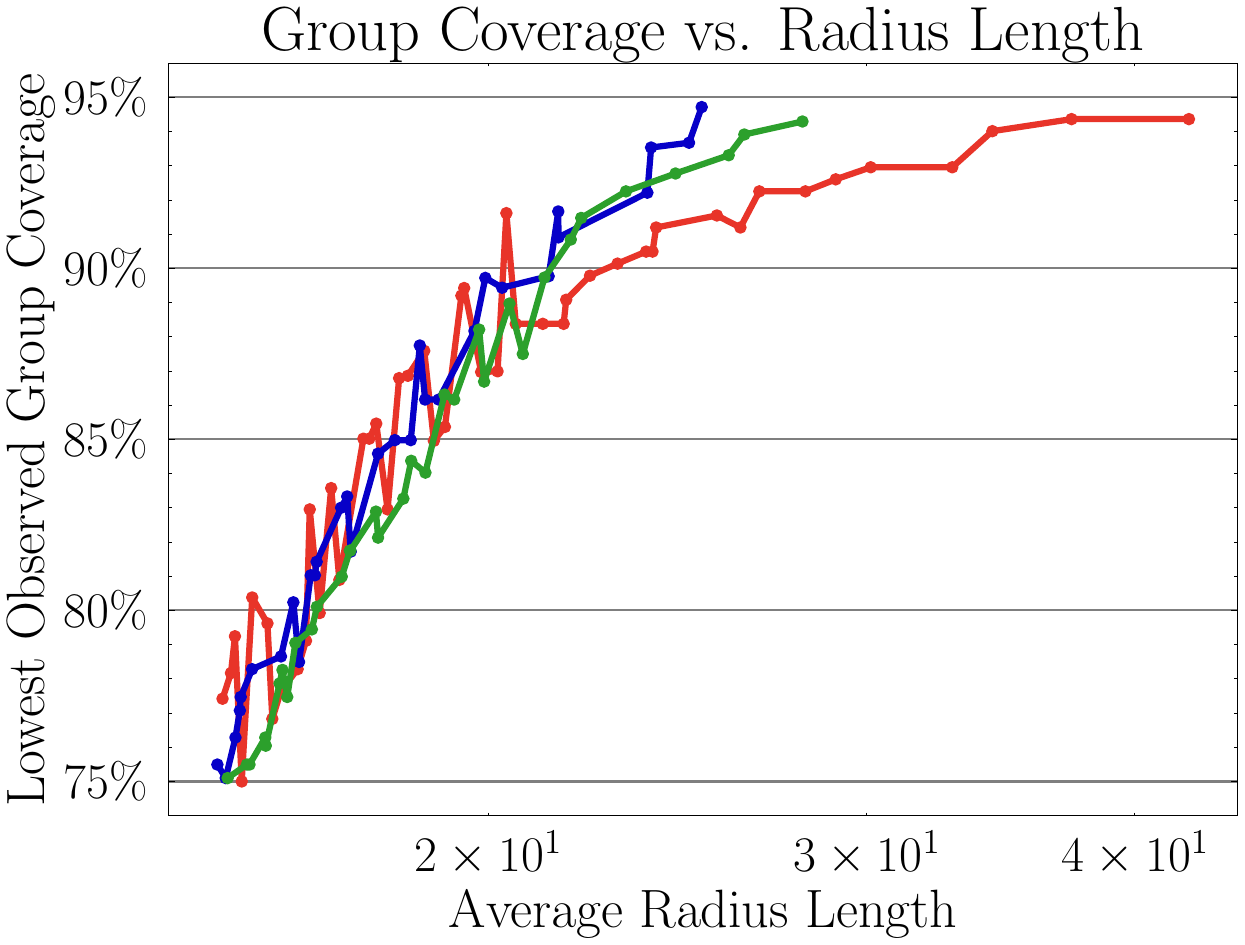}
  \end{subfigure}\hfill
  \begin{subfigure}[t]{0.30\textwidth}
    \centering
    \includegraphics[width=\linewidth]{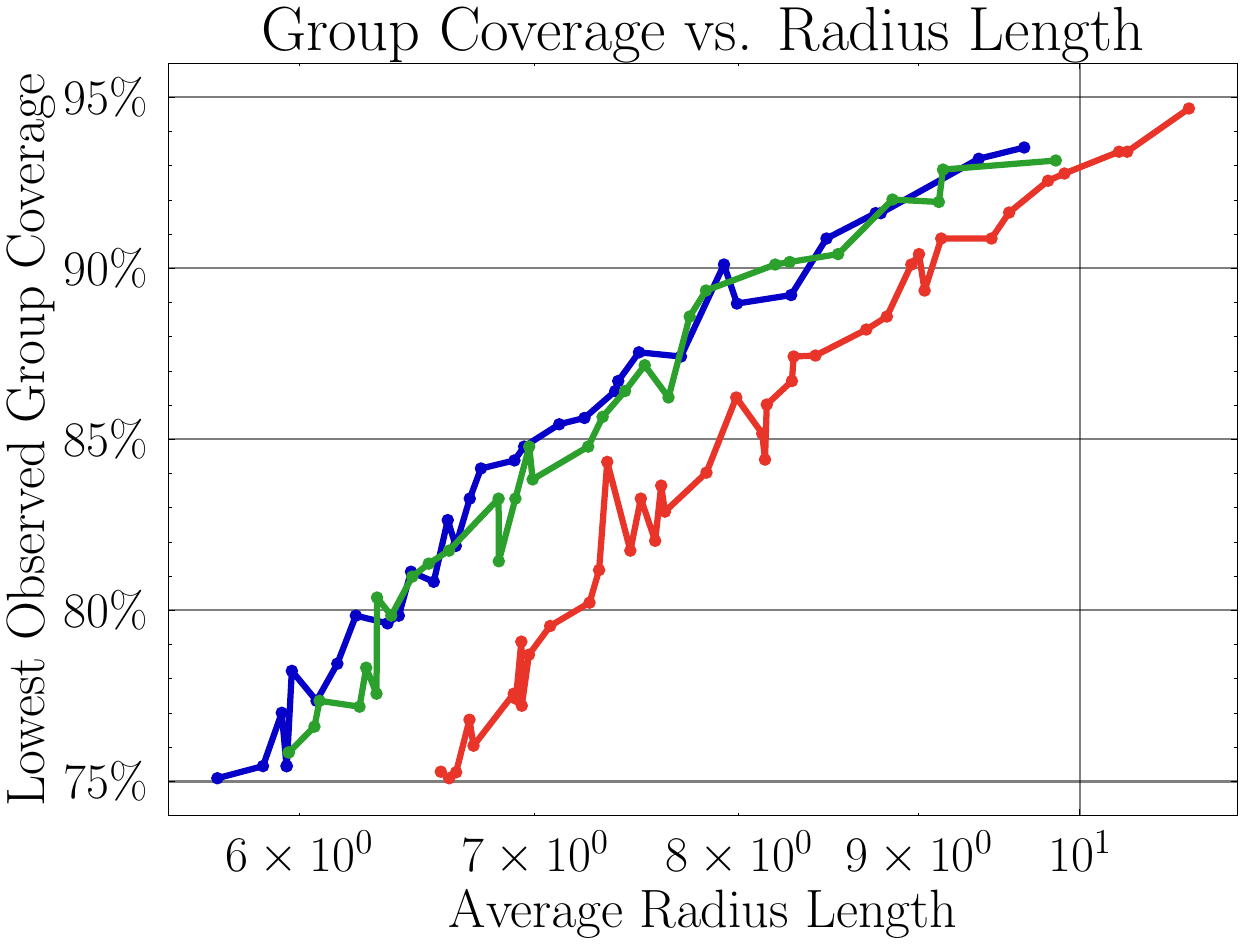}
  \end{subfigure}\hfill
  \begin{subfigure}[t]{0.30\textwidth}
    \centering
    \includegraphics[width=\linewidth]{figures/arxiv/stocks/DAL/cov_vs_rad.pdf}
  \end{subfigure}\hfill
\end{subfigure}
\begin{subfigure}[t]{0.75\textwidth}
  \centering
  \includegraphics[width=\linewidth]{figures/arxiv/stocks/legend.pdf}
\end{subfigure}
\vspace{-1em}
  \caption{Lowest observed group coverage rate vs. average radius length for stock opening price data. {\bf Left:} Boeing (\texttt{BA}). {\bf Middle:} Apple (\texttt{AAPL}). {\bf Right:} Delta (\texttt{DAL}).}
\end{figure}

\subsection{\label{supp:experiments_mimic}Additional experimental details: Length-of-Stay in the ICU.}

\paragraph{Data pre-processing.} We selected ICU stays that were at least one day long, and whose patient's date of death was unknown or after discharge ($\approx 66 \times 10^3$ total stays). As features, we used the precomputed MIMIC-IV first day concepts including labs, vital signs, the SOFA score \cite{vincent1996sofa}, and the GCS score \cite{teasdale1974assessment} (169 features total). We imputed missing values with the mean and standardized each feature.

\paragraph{Training details.} We trained a 4-layer MLP with hidden size of 128 and dropout rate of 0.3 on 50\% of the data for 500 epochs. We used the Huber loss \cite{huber1992robust} on the log-transformed ICU LOS (in days) and Adam optimizer \cite{kingma2014adam} (learning rate of $10^{-4}$, weight decay of $10^{-3}$). We used 10\% of the data for model selection, and left the remaining 40\% ($\approx 26 \times 10^{3}$ stays) for online calibration. The trained network achieves an MAE (in days) of 2.23 on the test set, which is compatible with the difficulty of the task using information collected during the first day only. 

\paragraph{Group definitions.} We define groups in the following manner:
\begin{itemize}
    \item {\bf Race (8 groups).} Given the heterogeneity of self-reported race data in MIMIC-IV, we grouped entries as described in \cref{table:race_groups}.
    \item {\bf Insurance (5 groups).} We kept 4 original MIMIC-IV entries as groups: {\tt Medicare, Medicaid, Private, No charge}, and mapped {\tt Other} to an ``unknown'' group.
    \item {\bf Biological sex (2 groups).} We kept the original binary {\tt F, M} entries in MIMIC-IV.
\end{itemize}

\begin{table}[h]
\centering
\small
\caption{\label{table:race_groups}Mapping between race groups and raw self-reported race in the MIMIC-IV dataset.}
\vspace{1em}
\resizebox{\linewidth}{!}{%
\begin{tabular}{ll}
\toprule
Race group              & MIMIC-IV Entries\\
\midrule
Unknown                 & {\tt UNKOWN, OTHER, UNABLE TO OBTAIN, PATIENT DECLINED TO ANSWER}\\
Hispanic                & {\tt HISPANIC OR LATINO, PUERTO RICAN, DOMINICAN, GUATEMALAN, SALVADORAN,}\\
                        & {\tt COLUMBIAN, CUBAN, MEXICAN, HONDURAN, CENTRAL AMERICAN, SOUTH AMERICAN}\\
White                   & {\tt WHITE, OTHER EUROPEAN, RUSSIAN, EASTERN EUROPEAN, BRAZILIAN, PORTUGUESE}\\
Asian                   & {\tt ASIAN, CHINESE, SOUTH EAST ASIAN, ASIAN INDIAN, KOREAN}\\
Black                   & {\tt AFRICAN AMERICAN, CAPE VERDEAN, CARIBBEAN ISLAND, AFRICAN}\\
Mult                    & {\tt MULTIPLE RACE/ETHNICITY}\\
AIAN                    & {\tt AMERICAN INDIAN/ALASKA NATIVE}\\
NHPI                    & {\tt NATIVE HAWAIIAN OR OTHER PACIFIC ISLANDER}\\
\bottomrule
\end{tabular}}
\end{table}

\begin{table}[h]
\centering
\small
\caption{\label{table:group_prevalence} Group proportions in the train ($\approx 39,000$ stays) and test ($\approx 26,000$ stays) splits of MIMIC-IV.}
\begin{minipage}[t]{0.3\linewidth}
\vspace{0pt}
\resizebox{\linewidth}{!}{%
\begin{tabular}{lcc}
\toprule
            & \multicolumn{2}{c}{Proportion}\\
            \cmidrule{2-3}
Race        & Train     & Test\\
\midrule
White       & 67.55\%   & 67.50\%   \\
Unknown     & 14.34\%   & 14.46\%\\
Black       & 10.73\%   & 10.68\%\\
Hispanic    & 4.01\%    & 3.87\%\\
Asian       & 2.94\%    & 3.07\%\\
AIAN        & 0.21\%    & 0.22\%\\
NHPI        & 0.13\%    & 0.11\%\\
Mult        & 0.09\%    & 0.09\%\\
\bottomrule
\end{tabular}}
\end{minipage}
\hfill
\begin{minipage}[t]{0.3\linewidth}
\vspace{0pt}
\resizebox{\linewidth}{!}{%
\begin{tabular}{lcc}
\toprule
            & \multicolumn{2}{c}{Proportion}\\
            \cmidrule{2-3}
Insurance   & Train     & Test\\
\midrule
Medicare    & 54.30\%   & 54.26\%\\
Private     & 27.08\%   & 26.79\%\\
Medicaid    & 14.95\%   & 15.09\%\\
Unknown     & 3.66\%    & 3.85\%\\
No charge   & 0.01\%    & 0.02\%\\
\bottomrule
\end{tabular}}
\end{minipage}
\hfill
\begin{minipage}[t]{0.33\linewidth}
\vspace{0pt}
\resizebox{\linewidth}{!}{%
\begin{tabular}{lcc}
\toprule
                & \multicolumn{2}{c}{Proportion}\\
                \cmidrule{2-3}
Biological sex  & Train     & Test\\
\midrule
Male            & 56.79\%   & 57.20\%\\
Female          & 43.21\%   & 42.80\%\\
\bottomrule
\end{tabular}}
\end{minipage}
\end{table}

\subsection{\label{supp:experiments_stock}Additional experimental details: Stock Market}
\paragraph{Group definitions.} We define groups in the following manner:

\begin{itemize}

\item {\bf Market-regime groups (computed from past returns; no leakage)}

First, we compute the following quantities.
\begin{itemize}
    \item Daily returns $r_t = \frac{Y_t - Y_{t-1}}{Y_{t-1}}$ (we use lagged returns $Y_{t-1}$ to not use data from time $t$)
    \item Over a rolling window of length $W$ (with full-window requirement), we compute
    \begin{itemize}
        \item Volatility: $\sigma_t = \mathrm{Std}\big(r_{t-1}, r_{t-2}, \ldots, r_{t-W}\big)$
        \item Trend: $\mu_t = \mathrm{Mean}\big(r_{t-1}, r_{t-2}, \ldots, r_{t-W}\big)$
    \end{itemize}
    \item The rolling median volatility over a window of length $W_m$: $
        \tilde{\sigma}_t = \mathrm{Median}\big(\sigma_{t-1}, \ldots, \sigma_{t-W_m}\big).
        $
\end{itemize}
Then, we define binary groups for each time $t$
    \begin{itemize}
        \item $\texttt{is\_high\_vol}_t = \mathbb{I}\{\sigma_t > \tilde{\sigma}_t\}$, \quad
              $\texttt{is\_low\_vol}_t = \mathbb{I}\{\sigma_t \le \tilde{\sigma}_t\}$
        \item $\texttt{is\_uptrend}_t = \mathbb{I}\{\mu_t > 0\}$, \quad
              $\texttt{is\_downtrend}_t = \mathbb{I}\{\mu_t \le 0\}$
    \end{itemize}

\item {\bf Calendar groups (derived from the forecast date).}
    \begin{itemize}
        \item Day-of-week one-hot indicators: \texttt{is\_monday}, \ldots, \texttt{is\_friday}.
        \item Quarter one-hot indicators: \texttt{is\_q1}, \ldots, \texttt{is\_q4}.
        \item Month one-hot indicators: \texttt{is\_january}, \ldots, \texttt{is\_december}.
    \end{itemize}
\end{itemize}

\section{Algorithms \label{app:algorithms}}
\setcounter{algorithm}{0}

\begin{algorithm}[H]
    \caption{\label{algo:up_ocp} UP-OCP}
    \raggedright\textbf{Input:} Target miscoverage level $\alpha$.\\
    \raggedright\textbf{Initialize:} Wealth $W_{0} \gets 1$
    \begin{algorithmic}[1]
        \FOR{$t = 1, \dots$}
        \STATE Define $w \colon \mathbb{R} \to \mathbb{R}$ as $w(\bar \lambda) = \prod^{t-1}_{i=1} \bar \lambda \left(1 - \frac{g_{i,j}}{\alpha}\right) + (1-\bar \lambda)\left(1 + \frac{g_{i,j}}{1-\alpha}\right)$
        \STATE $\lambda_{t} \gets \frac{\int^1_{0}\bar \lambda w(\bar \lambda)d\mu(\bar \lambda)}{\int^1_{0}w(\bar \lambda)d\mu(\bar \lambda)}$
        \STATE $\tau_{t} \gets W_{t}\left(\frac{\lambda_t - \alpha}{\alpha(1-\alpha)}\right)$
        \STATE Observe $Y_t$
        \STATE $S_t \gets |Y_t - \hat{Y}_t|$
        \STATE $g_t \gets [\mathbf{1}\{S_t \leq \tau_t\} - (1-\alpha)]$
        \STATE $W_{t} \gets W_{t-1}(1 - \frac{\lambda_t - \alpha}{\alpha(1-\alpha)} g_{t})$
        \ENDFOR
    \end{algorithmic}
\end{algorithm}


\end{document}